\newtheorem{theorem}{Theorem}
\newtheorem{lemma}{Lemma}
\begin{document}
\title{\huge {pFedWN: A Personalized Federated Learning Framework for D2D Wireless Networks with Heterogeneous Data}}

 \author{
     \IEEEauthorblockN{Zhou Ni, Masoud Ghazikor, Morteza Hashemi} \\
 \IEEEauthorblockA{Department of Electrical Engineering and Computer Science, University of Kansas, Lawrence, USA}

 \IEEEauthorblockA{Email: \{zhou.ni, masoudghazikor, mhashemi\}@ku.edu}
 }

\maketitle
\setlength{\parskip}{0pt}
\thispagestyle{plain}
\pagestyle{plain}

\begin{abstract}
Traditional Federated Learning (FL) approaches often struggle with data heterogeneity across clients, leading to suboptimal model performance for individual clients. To address this issue, Personalized Federated Learning (PFL) emerges as a solution to the challenges posed by non-independent and identically distributed \emph{(non-IID)} and \emph{unbalanced data} across clients. Furthermore, in most existing decentralized machine learning works, a perfect communication channel is considered for model parameter transmission between clients and servers. However,  decentralized PFL over wireless links introduces new challenges, such as resource allocation and interference management.  To overcome these challenges, we
formulate a joint optimization problem that incorporates the underlying device-to-device (D2D) wireless channel conditions into a server-free PFL approach. The proposed method, dubbed pFedWN, optimizes the learning performance for each client while accounting for the variability in D2D wireless channels. To tackle the formulated problem, we divide it into two sub-problems: PFL neighbor selection and PFL weight assignment. The PFL neighbor selection is addressed through channel-aware neighbor selection within unlicensed spectrum bands such as ISM bands. Next, to assign PFL weights, we utilize the Expectation-Maximization (EM) method to evaluate the similarity between clients' data and obtain optimal weight distribution among the chosen PFL neighbors. Empirical results show that pFedWN provides efficient and personalized learning performance with non-IID and unbalanced datasets. Furthermore, it outperforms the existing FL and PFL methods in terms of learning efficacy and robustness, particularly under dynamic and unpredictable wireless channel conditions.

\end{abstract}

\begin{IEEEkeywords}
Personalized Federated Learning, Wireless Communication, Neighbor Selection, Communication Efficiency
\end{IEEEkeywords}

\section{Introduction}
As machine learning technologies continue to advance, distributed machine learning has emerged as a popular and practical approach to improve both the learning efficiency and accuracy of model training. FL \cite{mcmahan2017communication}, a key form of distributed learning, allows users to collaborate on a global model without sharing their raw data, thus maintaining privacy during the training process. However, traditional FL approaches face challenges, particularly when dealing with \emph{non-IID and unbalanced 
data} across users. 
For instance, our representative numerical results in Fig.~\ref{Fig:locvsglob} illustrate that applying the FedAvg algorithm under a non-IID and unbalanced data distribution across FL clients leads to a notable disparity: the global model performs poorly on a target client dataset, even though the FL network achieves relatively higher average test accuracy across all FL clients.
To address this problem, PFL has been developed as a potential solution \cite{fallah2020personalized}. PFL adapts the learning process by using local data to refine model performance and tailor it to individual datasets and metrics.
For instance, the authors in \cite{deng2020adaptive} propose an adaptive PFL algorithm that allows each client to train their local models while contributing to the global model. This approach aims to find an optimal balance between local and global models to enhance personalization and efficiency in FL.

\begin{figure}[t]
\centering
\includegraphics[width=\linewidth, trim={60 10 110 50}, clip]{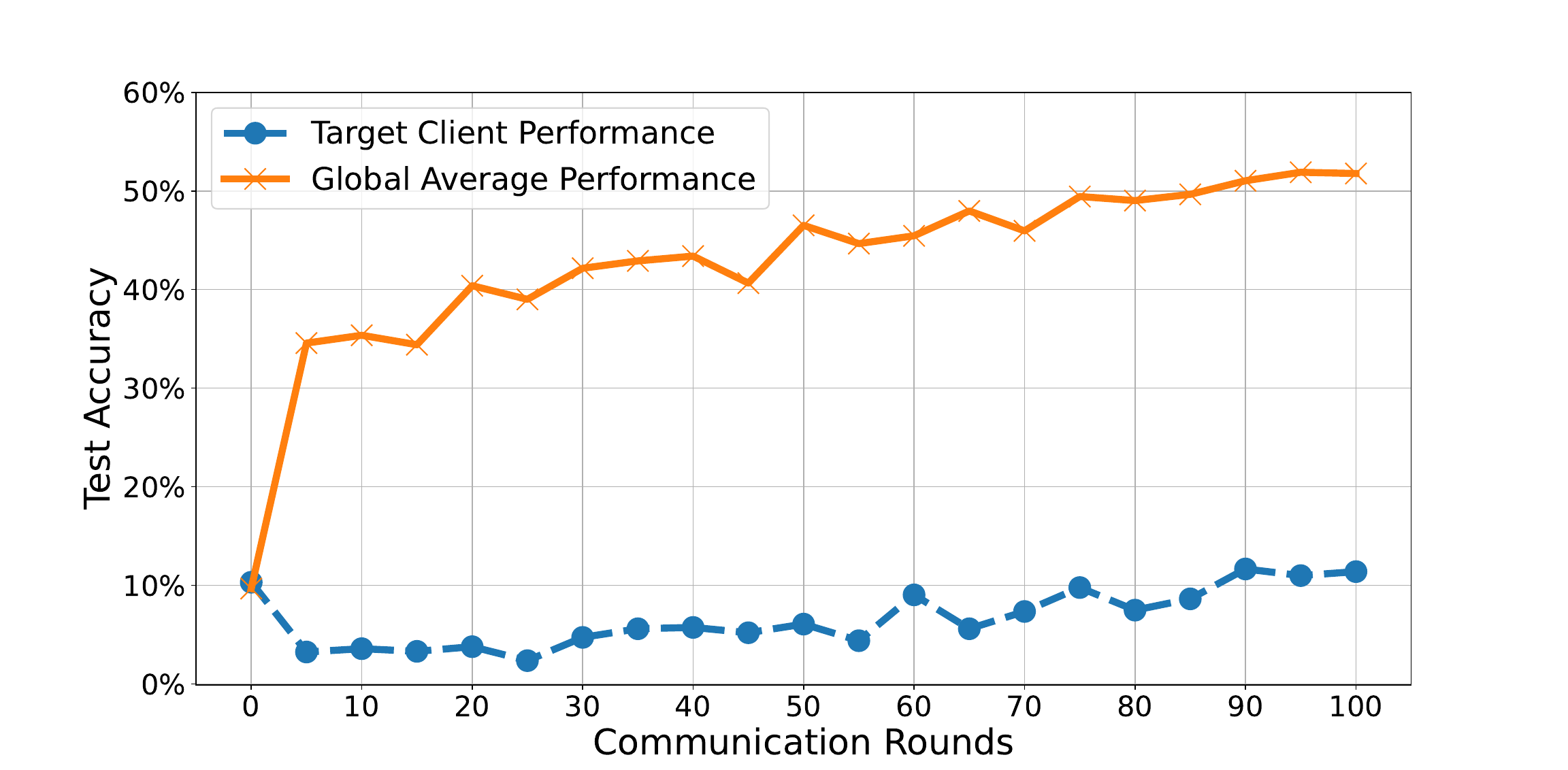}
\caption{\small{Performance comparison of a typical target client vs. the global model performance (FedAvg) for FL with 11 clients with non-IID and unbalanced CIFAR-10 dataset. }}
\label{Fig:locvsglob}
\end{figure}

Building on the foundations of PFL, another approach to achieve personalization is to collaborate only with those clients who have similar datasets in terms of distribution and size. 
 Therefore, using these approaches, users aim to find the appropriate neighbors with similar datasets to improve their model performance in PFL~\cite{donahue2021optimality}. 
However, this raises an important question: \emph{How to identify those neighbors with similar data structures to form personalized federations?} To address this question, there have been several prior works~\cite{donahue2021optimality,ghosh2020efficient,werner2022towards,sun2024collaborate} focused on efficient algorithms to cluster clients based on the similarity of their data distributions. 
Nevertheless, these research studies typically consider ideal and perfect transmission channels for establishing collaboration with neighboring clients within a cluster.  Unreliable or slow wireless links can lead to outdated or incomplete model updates, thereby significantly impacting the learning model's convergence rate and accuracy. 

In this context, it is crucial to consider the impact of wireless channel conditions and \emph{over-the-air} collaboration between neighboring clients.
  Over-the-air and wireless PFL algorithms should account for transmission error probabilities, constrained transmission power, interference from other clients, etc. 
  Although 
a multitude of prior works have studied combining wireless communication and FL (see, for example, ~\cite{chen2020wireless, amiri2020federated,chen2020joint,wang2022interference,wang2023performance,yan2024performance,chen2024robust,chen2024optimizing}),
to the best of our knowledge, joint optimization of wireless communication and \emph{personalized} FL with data heterogeneity have not been fully investigated in the context of interference-limited D2D wireless networks operating in unlicensed spectrum bands. 


In this paper, we formulate an optimization problem to jointly consider the data similarity across different clients and underlying wireless channel conditions. We propose pFedWN that 
considers the data structure of the clients on the learning side by using the EM approach. Furthermore, we integrate the state of wireless channel conditions into the decision-making process for the neighbor selection. For a target client in the D2D network, two subproblems need to be addressed before the PFL training. Those are: (i) which neighbors should be selected as PFL neighbors among all neighbors, and (ii) how much this target client wants to learn from each of those PFL neighbors' models. In our system model, the neighbor selection depends on the wireless channel condition, which defines the link transmission error. In addition, the EM algorithm determines the optimal aggregation weights for the target client to achieve personalization. 
In
summary, our main contributions  are as follows:
\begin{itemize}
    \item We propose a PFL framework for D2D wireless links and formulate a joint optimization problem that simultaneously accounts for client data similarity and the dynamics of wireless channel conditions. The two-step solution, named pFedWN, is designed to maximize the learning performance of the target client while incorporating the wireless channel conditions. 
    \item For the first subproblem, we propose a channel-aware PFL neighbor selection policy to enhance the PFL communication efficiency.
    The solution is based on calculating the probability of transmission error according to the signal-to-interference-plus-noise ratio (SINR) between the neighbors and the target client. 
    \item For the second subproblem, we employ the EM approach to estimate the similarity in data structure between the target client and its chosen neighbors, enabling us to determine the optimal aggregation weights for each neighbor. With the model exchange between those PFL clients, the EM algorithm allows us to accurately estimate data similarity iteratively without direct access to the client's raw data. This approach combines the target client's local model and its neighbors' model information for the next iteration of model updates. 
    \item Through numerical results, we demonstrate that the proposed pFedWN algorithm achieves a better learning performance compared to baseline models in non-IID and unbalanced dataset scenarios. This shows the potential of our approach in enhancing the efficiency and effectiveness of PFL in D2D communication networks.
\end{itemize}


\noindent 
\textbf{Organizations.}
We describe the related works in Section \ref{sec:relatedworks}. In Section \ref{sec:problem}, we introduce the system model, followed by the problem formulation and proposed PFL approach in Section \ref{sec:problem-formulation}.
Section \ref{sec:results} presents our numerical results, and Section \ref{sec:concludes} concludes the paper. 

\section{Related Works}
\label{sec:relatedworks}
\textbf{Personalized Federated Learning.} To tackle the diverse data heterogeneity across clients in FL, PFL frameworks have been introduced. For instance, Per-FedAvg \cite{fallah2020personalized} seeks to optimize a personalized model for each client by considering the model's initial parameters during the meta-learning process. pFedMe \cite{t2020personalized} employs a personalized model for each client by incorporating more client-specific updates and a global model update to capture the global data distribution by applying the EM approach. MOCHA \cite{smith2017federated} is designed to handle multi-task learning in federated settings, allowing for personalized models considering network size and node heterogeneity. APFL \cite{deng2020adaptive} proposes a method that adaptively updates each client's local model in tandem with the global model, which introduces a mechanism to control the influence of the global model on the local updates, providing a fine-tuned balance that can be adjusted according to the level of personalization required for each client's tasks.
\cite{yang2023dynamic} introduces FedDPA that integrates Dynamic Fisher Personalization (DFP) and Adaptive Constraint (AC) techniques to address the challenges of inflexible personalization and convergence issues in PFL under differential privacy constraints.

In parallel, cluster-based PFL algorithms have been developed. For instance, the authors in \cite{ghosh2020efficient} proposed a framework to group devices or clients into clusters based on the similarity of their data distributions. {Additionally,~\cite{werner2022towards} introduces a novel approach to PFL by clustering clients with similar objectives using a threshold-based algorithm and local client momentum, achieving optimal convergence rates comparable to those possible with known client clustering configurations.} In~\cite{ni2023efficient}, the authors developed a dynamic FL cluster selection algorithm using multi-armed bandit methodologies.

While these prior works have made significant contributions to PFL algorithms, our work advances these studies by (i) integrating unreliable wireless channel models into PFL algorithms and (ii) introducing a novel EM-based approach to determine the optimal ``collaboration weights'' for neighboring nodes.

\textbf{Federated Learning over Wireless Networks.} 
There has been an extended amount of work on considering Federated learning with wireless communications. 
For example, \cite{wang2022interference} tackles the problem of interference in multi-cell wireless networks for over-the-air federated learning, proposing effective management strategies to enhance learning efficiency in cooperative multi-cell FL optimization. Additionally, different resource allocation problems combined with FL have been extensively investigated in several recent works~\cite{ shi2020joint, dinh2020federated, xu2020client, salh2023energy, zhou2023resource,yang2020energy}. 
In another line of work, 
\cite{al2023edge} integrates FL with edge computing in the context of upcoming 6G networks to achieve ubiquitous data-driven services while also addressing the challenges of communication costs and data privacy.  The authors in~\cite{pei2023federated} propose a ``super-wireless-over-the-air'' federated learning framework to overcome the limitations of 5G for IoT, highlighting improvements in privacy and data transmission efficiency. In \cite{al2023decentralized}, the authors present a decentralized aggregation framework for energy-efficient FL via D2D communications. 
In addition, \cite{parasnis2023connectivity} introduces a semi-decentralized FL technique that leverages the properties of time-varying D2D networks to optimize the trade-off between convergence rate and communication costs. 

Furthermore, user selection for collaboration in FL is yet another key research problem~\cite{fu2023client}.
For example, the authors in~\cite{xu2023energy} propose a framework to optimize user selection and aggregator placement in mobile edge computing networks to minimize energy consumption while preserving learning accuracy.
Furthermore, a trust evaluation mechanism for federated learning in Digital Twin Mobile Networks (DTMN) was proposed in \cite{guo2023tfl}, which focuses on ensuring the reliability of the model by taking into account direct trust evidence and recommended trust information.
Additionally, \cite{chen2020joint} addresses the challenge of implementing FL over wireless networks by focusing on optimizing user selection and resource allocation to improve the accuracy of the global FL model.  
Moreover, the authors in \cite{chen2020convergence} show the impacts of user selection in FL over wireless networks in terms of learning performance and convergence time. 
To optimize the convergence time and performance of FL, the paper proposes a probabilistic user selection strategy that prioritizes users with the most significant impact on the global FL model. 

Therefore, we highlight that significant amounts of work have explored FL over wireless networks, some of which consider user selection as well (e.g., \cite{chen2020joint}, \cite{chen2020convergence}).  However, these works primarily focus on IID data distributions. In contrast, we focus on \emph{personalized FL} over wireless networks where clients possess \emph{unbalanced and non-IID datasets}. To this end, we integrate individual client learning behaviors with the constraints of wireless networks. 
That is, under the limitation of wireless communication resources, clients must act in their self-interest and make strategic decisions about collaborating with neighbors to improve their learning outcomes. By considering both personalized learning performance as well as the underlying dynamic wireless channel conditions, we formulate a joint optimization problem and propose the pFedWN algorithm, which not only improves the efficiency of the learning process by optimizing collaborating neighbors but also enhances the reliability of communications by selecting neighbors with better channel conditions. 

In addition, we consider D2D communications to enable clients to directly exchange and aggregate models based on their local data profiles without a central server. 
The decentralized nature of D2D communications enhances the robustness of the learning network and reduces the risk of server-centric failures by improving the system's resilience to disruptions. 
This approach aligns with the increasing need for localized, real-time data processing in applications such as autonomous vehicles and mobile healthcare, where server-based delays and bandwidth limitations could impact wireless communication and learning performance.


\begin{figure}[t]
\centering
\includegraphics[width=\linewidth]{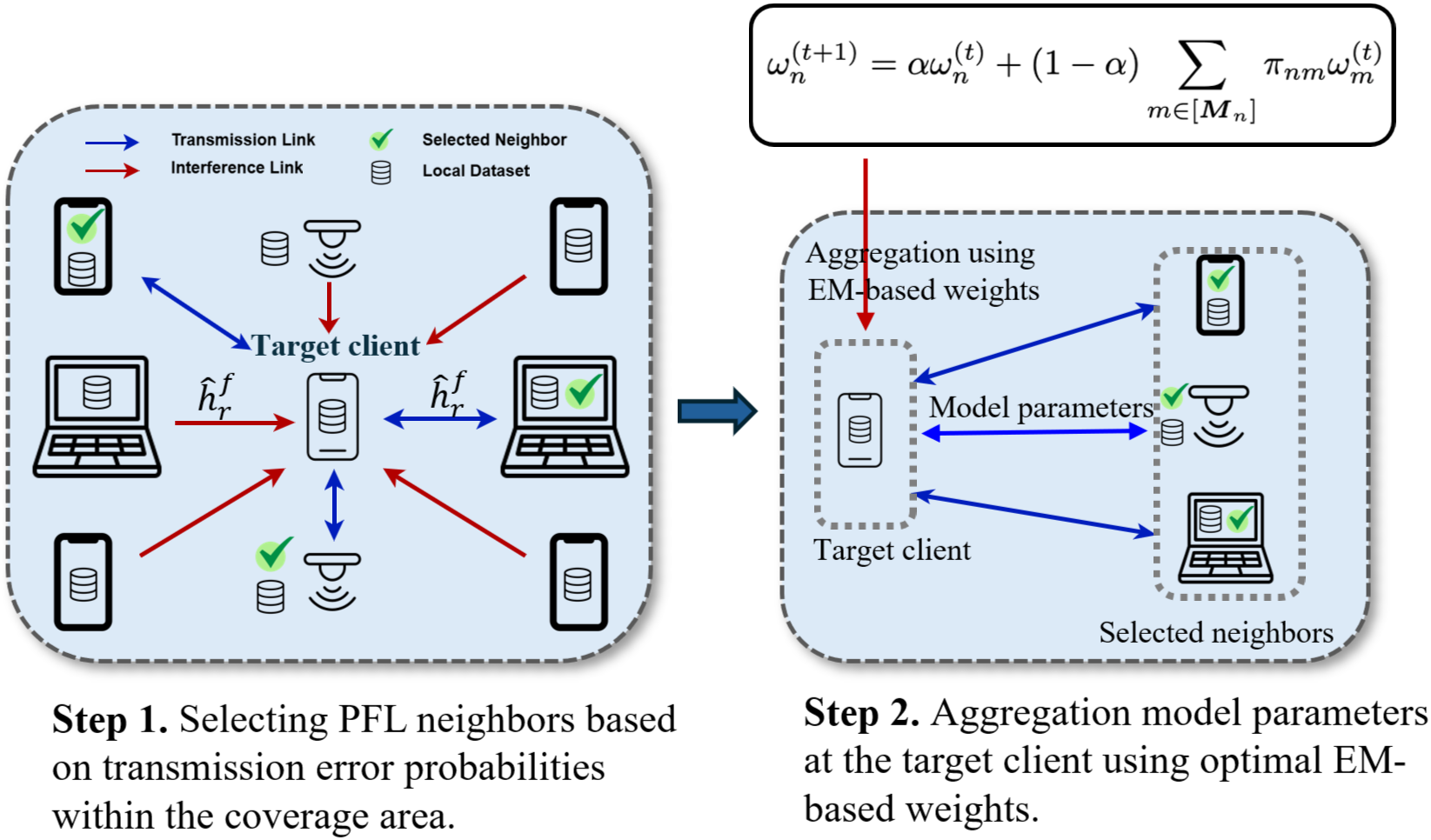}
\caption{\small{In the proposed system model, the target client selects a set of neighbors based on the wireless channel conditions. Then, the target client assigns model aggregation weights.} }
\label{Fig:sys}
\end{figure}
\section{System Model}
\label{sec:problem}

\subsection{D2D Learning Model}
\label{subsec:model}
We consider a PFL framework within a server-free network, where $[\boldsymbol{N}]:= \{1, 2, ..., N\}$ denotes the set of clients (or nodes).  
Furthermore, the $n$-th target client is surrounded by $g_n$ neighbors denoted by the set $[\boldsymbol{G}_n]$, such that $|[\boldsymbol{G}_n]| = g_n$ which $|.|$ indicates the cardinality of a set.
We assume that the $n$-th client possesses a dataset $D_{n} = { (\textbf{x}_{i}^{(n)}, y_{i}^{(n)}) }_{i = 1}^{k_n}$, drawn from a distribution $\mathcal{D}_{n}$ over the space $\mathcal{X} \times \mathcal{Y}$, and the total dataset size across all clients is denoted by $K = \sum_{n = 1}^{N} k_n$.


In a typical FL scenario, the clients' local learning models are aggregated at the central server to obtain a global model. However, PFL algorithms are focused more on clients' models based on their data profiles. The target client not only updates its local model based on its own data but also adaptively incorporates model updates from a set of selected neighbors within the network. 
In our server-free PFL framework, as shown in Fig.~\ref{Fig:sys}, the target client first selects a set of neighbors based on the wireless channel conditions. Then, the target client optimally assigns weights for the model parameters received from the selected neighbors. Finally, the model aggregation for the $n$-th target client with its selected neighbors takes place on the client side without any central server. Thus, the local model updating for the $n$-th client can be written as: 
\begin{equation} \label{eq:5}
    \omega_{n}^{(t+1)} = \alpha \omega_{n}^{(t)} + (1-\alpha)\sum_{m \in {[\boldsymbol{M}_n]}}\pi_{nm}\omega_{m}^{(t)},
\end{equation}
where $[\boldsymbol{M}_n]$
is a set of neighbors selected by the $n$-th client such that $|[\boldsymbol{M}_n]| = M_n$, and $\omega_{m}^{(t)}$ is the  model parameters from the $m$-th selected neighbor at time slot $t$. Furthermore, $\omega_{n}^{(t)}$ is obtained from client $n$ by training its local dataset using the standard gradient descent method, i.e.,:
\begin{equation} \label{eq:2}
    \omega^{(t, j+1)}_{n} = \omega^{(t, j)}_{n} - \eta \nabla f_{n}(\omega_{n}^{(t, j)}),
\end{equation}
where $j \in \{0,..., E - 1\}$ and $E$ is the total number of local iterations. Additionally, after $E$ local training on target client $n$, we take the local model at the final iteration $\omega^{(t, E)}_{n}$ to be the model for the next round aggregation, i.e., $\omega^{(t)}_{n} \triangleq \omega^{(t, E)}_{n}$.

To incorporate the model parameters of the selected neighbors, we aim to find the optimal weight assignment $\pi_{nm}$, which represents how much the target client $n$ wants to learn from its $m$-th neighbor. In addition, $\alpha$ is a hyperparameter that depends on how much the client wants to learn from all its neighbors compared to its current local model parameters. 

\noindent 
\subsection{{Channel-Aware PFL Neighbor Selection}} Given the described D2D learning model, in this section, we present the underlying communication setup for model parameter exchange between the target client and neighbors.
Our goal is to select a subset $\boldsymbol{M}_n$ neighbors out of $\boldsymbol{G}_n$ 
to be the PFL neighbor for the target client $n$. In this paper, we use transmission error probability as a metric for selecting those PFL neighbors. 

\textbf{Channel Modeling and Transmission Error Probability.}
We assume that all clients (wireless nodes) operate in an unlicensed spectrum band that is partitioned into $|\boldsymbol{F}|$ sub-channels. 
Furthermore, $\boldsymbol{S}$ represents the set of communication sessions sharing the same spectrum band, and $s \in \boldsymbol{S}$ denotes the individual session between the selected neighbor and the target client. Also, \( r \in \boldsymbol{S}\setminus s\) represents the session between a specific interfering neighbor and the target client, while the set \( \boldsymbol{R} \) denotes all interfering neighbors.

Given the transmit power $P_s$, the received power at the target client is given by $P_{rx} = P_s |h_s^f|^2$ where $h_s^f = \tilde{h}_s^f \hat{h}_s^f$ denotes the channel gain for sub-channel $f \in \boldsymbol{F}$. Here, $\tilde{h}_s^f$ and $\hat{h}_s^f$ are the channel fading coefficient and the square root of the path loss, respectively. Using the single-slope path loss model, $\hat{h}_s^f$ is defined as~\cite{Channel_Ghazikor_2024}:
\begin{equation}
\hat{h}_s^f = \frac{\lambda}{4 \pi d_0}\sqrt{(\frac{d_0}{d})^{\alpha_s}} \quad \text{if} \enskip d \ge d_0, 
\end{equation}
where $\alpha_s$, $d_0$, and $\lambda$ are the path loss exponent, reference distance, and signal wavelength, respectively.

We assume a block fading channel model in which $\tilde{h}_s^f$ follows the {Rayleigh distribution} corresponding to Non-Line-of-Sight (NLoS) channel. Accordingly, the probability density function (PDF) of the Rayleigh distribution is given by:
\begin{equation}
    \mathbf{P}(\tilde{h}_s^f = x) = \frac{2x}{\Gamma} e^{-\frac{x^2}{\Gamma}},
\end{equation}
where $\Gamma$ denotes the Rayleigh fading factor. Accordingly,
the selected neighbor transmits its packet to the target client over the best frequency channel $f^* = {\arg\max}_{f \in \boldsymbol{F}} \ \ \Tilde{h}_s^f \hat{h}_s^f$ if the channel fading coefficient is greater than the channel fading threshold $\beta_s$ ($\Tilde{h}_s^{f^*} \ge \beta_s > 0$) \cite{Guan-2016-ToTransmit}.
Therefore, each selected neighbor chooses the best sub-channel to transmit its packet in the presence of other neighbors.


 Let $\gamma_{th}$ represent the SINR threshold such that transmission error occurs if the SINR falls below this threshold. Therefore, the probability of transmission error between the neighbor and the target client due to the high impact of interference from interferer neighbors operating in the same frequency band can be expressed as follows:
\begin{equation*}
\begin{aligned} \label{Perr}
\mathbf{P}_s^{err}(\hat{\boldsymbol{h}}^f) & \triangleq \mathbf{P}(\gamma_s < \gamma_{th}) = \mathbf{P}\Biggl(\frac{P_s(h_s^f)^2}{\sigma^2+I_s^f(\hat{\boldsymbol{h}}^f_{-s})} < \gamma_{th} \Biggl),
\end{aligned}
\end{equation*}
where $P_s$ denotes the transmission power at the neighbor $s \in \boldsymbol{S}$, and $\sigma^2 = \kappa TW$ is the thermal noise power where $\kappa$, $T$, and $W$ are the Boltzmann constant, noise temperature, and bandwidth, respectively. Also, $I_s^f(\hat{\boldsymbol{h}}^f_{-s})$ represents the impact of interferer neighbors on the target client in which $(\hat{\boldsymbol{h}}^f_{-s}) \triangleq (\hat{\boldsymbol{h}}^f_{r})_{r \in \boldsymbol{S} \backslash s}$. Thus, $I_s^f(\hat{\boldsymbol{h}}^f_{-s})$ is given by \cite{Guan-2016-ToTransmit}:
\begin{equation}
I_s^f(\hat{\boldsymbol{h}}^f_{-s}) = \sum_{r\in \boldsymbol{S} \backslash s} P_r(h_r^f)^2\alpha_r^f(\beta_r).
\end{equation}
Here, $\alpha_r^f(\beta_r)$ equals one if interferer neighbor $r$ transmits, and zero otherwise. Furthermore, $(h_r^f)^2$ indicates the channel gain between the interferer neighbor $r$ and the target client. 
Applying the classical stochastic geometry approach, the {Log-normal distribution} can be used to model interference. Hence, the PDF of $I_s^f(\hat{\boldsymbol{h}}^f_{-s})$ can be defined as \cite{Tian-2016-Interference}:
\begin{equation}
    \mathbf{P}(I_s^f(\hat{\boldsymbol{h}}^f_{-s}) = x) = \frac{\exp(-\frac{(\ln x-\mu(\hat{\boldsymbol{h}}^f_{-s}))^2}{2\sigma^2(\hat{\boldsymbol{h}}^f_{-s})})}{x\sigma(\hat{\boldsymbol{h}}^f_{-s})\sqrt{2\pi}},
\end{equation}
where $\mu(\hat{\boldsymbol{h}}^f_{-s})$ and 
$\sigma(\hat{\boldsymbol{h}}^f_{-s})$ are the location and scale parameters in the Log-normal distribution, respectively, which are derived in Appendix \ref{log-normaldistribution}. Then, let $v_s(x,\hat{\boldsymbol{h}}^f_{-s})$ be the complementary cumulative distribution function (CCDF) of $I_s^f(\hat{\boldsymbol{h}}^f_{-s})$ which is given by:
\begin{align*}
v_s(x,\hat{\boldsymbol{h}}^f_{-s}) & \triangleq \mathbf{P}(x < I_s^f(\hat{\boldsymbol{h}}^f_{-s})) \\
 = \frac{1}{2} - & \frac{1}{2} \text{erf}(\frac{\ln x-\mu(\hat{\boldsymbol{h}}^f_{-s})}{\sqrt{2}\sigma(\hat{\boldsymbol{h}}^f_{-s})})  = 1-\phi(\frac{\ln x-\mu(\hat{\boldsymbol{h}}^f_{-s})}{\sigma(\hat{\boldsymbol{h}}^f_{-s})}),
\end{align*}
where $\text{erf}(x) = \frac{2}{\sqrt{\pi}} \int_0^x e^{-t^2}dt$ represents the error function and $\phi(x)$ denotes the cumulative distribution function (CDF) of the standard normal distribution. Finally, $\mathbf{P}_s^{err}(\hat{\boldsymbol{h}}^f)$ can be defined as follows:
\begin{equation*}
\begin{aligned}
\mathbf{P}_s^{err}(\hat{\boldsymbol{h}}^f) & = \mathbf{P}\Biggl(\frac{P_s(\hat{h}_s^f)^2(\Tilde{h}_s^f)^2}{\gamma_{th}}-\sigma^2 < I_s^f(\hat{\boldsymbol{h}}^f_{-s}) \Biggl) \\ 
& = \int_{\beta_s}^{\infty} \mathbf{P}(\Tilde{h}_s^f = x) v_s(\frac{P_s(\hat{h}_s^f)^2}{\gamma_{th}}x^2-\sigma^2,\hat{\boldsymbol{h}}^f_{-s})dx.
\end{aligned}
\end{equation*}
Therefore, given the calculated probability of transmission error $\mathbf{P}_s^{err}(\hat{\boldsymbol{h}}^f)$, the set of PFL neighbors $\boldsymbol{M}_n$ will be selected 
as follows: if $\mathbf{P}_s^{err}(\hat{\boldsymbol{h}}^f)$ for a specific neighbor $s \in \boldsymbol{G}_n$ is smaller than the threshold $\epsilon$, the neighbor $s$ would be chosen to be a PFL neighbor $s \in \boldsymbol{M}_n$.
It should be noted that the number of selected neighbors would be different for various $\gamma_{th}$. For example, if $\gamma_{th}$ increases, $\mathbf{P}_s^{err}(\hat{\boldsymbol{h}}^f)$ will increase since packets cannot be decoded correctly at the target client with low SINR. Thus, according to the selection policy, fewer neighbors with the same $\epsilon$ will be selected.


\section{Problem Formulation and Proposed Solution}
\label{sec:problem-formulation}
In this section, we formulate a joint optimization problem to integrate channel-aware PFL neighbor selection with an EM-based weight assignment to those selected neighbors. 
\subsection{PFL Problem Formulation}
Let $\textit{f}_{n}: \mathcal{X}\times \mathcal{Y} \to \mathbb{R}_{+}$ denote the loss function for the $n$-th client. 
According to the presented system model, the PFL problem can be formulated as follows: 
\begin{align}
\label{eq:1}
& \min_{\omega_{n}, \boldsymbol{M}_{n}, \boldsymbol{\pi}_{n}} \quad \frac{1}{N} \sum_{n=1}^{N} \mathbb{E}_{\textbf{x},y \sim \mathcal{D}_{n}} [f_{n} ( \omega_{n}, \boldsymbol{M}_{n}, \boldsymbol{\pi}_{n} ; \textbf{x}_{i}^{(n)}, y_{i}^{(n)})], 
\end{align}
where $\omega_{n}$ is the $n$-th target client's local model parameters and the loss function is denoted by $\mathbb{E}_{\textbf{x},y \sim \mathcal{D}_{n}} [f_{n}  (\omega_{n}, \boldsymbol{M}_{n}, \boldsymbol{\pi}_{n}; \textbf{x}_{i}^{(n)}, y_{i}^{(n)})]$ of model parameters $\omega_{n}$ given the data point $\textbf{x}_{i}^{(n)}$ and true label $y_{i}^{(n)}$. The loss function depends on the model parameters $\omega_n$,  set of selected neighbors $\boldsymbol{M}_n$, and the aggregation weights $\boldsymbol{\pi}_{n} = [\pi_{n1}, \pi_{n2}, ..., \pi_{nM_n}]$ assigned to the PFL neighbors in set $\boldsymbol{M}_n$. 
To solve the formulated optimization problem, our framework focuses on a specific target client $n$ within the network to enhance its learning model by leveraging collaborations with selected neighboring clients. To this end, we  
separate the original PFL problem into two sub-problems. First, we select the set of appropriate neighbors $\boldsymbol{M}_n$ out of $\boldsymbol{G}_n$ based on the {probability of transmission error}, which reflects the conditions of the wireless communication channel. As outlined in Section \ref{sec:problem}, the selected PFL neighbors \( \boldsymbol{M}_n \) are those whose transmission error to the target client \( n \) is less than \( \epsilon \). Secondly, the optimal aggregation weights $\boldsymbol{\pi}_n$ are calculated, which capture data and model similarity between the selected neighbors and target client. 


\subsection{Proposed PFL Weight Assignment Methodology}
\label{sec:pFedWN}
In this section, we leverage the EM algorithm to find the optimal aggregation weights $\boldsymbol{\pi}_n$. Once the PFL neighbors $\boldsymbol{M}_n$ are selected, they are not treated equally when their models are aggregated at the target client side. Instead, each selected PFL neighbor is assigned a specific learning weight. This weight is determined based on the similarity of their data to that of the target client; that is, neighbors with more similar data can contribute more meaningfully to the personalized learning model of the target client.
To capture the data similarity of datasets across the neighbor clients, we consider the local data distribution $\mathcal{D}_{n}$ of a target client $n$ not in isolation.
Rather, we assume that the target client's data distribution is hypothesized to be a mixture of the data distributions of its neighboring clients $\{ \mathcal{D}_m \}_{m = 1}^{M_{n}}$. The weights are defined as $\boldsymbol{\pi}_{n} = [\pi_{n1}, \pi_{n2}, ..., \pi_{nM_{n}}] \in \Delta^{M_{n}}$. Here, the simplex $\Delta^{M_{n}}$ represents a prior belief system about the association of data between the target client and its neighbors.

To capture the latent relationships between the target client and its PFL neighbors, we introduce a latent variable $z_{n}$ for each client $n$. 
Using the EM algorithm, we consider the loss function:
\begin{equation}
l(h_{\omega_{nm}^*}(\textbf{x}), y) = -\log(p_{nm}(y|\textbf{x})) + B,
\end{equation}
to assess the likelihood of the data given the model parameters, with $B$ as a normalization constant and the hypothesis $h$ with optimal model parameters. It should be noted that the goal of employing the EM algorithm in our framework is to find the optimal weights $\boldsymbol{\pi}_n$ that estimate the data distribution similarities between clients. To obtain the optimal weights for each PFL neighbor, the EM algorithm iteratively performs the following steps:
\begin{algorithm}[t]
\SetAlgoLined
\DontPrintSemicolon
\caption{\small PFL Neighbor Selection and Weight Assignments}
\label{alg:pFedWN}
\SetKwInput{KwInput}{Input}
\SetKwInput{KwOutput}{Output}
\KwInput{$\boldsymbol{G}_n$: Neighbors of target client $n$}
    \For{neighbor $s$ of target client $n$}{
        \If{$\mathbf{P}_s^{err}(\hat{\boldsymbol{h}}^f) < \epsilon$}{
            select the PFL neighbor $s$ and add it to $\boldsymbol{M}_n$
        }
    }
\KwOutput{$\boldsymbol{M}_n$}
\KwInput{Selected PFL neighbors $\boldsymbol{M}_n$, number of communication rounds $T$, Initial model parameters $\Omega^{(0)}$, Prior $\Pi$}
\For{$t = 1, \dots, T$}{
     
        \For{each neighbor $m$ in $\boldsymbol{M}_n$}{
        \textbf{E-Step:} 
            Compute $\lambda_{nm}^{(t)}$ using Eq. \eqref{eq:E-step}
                   
        \textbf{M-Step: }
            Update $\pi_{nm}^{(t)}$ using Eq.~\eqref{eq:M-step-1}; 
            Update $\omega_{nm}^{(t+1)}$ using Eq.~\eqref{eq:M-step-2}
        }
    \If{convergence criterion is met}{
        \KwRet $\pi^{*}_{nm}$\;
    }
}
\KwOutput{Optimal weights $\pi^*_{nm}$}
\label{alg:algorithm-1}
\end{algorithm}
\begin{algorithm}[t]
\SetAlgoLined
\DontPrintSemicolon
\caption{pFedWN}
\label{alg:pFedWN}
\SetKwInput{KwInput}{Input}
\SetKwInput{KwOutput}{Output}
\KwInput{Target client local models $\omega_{n}^{(0)}$ and selected PFL neighbor model $\omega_m^{(0)}$ for $m \in [\boldsymbol{M}_n]$. Assigned weights for $m$ neighbors $\pi_{nm}^{(*)}$}
\For{$t = 1, \ldots, T$}{
    \For{neighbor $m$ of client $n$}{
        $\omega_{m}^{(t)} = \omega_{m}^{(t-1)} - \eta \nabla f_m \left( \omega_{m}^{(t-1)}, \xi_{m}^{t} \right)$\;
        \If{D2D Communication = \textbf{True}}{
            each PFL neighbor $m$ sends $\omega_m^{(t)}$ to the target client $n$
        }
        
    }
    \If{Target Client $n$ = \textbf{True}}{
        \If{D2D Communication = \textbf{True}}{
            receives $\omega_{m}^{(t)}$ from its selected neighbors,\\
            model aggregation: $\omega_{n}^{(t+1)} = \alpha \omega_{n}^{(t)} + (1-\alpha)\sum_{m \in [\boldsymbol{M}_n]}\pi_{n m}^{(*)}\omega_{m}^{(t)}$\;
        }
        Target client local training: \\
        $\omega_{n}^{(t, j+1)} = \omega_{n}^{(t, j)} - \eta \nabla f_n \left( \omega_{n}^{(t, j)}, \xi_{n}^{t+1} \right)$\;     
    }
}
\KwOutput{Optimal model parameters $\omega^{*}_n$}
\label{alg:algorithm-2}
\end{algorithm}

\begin{itemize}[leftmargin = 4mm, itemsep = 0.01in, parsep = 0.005in, topsep = 0.005in]
\item \textbf{E-STEP:} It determines the posterior probabilities $p(z_{n} = m |D_{n})$, which represent the probability that the latent variable $z_n$ 
is equal to $m$, given the target client's data $D_n$. This reflects the likelihood of the target client's data originating from each neighbor under the current model estimates: 
\begin{equation}
    \lambda_{nm}^{(t+1)} = q^{(t+1)}(z_n = m) \propto \pi_{nm}^{(t)}\exp{[-l(h_{\omega_{nm}^{(t)}}(\textbf{x}_i^{(n)}), y_i^{(n)})]}.
    \label{eq:E-step}
\end{equation}

\item \textbf{M-STEP:} It updates the weights and model parameters based on the likelihoods from the E-step:
\begin{equation}
    \pi_{nm}^{(t+1)} = \frac{\sum_{i=1}^{k_n}\lambda_{nm}^{(t+1)}}{k_n},
    \label{eq:M-step-1}
\end{equation}
\begin{equation}
    \omega_{nm}^{(t+1)} \in \arg \min_{\omega}  \sum_{i=1}^{k_n} \lambda_{nm}^{(t+1)} l(h_{\omega}(\textbf{x}_i^{(n)}), y_i^{(n)}).
    \label{eq:M-step-2}
\end{equation}
\end{itemize}

\noindent 
The convergence of the EM approach is well-studied in \cite{marfoq2021federated,sui2022find}. When the selected neighbors and target client use SGD as their optimizer, the convergence rate of the EM algorithm satisfies $\mathcal{O}(\frac{1}{\sqrt{T}})$ with $T$ communication rounds \cite{marfoq2021federated}. 

After convergence, the EM algorithm provides the optimal weight ${\pi}_{nm}^*$, which contains the appropriate weights of the target client's data distribution to those of its selected neighbors. These weights $\boldsymbol{\pi}_n$ are then leveraged in the target client's local model aggregation phase. This approach ensures that the target client's local model benefits from the knowledge of its selected neighbors' data characteristics, as expressed by the weights, thus enhancing the personalization aspect of FL.

Algorithm $1$ summarizes the channel-aware PFL neighbor selection and EM-based weight assignment. Furthermore,  Algorithm $2$ presents the overall pFedWN algorithm, including the local training at the target client $n$. 
To graphically demonstrate the operation of our solution, two time slots of the algorithm are shown in Figure \ref{Fig:timeslot}. At the initial time ($t=0$), the target client evaluates the quality of the communication channel with other clients within its range. A threshold determines which nearby clients will participate in the collaborative learning process. The target client then estimates the data similarity of each selected neighbor by the EM algorithm to appropriately weigh each neighbor’s model. With this setup, the target client begins its PFL using the pFedWN algorithm. For further communication rounds, the target client will receive the model from selected neighbors and update its model by applying the pFedWN algorithm for its local training in the current time slot.



\subsection{Convergence Analysis}
In this section, we provide the convergence analysis of our pFedWN algorithm for both strong convex and non-convex loss functions. To start our analysis, we make the following assumptions that hold in both scenarios:

\begin{itemize}[leftmargin = 4mm, itemsep = 0.01in, parsep = 0.005in, topsep = 0.005in]
    \item \textbf{Assumption 1} ($L$-Lipschitz continuous). \textit{There exists a $L > 0$ such that we have: 
$
    || \nabla f_{n}(\textbf{x}) -  \nabla f_{n}(\textbf{y}) || \leq L||\textbf{x} - \textbf{y}||, \forall \textbf{x},\textbf{y} \in \mathbb{R}^d, \forall n \in [\boldsymbol{N}]. 
$}

\item \textbf{Assumption 2} (Bounded Variance). \textit{The variance of stochastic gradients computed at each local data batch is bounded, i.e.,} 
$
    \mathbb{E}[||\nabla f_{n}(\textbf{x}; \xi) - f_{n}(\textbf{x})||^{2}] \leq \delta^{2}, \forall n \in [\boldsymbol{N}]. 
$
\end{itemize}
\subsubsection{Strong Convex Loss Function}
We first introduce an additional assumption for the strong convex loss function scenario. 
\begin{itemize}[leftmargin = 4mm, itemsep = 0.01in, parsep = 0.005in, topsep = 0.005in]

\item \textbf{Assumption 3} (Strong Convex). \textit{There exists a $\mu > 0$ such that
$
    f_{n}(\textbf{x}) \geq f_{n}(\textbf{y}) + \langle \nabla f_{n}(\textbf{y}) , \textbf{y} - \textbf{x}\rangle + \frac{\mu}{2}||\textbf{y} - \textbf{x}||^{2},  \forall \textbf{x},\textbf{y} \in \mathbb{R}^d, \forall n \in [\boldsymbol{N}]. 
$}
\end{itemize}




\begin{figure}[t]
\centering
\includegraphics[width=\linewidth, trim={70 20 50 20}, clip]{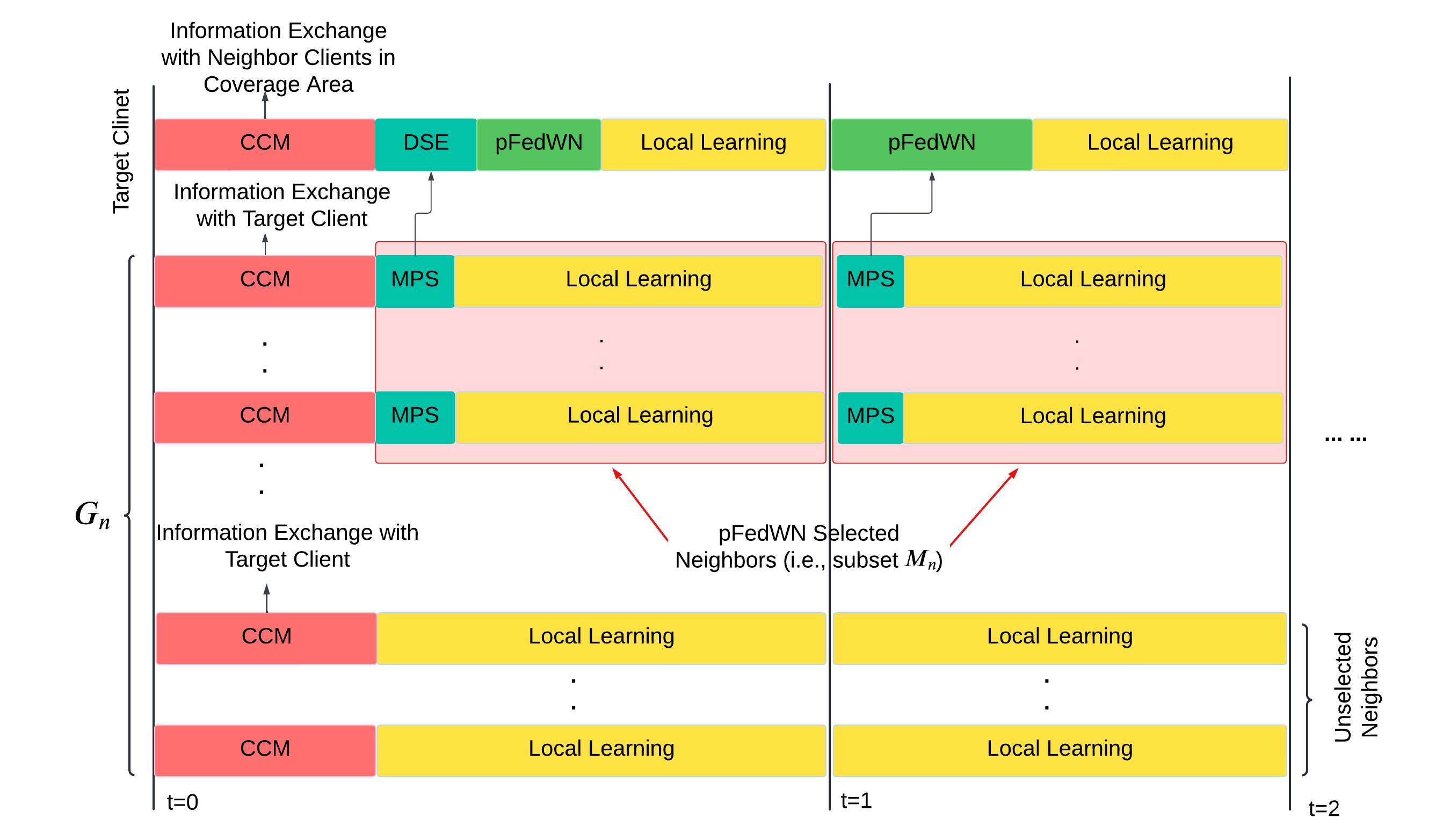}
\vspace{-0.5cm}
\caption{\small Example time slot for clients in the network. CCM: Communication Channel Measurement; DSE: Data Similarity Estimation; MPS: Model Parameters Sharing.}
\label{Fig:timeslot}
\end{figure}

\noindent 
These assumptions are prevalent in previous work~\cite{marfoq2021federated, parasnis2023connectivity} to make the convergence analysis tractable. 
We now prove the convergence of the model update $\omega_{n}^{(t+1)}$. 
Recall in Eq.~\eqref{eq:5}, $\alpha \in [0,1]$, and for any given $n$, $\sum_{m}\pi_{nm} = 1$. From Algorithm \ref{alg:pFedWN}, for the local model update for the selected neighbor $m \in [\boldsymbol{M}_{n}]$, we have: 
\begin{equation}\label{eq:12}
    \omega_{m}^{(t)} = \omega_{m}^{(t-1)} - \eta \nabla f_m \left( \omega_{m}^{(t-1)}, \xi_{m}^{t} \right). 
\end{equation}
We also assume that the local models $\omega_{m}$ are bounded. Given these assumptions, we present the following theorem. 
\begin{theorem}
Under Assumptions 1-3, and given the local model update rule in Eq.~\eqref{eq:12} for the selected neighbors $m \in [\boldsymbol{M}_{n}]$, and assuming that the local models $\omega_{m}$ are bounded, the pFedWN algorithm converges at a rate of $\mathcal{O}(\gamma^T)$ if $\alpha^{2}(2-\alpha)(1-\eta\mu)^{E} \leq 1$ holds. 
\end{theorem}
\begin{proof}
    The proof is provided in Appendix~\ref{section-appendix-thm-1}. 
\end{proof}
Next, we relax the assumption of a strongly convex loss function and present the convergence results of pFedWN for non-convex loss functions.

\subsubsection{Convergence Analysis with Non-Convex Loss Function}
In this part, we introduce the following additional assumptions.
\begin{itemize}[leftmargin = 4mm, itemsep = 0.01in, parsep = 0.005in, topsep = 0.005in]
\item \textbf{Assumption 4.} The expected squared distance between selected neighbor models and the target client's model is bounded:
$
 \mathbb{E}[||\omega_{m}^{t} - \omega_{n}^{t,E}||^{2}] \leq C, \forall m \in [\boldsymbol{M}_n], \forall t.
$

\item \textbf{Assumption 5.} The expected squared norm of the gradients is bounded:
$
 \mathbb{E}[||\nabla f_{n}(\omega_{n}^{(t)})||^{2}] \leq G^{2},  \forall n \in [\boldsymbol{N}], \forall t.
$
\end{itemize}
\noindent 

\begin{figure*}[t]
    \centering
    \begin{minipage}{.33\linewidth}      \includegraphics[width=6cm, trim={60 0 100 5}, clip]{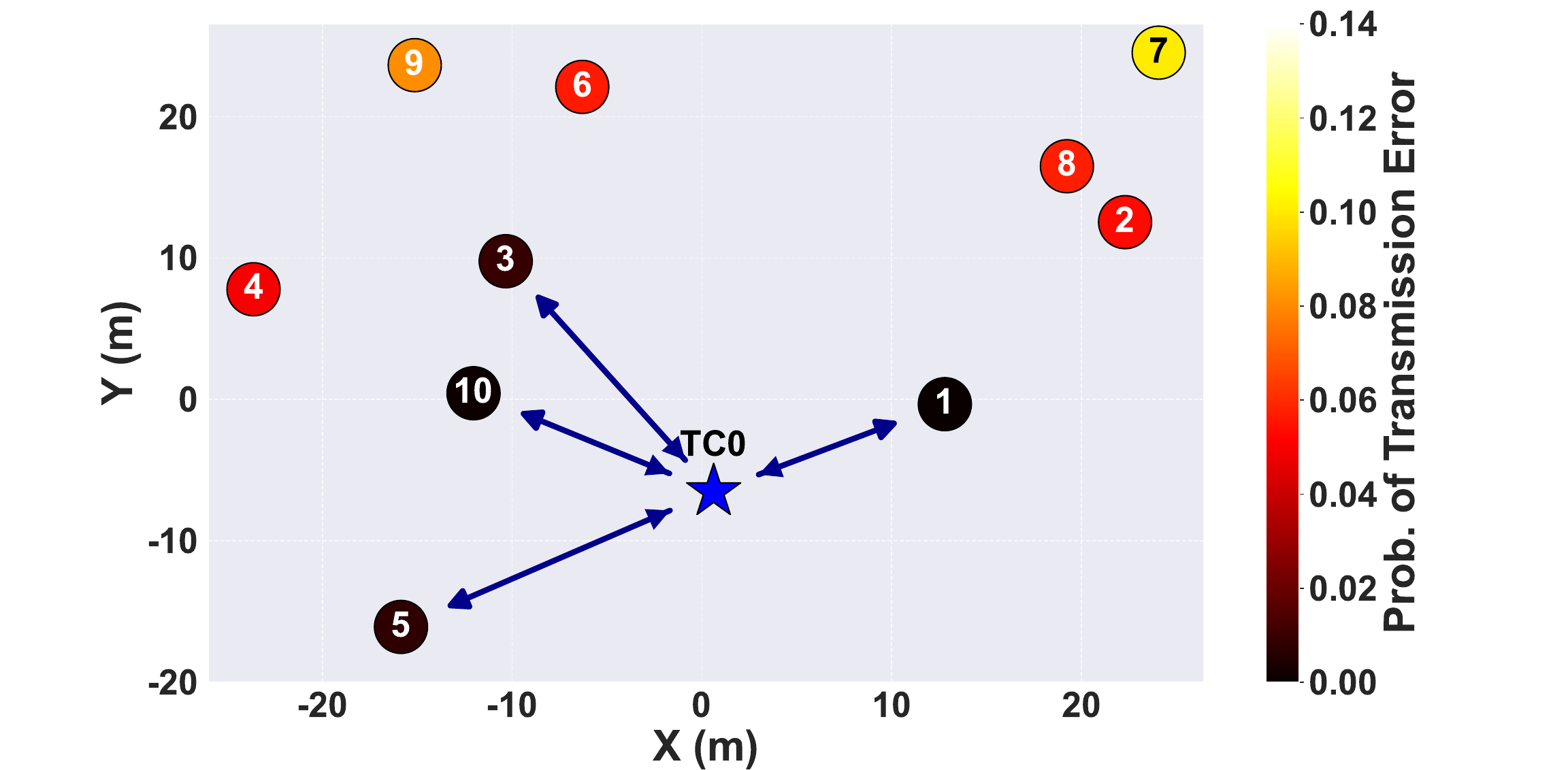}
        \centering (a) $\mathbf{P}_{s}^{err}$ heatmap for Case 1 ($\gamma_{th} = 5$)
    \end{minipage}%
        \begin{minipage}{.33\linewidth}      \includegraphics[width=6cm, trim={60 0 100 5}, clip]{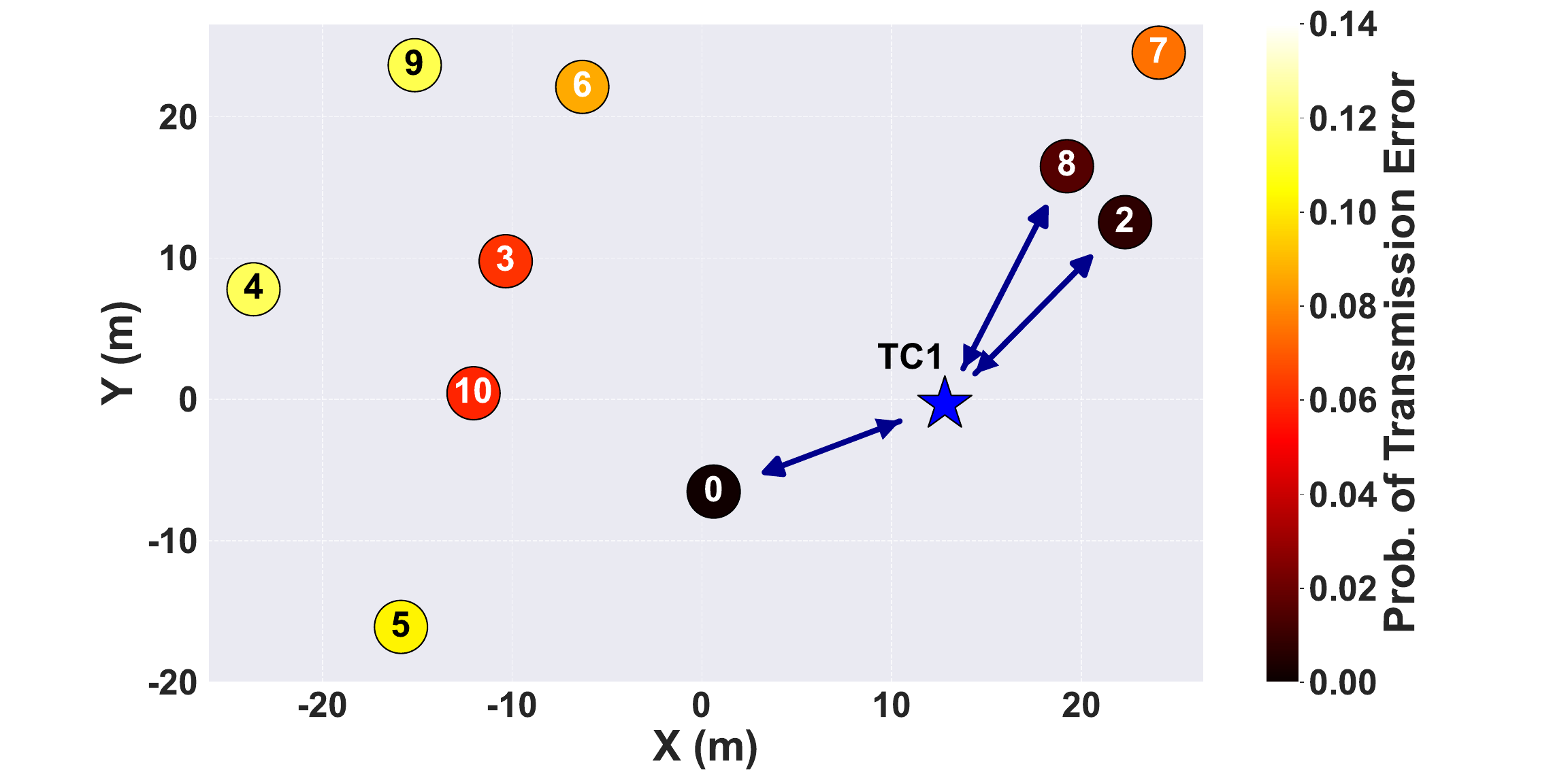}
        \centering (b) $\mathbf{P}_{s}^{err}$ heatmap for Case 2 ($\gamma_{th} = 10$)
    \end{minipage}%
    \begin{minipage}{.33\linewidth}      \includegraphics[width=6cm, trim={60 0 100 5}, clip]{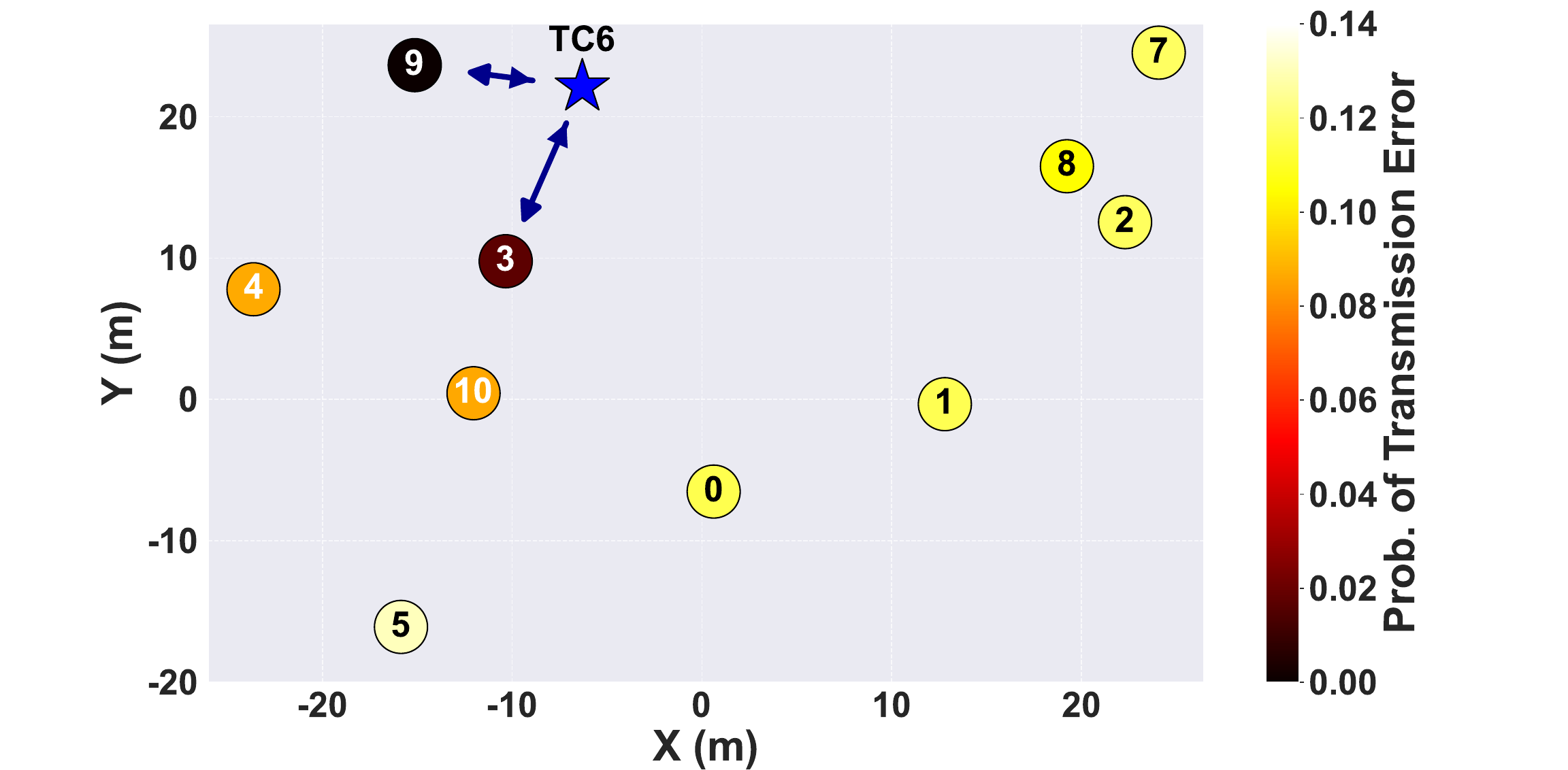}
        \centering (c) $\mathbf{P}_{s}^{err}$ heatmap for Case 3 ($\gamma_{th} = 15$)
    \end{minipage}%
    \caption{\small PFL-selected neighbors (SNs) as a function of transmission error probabilities to the target client (TC) across three different cases.}
\label{Fig:PFL_heatmap}
\end{figure*}

\noindent 
Before presenting the main theorem, we first establish the following lemmas, with their proofs provided in Appendix~\ref{AppendixC}.
\begin{lemma}\label{lemma1}
    The model parameter update difference of the target client $n$ is calculated as 
    \begin{equation}
    \begin{aligned}
        \Delta\omega_{n}^{(t)} &= \omega_{n}^{(t+1)}-\omega_{n}^{(t)} \\
        &= - \alpha\eta E g_{n}^{(t)} + (1-\alpha)\sum_{m}\pi_{nm}(\omega_{m}^{(t)}-\omega_{n}^{(t)}).
    \end{aligned}
    \end{equation}
\end{lemma}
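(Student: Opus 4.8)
The plan is to expand the aggregation rule in Eq.~\eqref{eq:5} after folding the $E$ inner gradient-descent steps into a single averaged-gradient term, and then to simplify using the simplex constraint $\sum_{m}\pi_{nm} = 1$ together with $\alpha \in [0,1]$. The result is a purely algebraic rearrangement, so the main work is bookkeeping rather than estimation.

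First I would telescope the local training. By Eq.~\eqref{eq:2}, the $E$ inner SGD updates give $\omega_{n}^{(t,E)} = \omega_{n}^{(t,0)} - \eta \sum_{j=0}^{E-1}\nabla f_{n}(\omega_{n}^{(t,j)})$. Introducing the averaged local gradient $g_{n}^{(t)} := \frac{1}{E}\sum_{j=0}^{E-1}\nabla f_{n}(\omega_{n}^{(t,j)})$, the net effect of the $E$ steps is the single move $\omega_{n}^{(t)} \mapsto \omega_{n}^{(t)} - \eta E g_{n}^{(t)}$. This definitional step is the crux of the argument: its only purpose is to package the $E$ stochastic steps into one aggregated gradient so that the update reads as a single displacement, and it is what makes the factor $E$ appear explicitly.

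Next I would substitute the locally trained model into the $\alpha$-weighted self-term of the aggregation. Following Algorithm~\ref{alg:algorithm-2}, the target client combines its own locally trained model with the weighted neighbor models, so that $\omega_{n}^{(t+1)} = \alpha\,(\omega_{n}^{(t)} - \eta E g_{n}^{(t)}) + (1-\alpha)\sum_{m}\pi_{nm}\omega_{m}^{(t)}$. Subtracting $\omega_{n}^{(t)}$ and grouping the self terms yields $\Delta\omega_{n}^{(t)} = -\alpha\eta E g_{n}^{(t)} + (\alpha-1)\omega_{n}^{(t)} + (1-\alpha)\sum_{m}\pi_{nm}\omega_{m}^{(t)}$.

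Finally I would use $\alpha - 1 = -(1-\alpha)$ and the simplex identity to rewrite $(\alpha-1)\omega_{n}^{(t)} = -(1-\alpha)\sum_{m}\pi_{nm}\omega_{n}^{(t)}$, which merges with the neighbor sum into $(1-\alpha)\sum_{m}\pi_{nm}(\omega_{m}^{(t)} - \omega_{n}^{(t)})$, giving exactly the claimed expression. There is no real obstacle beyond care with conventions: the local training must sit inside the $\alpha$ self-term (otherwise the gradient term would not inherit the $\alpha$ factor the lemma requires), and $g_{n}^{(t)}$ must be the averaged rather than the summed gradient so that the explicit factor $E$ is correct.
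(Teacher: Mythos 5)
Your proposal is correct and follows essentially the same route as the paper's own proof: telescope the $E$ local gradient steps into $\omega_{n}^{(t)} - \eta E g_{n}^{(t)}$ with the averaged gradient $g_{n}^{(t)}$, substitute this into the $\alpha$ self-term of the aggregation rule, and use $\sum_{m}\pi_{nm}=1$ to fold $-(1-\alpha)\omega_{n}^{(t)}$ into the neighbor sum. The only difference is cosmetic (the paper adds and subtracts $(1-\alpha)\omega_{n}^{(t)}$ before regrouping, while you subtract $\omega_{n}^{(t)}$ directly), and your remarks on the conventions, that the trained model sits inside the $\alpha$ term and that $g_{n}^{(t)}$ is the average rather than the sum, match the paper exactly.
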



\begin{lemma}\label{lemma2}
    Under Assumption 4, for $\gamma = \frac{\alpha\eta E}{2} - \frac{1 - \alpha}{2\alpha\eta E}$, $\mathbb{E} \left [ \langle \nabla f_{n}(\omega_{n}^{(t)}, \Delta\omega_{n}^{(t)})\rangle \right ]$ is bounded by

    \begin{equation}\label{eq:26}
        \begin{aligned}
            \mathbb{E} \left [ \langle \nabla f_{n}(\omega_{n}^{(t)}), \Delta\omega_{n}^{(t)}\rangle \right ] &\leq -\gamma \mathbb{E}\left[ ||\nabla f_{n}(\omega_{n}^{t})||^{2}\right]  \\
            & +\frac{\alpha\eta E}{2} \mathbb{E}\left[ ||\nabla f_{n}(\omega_{n}^{t}) - g_{n}^{(t)}||^{2}\right] \\
            &+ \frac{(1-\alpha)\alpha\eta E C}{2}.
        \end{aligned}
    \end{equation}
\end{lemma}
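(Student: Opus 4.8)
The plan is to substitute the closed-form expression for $\Delta\omega_n^{(t)}$ from Lemma~\ref{lemma1} into the inner product and split it into two pieces according to the two additive terms of the update. Writing $\nabla f_n \triangleq \nabla f_n(\omega_n^{(t)})$ and $g \triangleq g_n^{(t)}$ for brevity, this gives
\begin{equation*}
\langle \nabla f_n, \Delta\omega_n^{(t)}\rangle = -\alpha\eta E \langle \nabla f_n, g\rangle + (1-\alpha)\sum_m \pi_{nm}\langle \nabla f_n, \omega_m^{(t)}-\omega_n^{(t)}\rangle.
\end{equation*}
Each inner product will then be controlled by a Young-type inequality with a carefully chosen parameter, and the resulting neighbor-distance terms will be bounded using Assumption~4. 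Note that no unbiasedness of $g$ is required: the argument is deterministic algebra followed by a single expectation at the end.

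For the first (gradient-descent) term, I would rewrite $\langle \nabla f_n, g\rangle = \|\nabla f_n\|^2 - \langle \nabla f_n, \nabla f_n - g\rangle$ and apply $\langle a,b\rangle \le \tfrac12\|a\|^2+\tfrac12\|b\|^2$ to the cross term, which yields
\begin{equation*}
-\alpha\eta E\langle \nabla f_n, g\rangle \le -\tfrac{\alpha\eta E}{2}\|\nabla f_n\|^2 + \tfrac{\alpha\eta E}{2}\|\nabla f_n - g\|^2.
\end{equation*}
The Young parameter is fixed to $1$ here precisely so that the residual variance term $\tfrac{\alpha\eta E}{2}\|\nabla f_n - g\|^2$ matches the middle term of the claimed bound.

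For the second (neighbor-aggregation) term, I would apply Young's inequality with a free parameter $c>0$, giving $\langle \nabla f_n, \omega_m^{(t)}-\omega_n^{(t)}\rangle \le \tfrac{c}{2}\|\nabla f_n\|^2 + \tfrac{1}{2c}\|\omega_m^{(t)}-\omega_n^{(t)}\|^2$. Since $\sum_m \pi_{nm}=1$ and, using $\omega_n^{(t)}=\omega_n^{(t,E)}$, Assumption~4 gives $\mathbb{E}\|\omega_m^{(t)}-\omega_n^{(t)}\|^2 \le C$, taking expectations bounds this block by $(1-\alpha)\tfrac{c}{2}\mathbb{E}\|\nabla f_n\|^2 + (1-\alpha)\tfrac{C}{2c}$. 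The key step is to choose $c = \tfrac{1}{\alpha\eta E}$, so that the coefficient of $\mathbb{E}\|\nabla f_n\|^2$ becomes $\tfrac{1-\alpha}{2\alpha\eta E}$ and the constant becomes exactly $\tfrac{(1-\alpha)\alpha\eta E C}{2}$.

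Finally, summing the two blocks and collecting the $\mathbb{E}\|\nabla f_n\|^2$ terms produces the coefficient $-\bigl(\tfrac{\alpha\eta E}{2}-\tfrac{1-\alpha}{2\alpha\eta E}\bigr)=-\gamma$, recovering the stated inequality. The only real obstacle is the simultaneous calibration of the two Young parameters: the first is pinned to $1$ by the required form of the variance term, while the second must be set to $\tfrac{1}{\alpha\eta E}$ to reproduce both the $\gamma$ coefficient and the constant $C$-term at once; everything else is routine algebra together with $\sum_m\pi_{nm}=1$ and Assumption~4.
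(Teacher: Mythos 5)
Your proposal is correct and follows essentially the same route as the paper's proof: the same Lemma~\ref{lemma1} decomposition, the same lower bound $\langle \nabla f_n, g_n^{(t)}\rangle \ge \tfrac12\|\nabla f_n\|^2 - \tfrac12\|\nabla f_n - g_n^{(t)}\|^2$ for the gradient term (the paper gets it via the polarization identity and dropping $\|g_n^{(t)}\|^2 \ge 0$, you via Young on the cross term, which is algebraically equivalent), and the same Young parameter $1/(\alpha\eta E)$ for the neighbor-aggregation term combined with $\sum_m \pi_{nm}=1$ and Assumption~4. Your handling of the identification $\omega_n^{(t)} = \omega_n^{(t,E)}$ when invoking Assumption~4 is also consistent with the paper's conventions.
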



\begin{lemma}\label{lemma3}
    For Eq.~\eqref{eq:26} in Lemma \ref{lemma2}, under Assumptions 1 and 5, $\mathbb{E}\left[ ||\nabla f_{n}(\omega_{n}^{t}) - g_{n}^{(t)}||^{2}\right] $ is bounded as
    \begin{equation}
        \begin{aligned}
            \mathbb{E}\left[ ||\nabla f_{n}(\omega_{n}^{t}) - g_{n}^{(t)}||^{2}\right] \leq L^{2}\eta^{2}E^{2}G^{2}.
        \end{aligned}
    \end{equation}
    
\end{lemma}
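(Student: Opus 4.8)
The plan is to identify $g_n^{(t)}$ with the averaged gradient accumulated over the $E$ local SGD steps of the target client, i.e.\ $g_n^{(t)} = \frac{1}{E}\sum_{j=0}^{E-1}\nabla f_n(\omega_n^{(t,j)})$, so that the one-round displacement $\omega_n^{(t,E)} - \omega_n^{(t,0)} = -\eta E g_n^{(t)}$ is consistent with the local update rule in Eq.~\eqref{eq:2} and with the decomposition in Lemma~\ref{lemma1}. With this identification the deviation to be bounded becomes a telescoped average of gradient differences along the local trajectory,
\begin{equation*}
\nabla f_n(\omega_n^{(t)}) - g_n^{(t)} = \frac{1}{E}\sum_{j=0}^{E-1}\bigl[\nabla f_n(\omega_n^{(t)}) - \nabla f_n(\omega_n^{(t,j)})\bigr].
\end{equation*}

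First I would apply Jensen's inequality (the convexity of $\|\cdot\|^2$) to move the squared norm inside the average, obtaining $\|\nabla f_n(\omega_n^{(t)}) - g_n^{(t)}\|^2 \le \frac{1}{E}\sum_{j=0}^{E-1}\|\nabla f_n(\omega_n^{(t)}) - \nabla f_n(\omega_n^{(t,j)})\|^2$. Next, the $L$-Lipschitz gradient property of Assumption~1 converts each summand into a parameter-space drift, $\|\nabla f_n(\omega_n^{(t)}) - \nabla f_n(\omega_n^{(t,j)})\|^2 \le L^2\|\omega_n^{(t)} - \omega_n^{(t,j)}\|^2$. The remaining task is to control this local drift: writing $\omega_n^{(t)} - \omega_n^{(t,j)} = -\eta\sum_{l=j}^{E-1}\nabla f_n(\omega_n^{(t,l)})$ as a sum of at most $E$ gradient steps and combining Cauchy--Schwarz with the bounded-gradient Assumption~5 ($\mathbb{E}\|\nabla f_n\|^2 \le G^2$) yields $\mathbb{E}\|\omega_n^{(t)} - \omega_n^{(t,j)}\|^2 \le \eta^2 E^2 G^2$.

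Finally I would substitute this drift bound back so that each of the $E$ summands is at most $L^2\eta^2 E^2 G^2$; taking the average over $j$ leaves the constant unchanged and produces exactly $\mathbb{E}\|\nabla f_n(\omega_n^{(t)}) - g_n^{(t)}\|^2 \le L^2\eta^2 E^2 G^2$, as claimed. I expect the main obstacle to be bookkeeping rather than a deep estimate: one must keep the reference point $\omega_n^{(t)}$ and the local iterates $\omega_n^{(t,j)}$ indexed consistently so that the deviation is genuinely an average of per-step gradient differences, and one must decide how stochasticity enters. Since the target bound carries no variance term, the cleanest route is to apply Assumption~5 directly to the per-step gradient second moments rather than splitting off SGD noise via Assumption~2; the loose estimate $(E-j) \le E$ in the cumulative-drift step is precisely what collapses the constant to $L^2\eta^2 E^2 G^2$, and a sharper summation of $(E-j)^2$ would only tighten it.
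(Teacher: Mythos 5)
Your proof is correct and follows essentially the same route as the paper's own: write $\nabla f_{n}(\omega_{n}^{(t)}) - g_{n}^{(t)}$ as an average of gradient differences along the local trajectory, apply the $L$-Lipschitz gradient assumption, and bound the parameter drift by $\eta E G$ via the bounded-gradient assumption, yielding $L^{2}\eta^{2}E^{2}G^{2}$. The only differences are cosmetic and in your favor: you correctly invoke Jensen's inequality where the paper asserts an exact equality, you control the drift via Cauchy--Schwarz in expectation rather than the paper's pointwise use of $\|\nabla f_{n}\| \leq G$, and your anchoring of $\omega_{n}^{(t)}$ at the final local iterate (summing steps $j$ through $E-1$) instead of the initial one (steps $0$ through $j-1$) is an equally valid reading of the paper's notation and leads to the identical bound.
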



\begin{lemma}\label{lemma4}
    According to assumptions 4 and 5, we have:
    \begin{equation}
        \mathbb{E} \left [ ||\Delta\omega_{n}^{(t)}||^{2} \right ] \leq (\alpha\eta EG + (1-\alpha)\sqrt{C})^{2}.
    \end{equation}
\end{lemma}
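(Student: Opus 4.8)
The plan is to start from the closed-form expression for $\Delta\omega_{n}^{(t)}$ established in Lemma~\ref{lemma1}, namely $\Delta\omega_{n}^{(t)} = -\alpha\eta E g_{n}^{(t)} + (1-\alpha)\sum_{m}\pi_{nm}(\omega_{m}^{(t)}-\omega_{n}^{(t)})$, and to control its squared norm by splitting it into the ``local gradient'' contribution and the ``neighbor drift'' contribution. First I would apply the triangle inequality to obtain $||\Delta\omega_{n}^{(t)}|| \leq \alpha\eta E\,||g_{n}^{(t)}|| + (1-\alpha)\,||\sum_{m}\pi_{nm}(\omega_{m}^{(t)}-\omega_{n}^{(t)})||$. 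Since the weights are nonnegative and satisfy $\sum_{m}\pi_{nm}=1$, the second norm is the norm of a convex combination, so Jensen's inequality (convexity of $||\cdot||$) gives $||\sum_{m}\pi_{nm}(\omega_{m}^{(t)}-\omega_{n}^{(t)})|| \leq \sum_{m}\pi_{nm}||\omega_{m}^{(t)}-\omega_{n}^{(t)}||$.

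Next I would square the resulting scalar bound and take expectations. Writing $X \triangleq \alpha\eta E\,||g_{n}^{(t)}||$ and $Y \triangleq (1-\alpha)\sum_{m}\pi_{nm}||\omega_{m}^{(t)}-\omega_{n}^{(t)}||$, both nonnegative, the key is to handle the cross term in $\mathbb{E}[(X+Y)^2]=\mathbb{E}[X^2]+2\mathbb{E}[XY]+\mathbb{E}[Y^2]$ via Cauchy--Schwarz, $\mathbb{E}[XY]\leq\sqrt{\mathbb{E}[X^2]}\,\sqrt{\mathbb{E}[Y^2]}$, which yields the clean form $\mathbb{E}[||\Delta\omega_{n}^{(t)}||^2]\leq(\sqrt{\mathbb{E}[X^2]}+\sqrt{\mathbb{E}[Y^2]})^2$. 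This reduces the problem to bounding the two second moments separately. For $X$, Assumption~5 (with $g_{n}^{(t)}$ read as the averaged local gradient) gives $\mathbb{E}[||g_{n}^{(t)}||^2]\leq G^2$, hence $\sqrt{\mathbb{E}[X^2]}\leq\alpha\eta E G$. For $Y$, a second application of Jensen, $(\sum_{m}\pi_{nm}a_m)^2\leq\sum_{m}\pi_{nm}a_m^2$, together with Assumption~4 gives $\mathbb{E}[Y^2]\leq(1-\alpha)^2\sum_{m}\pi_{nm}\,\mathbb{E}[||\omega_{m}^{(t)}-\omega_{n}^{(t)}||^2]\leq(1-\alpha)^2 C$, using $\sum_{m}\pi_{nm}=1$, so $\sqrt{\mathbb{E}[Y^2]}\leq(1-\alpha)\sqrt{C}$. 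Substituting delivers exactly $\mathbb{E}[||\Delta\omega_{n}^{(t)}||^2]\leq(\alpha\eta E G+(1-\alpha)\sqrt{C})^2$.

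I expect the only delicate point to be the cross term: a naive use of $(a+b)^2\leq 2a^2+2b^2$ would produce the weaker constant $2(\alpha\eta E G)^2+2(1-\alpha)^2 C$ rather than the tight square-of-sum form, so the Cauchy--Schwarz step (equivalently, Minkowski's inequality for the $L^2$ norms of $X$ and $Y$) is precisely what makes the stated bound sharp. A secondary point to pin down is the exact definition of $g_{n}^{(t)}$, so that the bound $\mathbb{E}[||g_{n}^{(t)}||^2]\leq G^2$ follows cleanly from Assumption~5; everything else is a routine chain of the triangle and Jensen inequalities applied to the decomposition from Lemma~\ref{lemma1}.
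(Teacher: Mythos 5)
Your proof is correct and takes essentially the same approach as the paper: both start from the Lemma 1 decomposition of $\Delta\omega_{n}^{(t)}$, bound the gradient term via Assumption 5 and the neighbor-drift term via Assumption 4, and, crucially, treat the cross term with Cauchy--Schwarz (your Minkowski step) so that the bound assembles into the exact square $(\alpha\eta EG + (1-\alpha)\sqrt{C})^{2}$ rather than the loose $2a^{2}+2b^{2}$ form. The only differences are bookkeeping: the paper expands the vector square $||\Delta\omega_{n}^{(t)}||^{2}$ directly, bounding the inner-product cross term by Cauchy--Schwarz and the convex-combination term by a double-sum expansion with pairwise Cauchy--Schwarz over $\mathbb{E}[v_{m}^{T}v_{k}]$, whereas you first pass to scalar norms via the triangle inequality and then apply Jensen twice --- the same ideas arranged slightly differently, with the same reliance on reading Assumption 5 as applying to the averaged local gradient $g_{n}^{(t)}$.
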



\begin{theorem}
With the local model update rule defined in Eq.~\eqref{eq:12} for the selected neighbors $m \in [\boldsymbol{M}_{n}]$, the target client's model update as outlined in Eq.~\eqref{eq:2}, and the aggregation of the target client's model according to Eq.~\eqref{eq:5}, and considering Assumptions 1, 2, 4 and 5,  we can show that the pFedWN algorithm achieves $
  \frac{1}{T} \sum_{t=1}^{T} \mathbb{E}[||\nabla f_{n}(\omega_{n}^{(t)})||^{2}] \leq \mathcal{O}(\frac{1}{T})+C',
$  which converges at a rate of $\mathcal{O}(\frac{1}{T})$ for non-convex loss function.
\end{theorem}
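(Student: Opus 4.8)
The plan is to run the standard non-convex descent argument: convert the $L$-smoothness of $f_n$ into a one-round decrease of the objective, bound each term of the resulting inequality using the four supporting lemmas, and then telescope over the $T$ communication rounds. Since Lemmas~\ref{lemma1}--\ref{lemma4} already control every quantity that appears, the proof reduces to assembling them and carefully tracking which contributions decay with $T$ and which persist as the residual constant $C'$.

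First I would invoke Assumption~1 ($L$-Lipschitz gradients), which gives the quadratic upper bound (descent lemma)
\begin{equation*}
f_{n}(\omega_{n}^{(t+1)}) \leq f_{n}(\omega_{n}^{(t)}) + \langle \nabla f_{n}(\omega_{n}^{(t)}), \Delta\omega_{n}^{(t)} \rangle + \frac{L}{2}||\Delta\omega_{n}^{(t)}||^{2},
\end{equation*}
where $\Delta\omega_{n}^{(t)}$ is precisely the increment identified in Lemma~\ref{lemma1}. Taking expectations over the stochastic gradients and substituting the inner-product bound from Lemma~\ref{lemma2} together with the bound on $\mathbb{E}[||\Delta\omega_{n}^{(t)}||^{2}]$ from Lemma~\ref{lemma4} yields
\begin{equation*}
\mathbb{E}[f_{n}(\omega_{n}^{(t+1)})] \leq \mathbb{E}[f_{n}(\omega_{n}^{(t)})] - \gamma\, \mathbb{E}[||\nabla f_{n}(\omega_{n}^{(t)})||^{2}] + R,
\end{equation*}
where the residual $R$ gathers the three non-negative contributions $\frac{\alpha\eta E}{2}\mathbb{E}[||\nabla f_{n}(\omega_{n}^{t}) - g_{n}^{(t)}||^{2}]$, the neighbor-drift term $\frac{(1-\alpha)\alpha\eta E C}{2}$ from Lemma~\ref{lemma2}, and $\frac{L}{2}(\alpha\eta E G + (1-\alpha)\sqrt{C})^{2}$ from Lemma~\ref{lemma4}. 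Applying Lemma~\ref{lemma3} replaces the first term by $\frac{\alpha\eta E}{2}L^{2}\eta^{2}E^{2}G^{2}$, so that $R$ becomes a fixed constant independent of $t$.

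Next I would isolate the gradient-norm term as $\gamma\,\mathbb{E}[||\nabla f_{n}(\omega_{n}^{(t)})||^{2}] \leq \mathbb{E}[f_{n}(\omega_{n}^{(t)})] - \mathbb{E}[f_{n}(\omega_{n}^{(t+1)})] + R$ and sum over $t = 1, \dots, T$. The function-value differences telescope to $f_{n}(\omega_{n}^{(1)}) - \mathbb{E}[f_{n}(\omega_{n}^{(T+1)})] \leq f_{n}(\omega_{n}^{(1)}) - f_{n}^{*}$, using any finite lower bound $f_{n}^{*}$ on the loss. Dividing by $\gamma T$ then gives
\begin{equation*}
\frac{1}{T}\sum_{t=1}^{T}\mathbb{E}[||\nabla f_{n}(\omega_{n}^{(t)})||^{2}] \leq \frac{f_{n}(\omega_{n}^{(1)}) - f_{n}^{*}}{\gamma T} + \frac{R}{\gamma},
\end{equation*}
whose first term is $\mathcal{O}(1/T)$ and whose second term is the stated constant $C' = R/\gamma$, establishing the claim.

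The main obstacle is twofold. First, one must ensure $\gamma > 0$ so that the gradient term carries the correct sign; from Lemma~\ref{lemma2} this amounts to $(\alpha\eta E)^{2} > 1-\alpha$, i.e. selecting the step size $\eta$, the number of local epochs $E$, and the mixing weight $\alpha$ consistently — the non-convex counterpart of the contraction condition $\alpha^{2}(2-\alpha)(1-\eta\mu)^{E} \leq 1$ used in the strongly convex case. Second, and more conceptually, the residual $R$ does \emph{not} vanish as $T \to \infty$: the neighbor-model drift (controlled by $C$ in Assumption~4) and the local-step gradient drift (controlled by $G$ in Assumption~5 through Lemma~\ref{lemma3}) inject a persistent bias, which is exactly why the bound guarantees convergence to a neighborhood of a stationary point rather than to zero. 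Attributing this term to $C'$ rather than attempting to drive it to zero is the key subtlety, and it matches the standard non-convex guarantees for decentralized and federated optimization.
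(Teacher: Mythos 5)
Your proof follows exactly the same route as the paper's: the $L$-smoothness descent lemma, substitution of the inner-product bound (Lemma~2) and the increment bound (Lemma~4), Lemma~3 to replace the gradient-drift term by $L^{2}\eta^{2}E^{2}G^{2}$, then telescoping against the lower bound $f_{n}^{*}$ and dividing by $\gamma T$. Your write-up is in fact slightly more careful than the paper's: you explicitly flag the requirement $\gamma > 0$ (i.e.\ $(\alpha\eta E)^{2} > 1-\alpha$), which the paper never verifies but which is needed for the division by $\gamma$ to preserve the inequality, and your $C' = R/\gamma$ is correctly a $T$-independent constant, whereas the paper's displayed definition of $C'$ retains spurious factors of $T$ (and a sign slip in $f_{n}^{*} - f_{n}(\omega_{n}^{(1)})$) that should have cancelled after dividing by $T$.
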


\begin{proof}
    The proof is provided in Appendix~\ref{sec:appendix-thm-2}.
\end{proof}

\section{Numerical Evaluation}
\label{sec:results}
In this section, we present numerical results on the performance of the pFedWN algorithm. The implementation setup is separated into a wireless communication model and a learning model setup. Then, we compare our pFedWN algorithm with several FL/PFL baselines.

\subsection{Evaluations Setup}
\textbf{D2D Wireless Communication Setup.} All neighbors and target clients are placed according to the Poisson Point Process (PPP) in the $50 \times 50$ $\text{m}^2$ urban communication area. The neighbor and target client are chosen from distributed clients to form a main link, and other clients act as interferer nodes for the target client. As described, all clients experience a Rayleigh (NLoS) channel in each communication session. Communication model parameters are assigned based on the ISM bands, which are summarized in Table \ref{tab:sim_parameters}. We analyze three different target clients (Case 1, Case 2, and Case 3) in the wireless network as shown in Fig. \ref{Fig:PFL_heatmap}. For example, the Case 1 PFL setting is illustrated in Fig.~\ref{Fig:PFL_heatmap} (a), where the target client (represented by a star) has $10$ neighbors. Based on the placement of the neighbor clients and the channel conditions, the transmission error probabilities are calculated, and the PFL neighbors (nodes with IDs $1, 3, 5$, and $10$) are selected.

\begin{table}[t]
    \centering
    \caption{Communication Model Parameters}
    \label{tab:sim_parameters}
    \begin{tabular}{lc}
        \toprule
        \textbf{Parameter} & \textbf{Value} \\
        \midrule
        Simulation Area & $50 \times 50 \, \text{m}^2$ \\
        Number of Sub-channels ($|\boldsymbol{F}|$) & $14$ \\
        Rayleigh Fading Factor ($\Gamma$) & $2$ \\
        Path Loss Exponent ($\alpha_s$) & $3$ \\
        Reference Distance ($d_0$) & $1$ m \\
        Transmission Power ($\boldsymbol{P}$) & $[0.2]_{1 \times \boldsymbol{S}}$ W \\
        Frequency Operation ($f$) & $2.4$ GHz \\
        Boltzmann Constant ($\kappa$) & $1.38 \times 10^{-23}$ J/K \\
        Noise Temperature ($T$) & $290$ K \\
        Bandwidth ($W$) & $100$ MHz \\
        Channel Fading Threshold ($\boldsymbol{\beta}$) & $[2]_{1 \times \boldsymbol{S}}$ \\
        \bottomrule
    \end{tabular}
    \label{parameters}
\end{table}

\begin{figure*}[t]
    \centering
    \begin{minipage}{.33\linewidth}
    \includegraphics[width=6cm, trim={300 10 230 50}, clip]{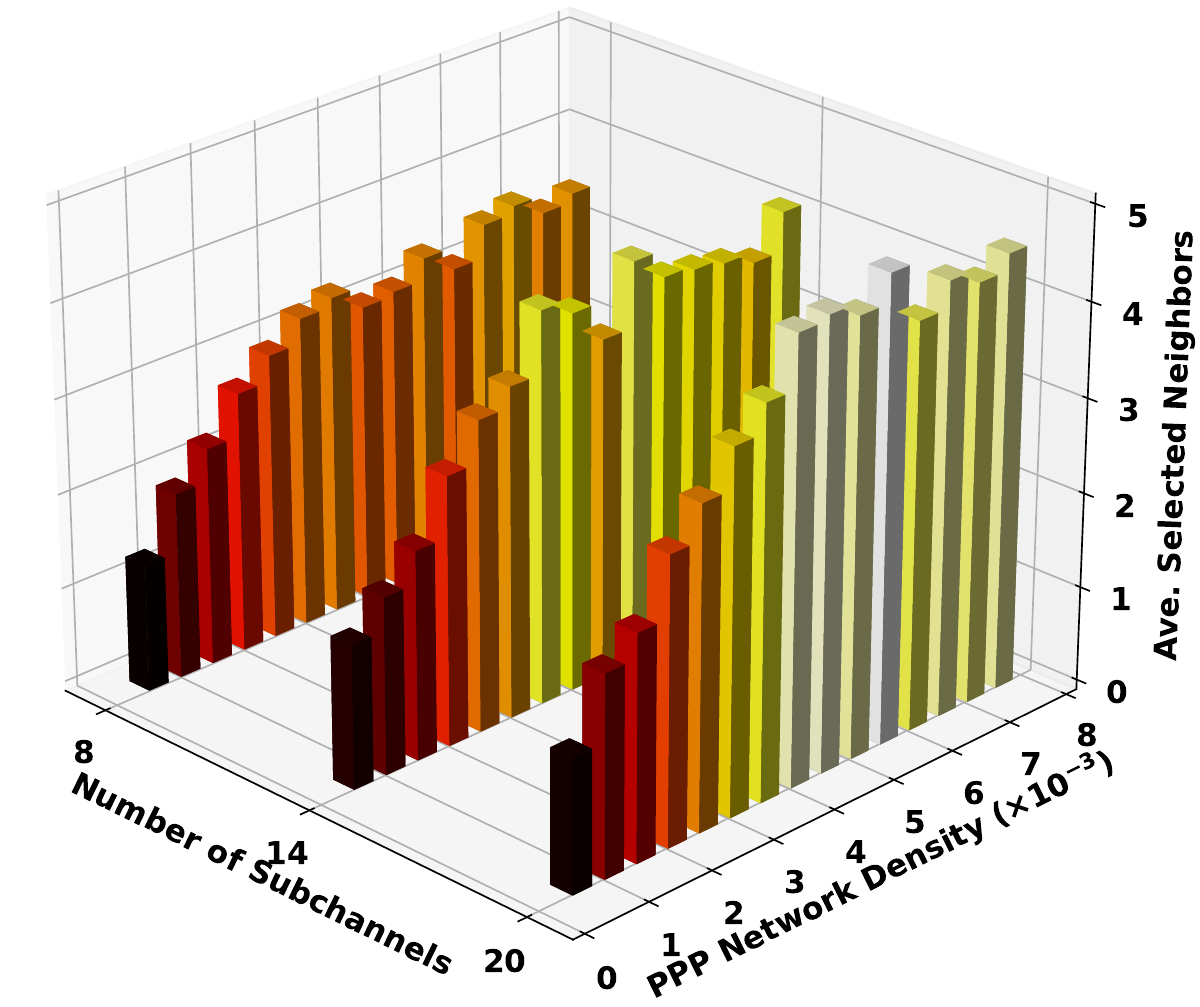}
        \centering (a) SINR Threshold $\gamma_{th}$ = 5
    \end{minipage}%
    \begin{minipage}{.33\linewidth}
   \includegraphics[width=6cm, trim={300 10 230 50}, clip]{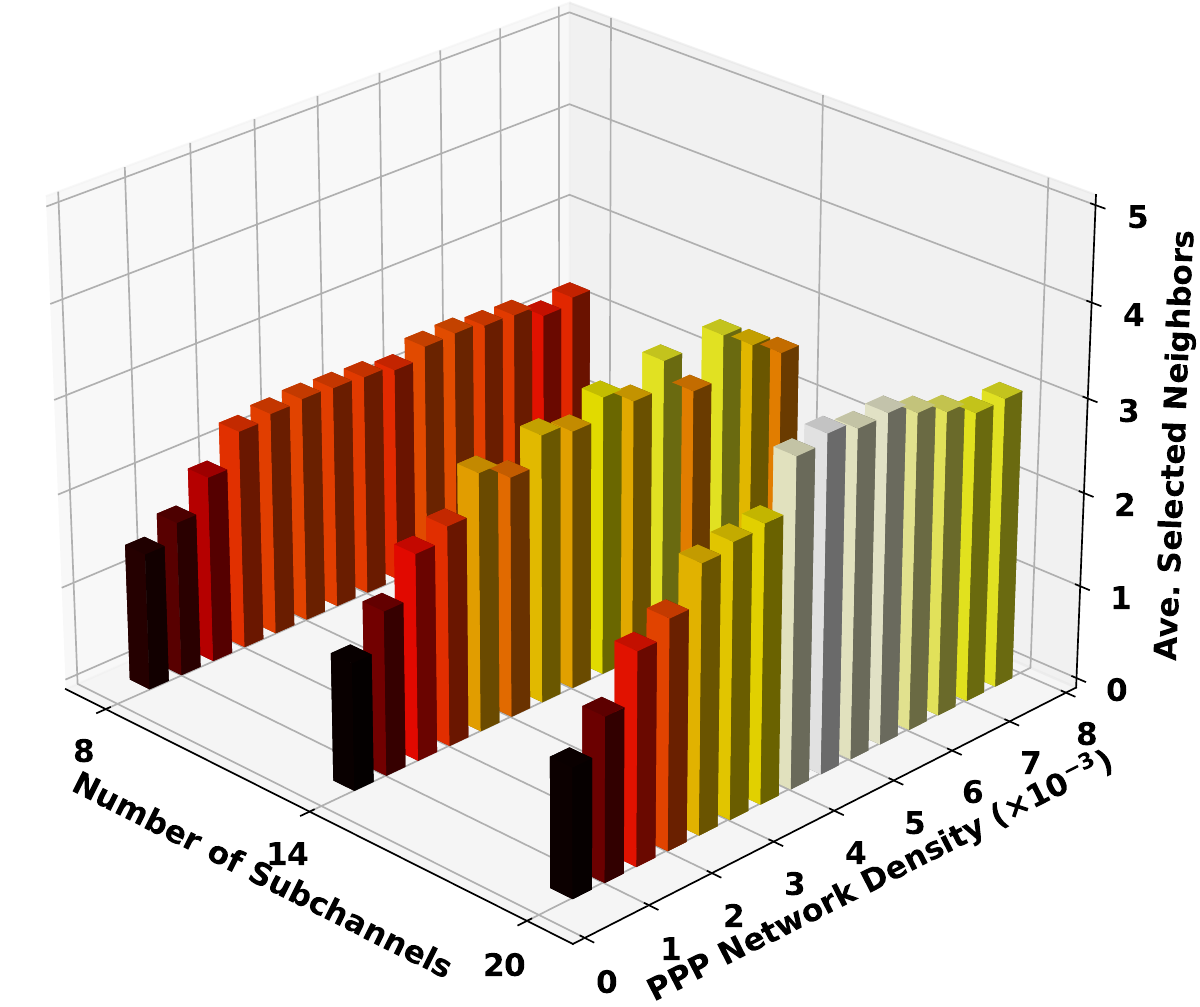}
        \centering (b) SINR Threshold $\gamma_{th}$ = 10
    \end{minipage}%
    \begin{minipage}{.33\linewidth}
    \includegraphics[width=6cm, trim={300 10 230 50}, clip]{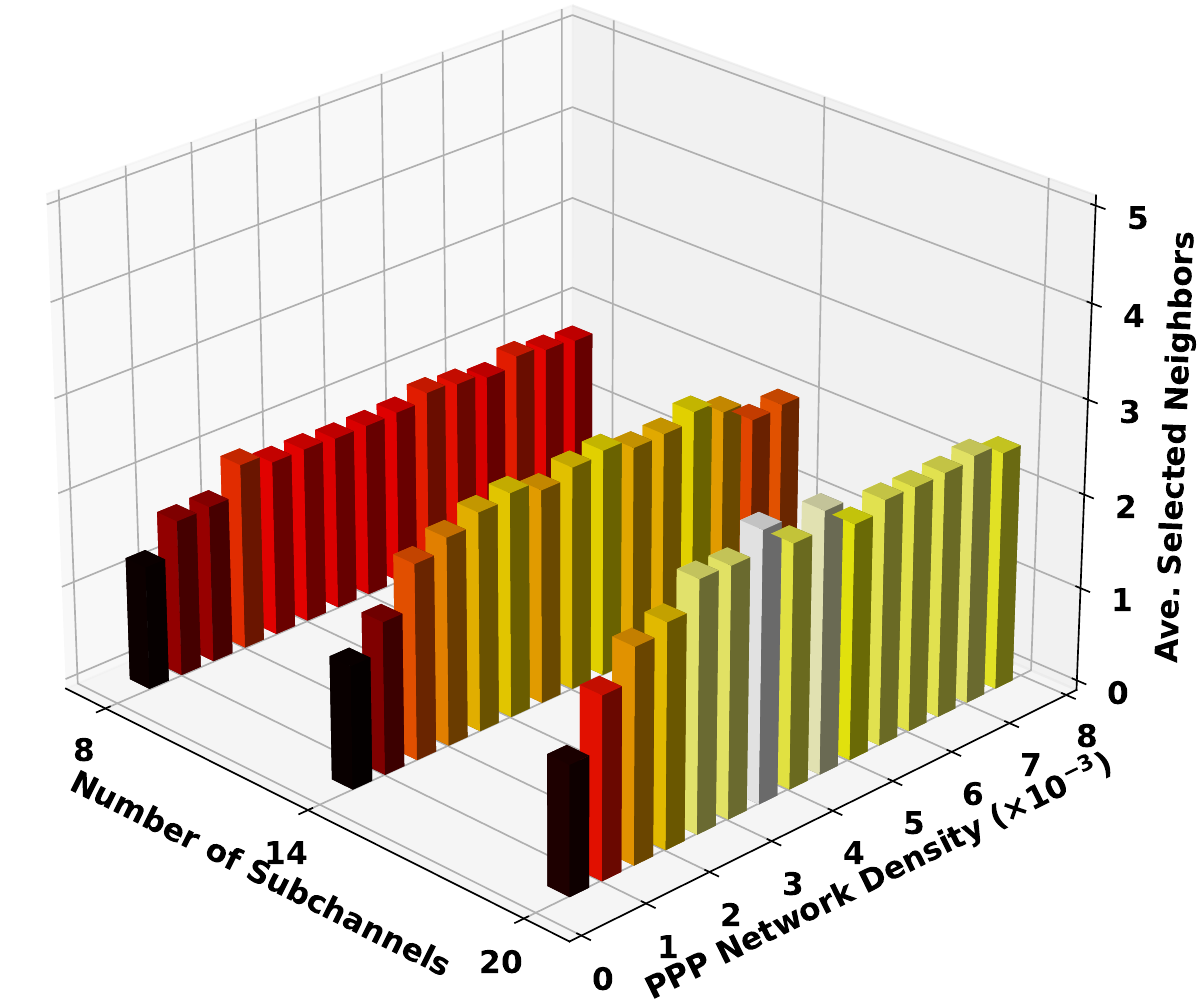}
        \centering (c) SINR Threshold $\gamma_{th}$ = 15
    \end{minipage}%
    \caption{\small Average number of selected neighbors as a function of the number of subchannels $|\boldsymbol{F}|$, SINR thresholds $\gamma_{th}$, and PPP network density.}
\label{Fig:3D_neighbor_selection}
\end{figure*}

\begin{figure}[t]%
    \centering
    \subfloat[\centering $|\boldsymbol{M}_n|$ vs. $|\boldsymbol{G}_n|$ for different $\epsilon$]{{\includegraphics[width=\linewidth, trim={80 5 80 50}, clip]{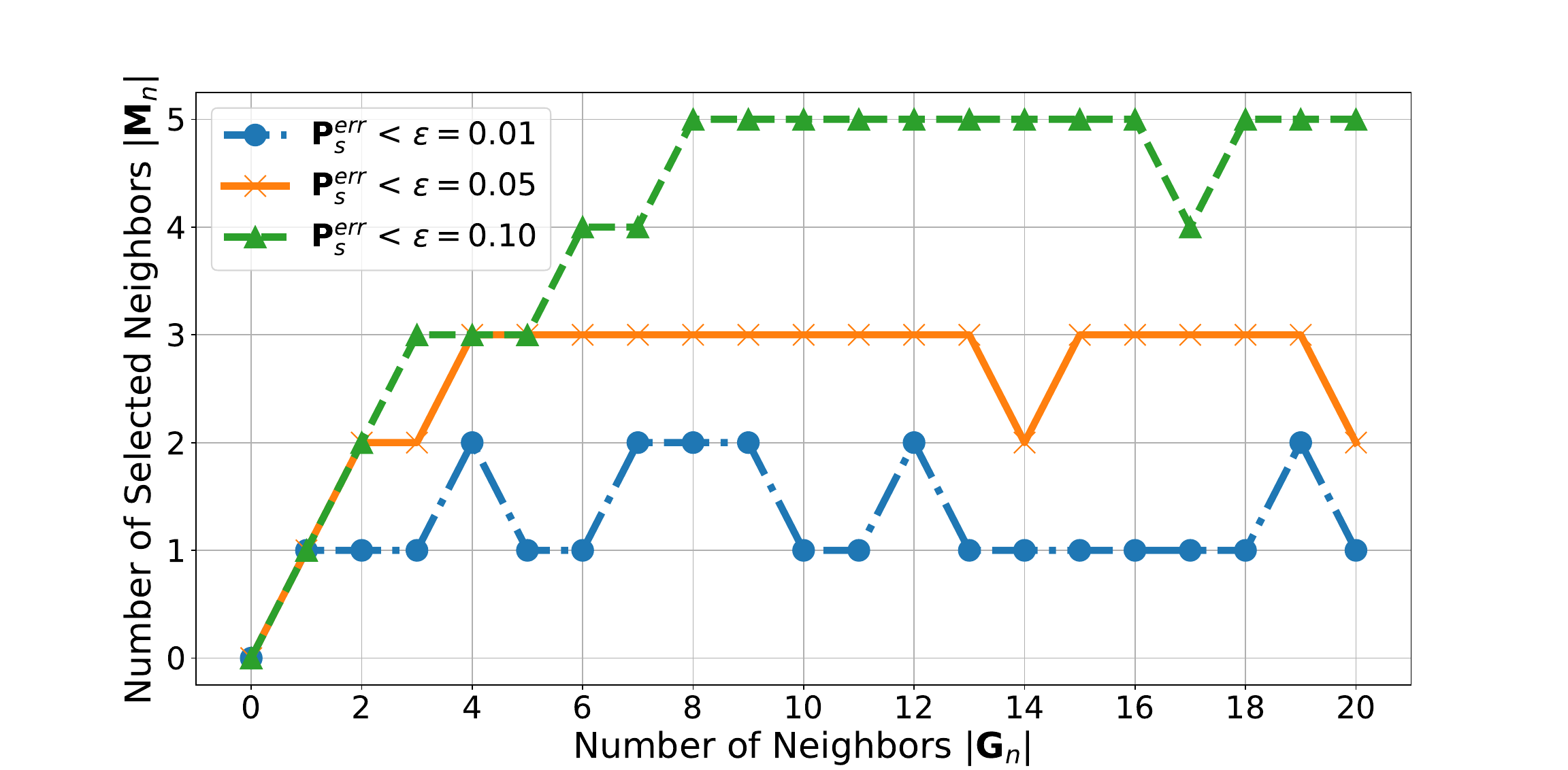} }}%
     \qquad
    \subfloat[\centering $|\boldsymbol{M}_n|$ vs. $|\boldsymbol{G}_n|$ for different SINR thresholds $\gamma_{th}$]{{\includegraphics[width=\linewidth, trim={80 5 80 50}, clip]{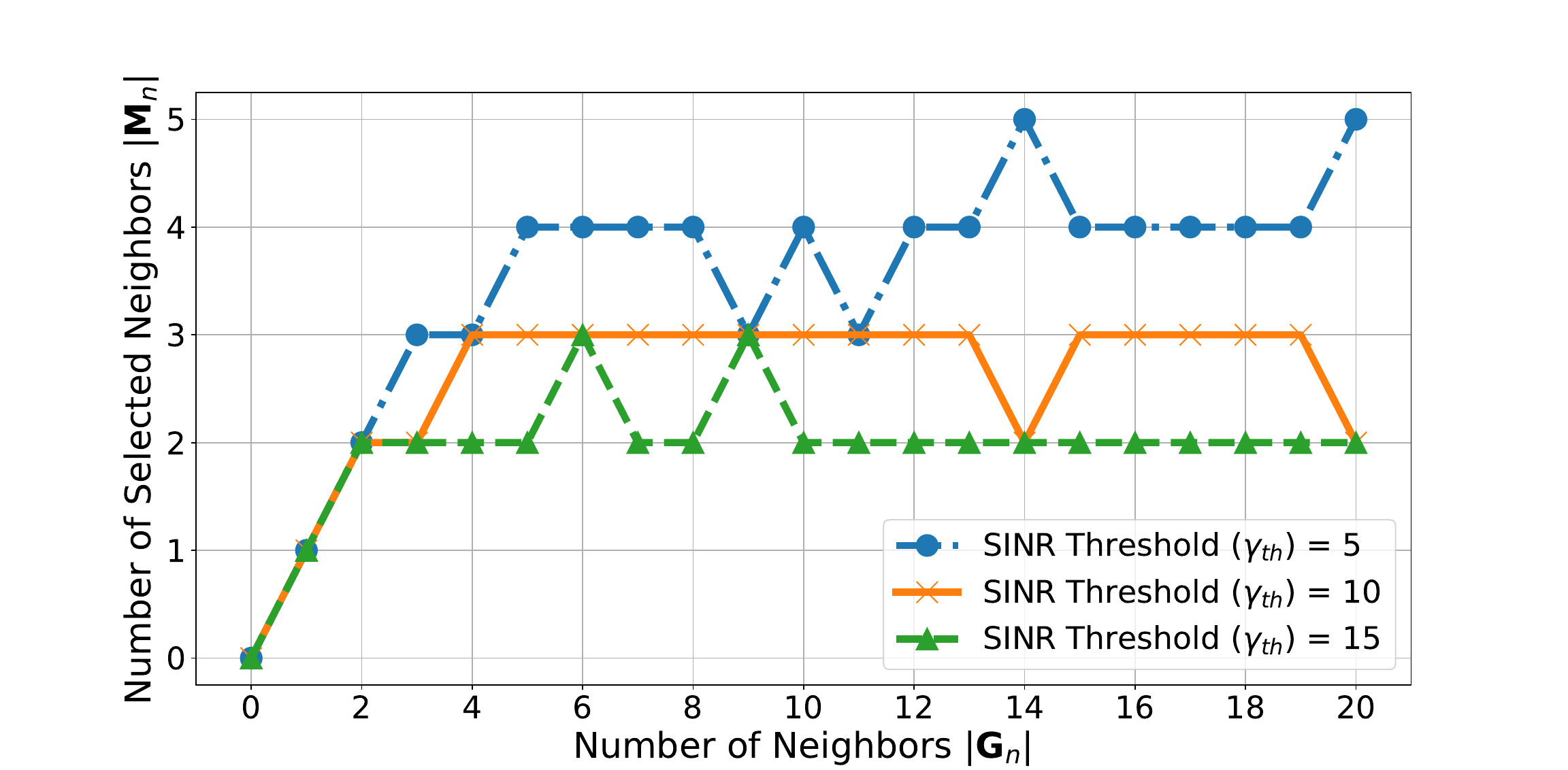} }}
    \caption{\small PFL neighbor selection as the total number of clients within the coverage area increases, evaluated as a function of varying error thresholds ($\epsilon$) and SINR thresholds ($\gamma_{th}$).}%
    \label{Fig:PFL_neighbor_selection}%
\end{figure}

\textbf{D2D PFL Setup.} We evaluate our pFedWN model with different datasets such as image classification tasks with the CIFAR-10, CIFAR-100 \cite{alex2009learning}, and also handwritten character recognition (MNIST) \cite{cohen2017emnist}. 
Since we focus on data heterogeneity, the clients have non-IID datasets following the Dirichlet distribution \cite{lin2020ensemble} with $\alpha_{d} = 0.1$. The classes per client are randomly assigned so that the clients contain a different number of classes and total data samples as their local data in different scenarios, even if data comes from the same dataset (CIFAR-10, CIFAR-100, or MNIST).
For example, each client holds a random number of classes between $1$ to $10$ for the CIFAR-10 dataset. Fig. \ref{Fig:distribution} presents a heatmap of the CIFAR-10 dataset distribution across different clients and three cases. 

\begin{figure}[t]
\centering
\includegraphics[width=\linewidth, trim={20 5 70 5}, clip]{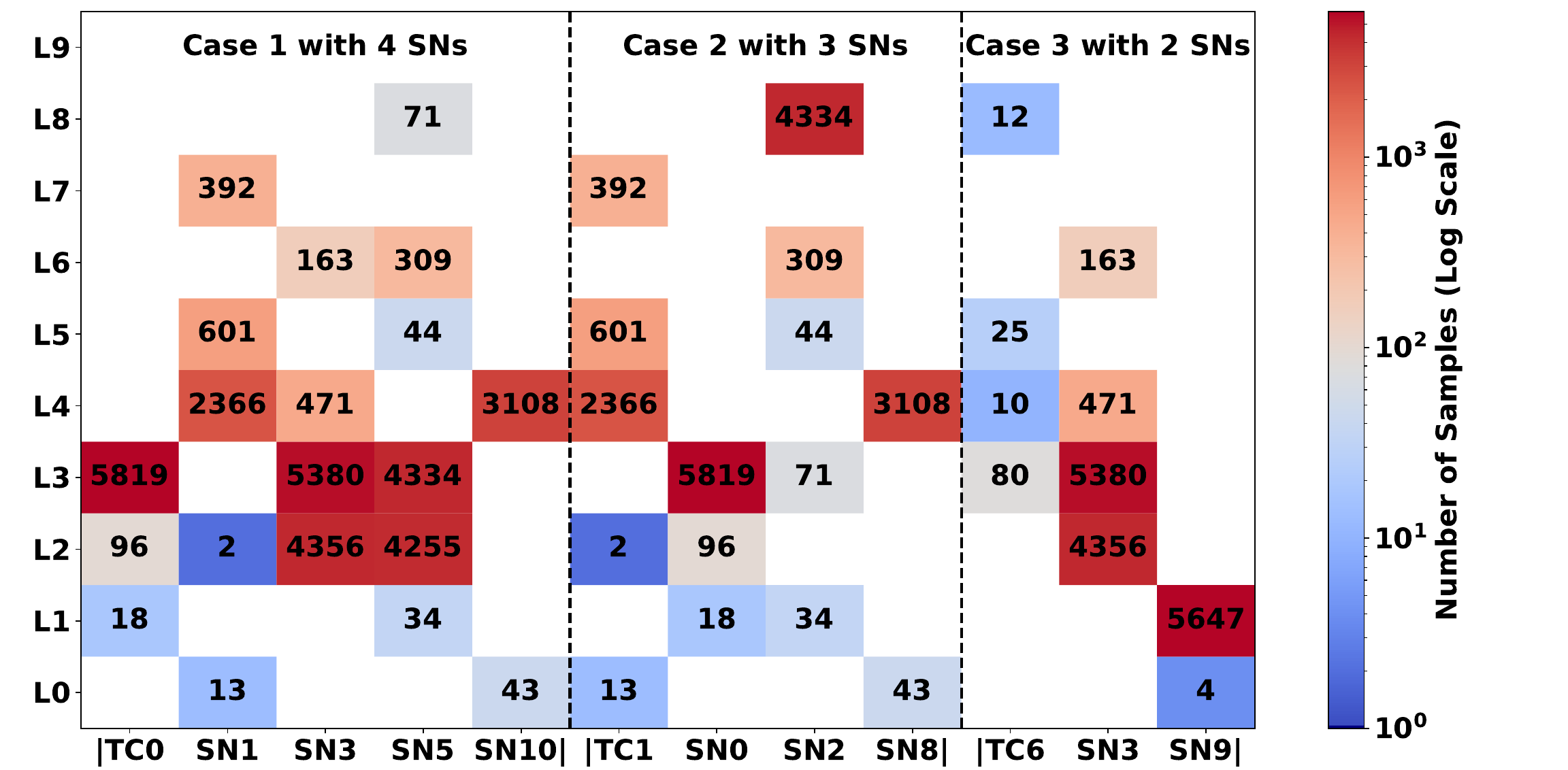}
\caption{\small CIFAR-10 data distribution heatmap for 3 cases with different SNs as shown in Fig. \ref{Fig:PFL_heatmap} (L0 to L9: different labels).}
\label{Fig:distribution}
\end{figure}

For the neural networks, we implement a 3-layer CNN for MNIST and ResNet18 for both CIFAR-10 and CIFAR-100 datasets~\cite{zhang2023pfllib}. For each client, we randomly split their local data with a ratio of $75\%-25\%$ train-test. In our training, we have $100$ communication rounds, and the local model parameters are updated with $1$ epoch of local training. For all baseline methods, only neighbors selected based on the communication selection method join the training process. 

\textbf{Baseline methods.} We compare the pFedWN with traditional FL algorithms such as FedAvg \cite{mcmahan2017communication} and FedProx \cite{wang2020tackling}. In addition, we compare pFedWN with state-of-art PFL approaches, such as Per-FedAvg \cite{fallah2020personalized} and FedAMP \cite{huang2021personalized}, as well as a local training baseline. For the target client, we train the model under both 10 and 20-neighbor (defined by the density of clients in the network) scenarios.  



\subsection{Impacts of Channel Conditions on PFL Neighbor Selection} Fig.~\ref{Fig:PFL_neighbor_selection} (a) shows the number of selected neighbors $|\boldsymbol{M}_n|$ from all neighbors $|\boldsymbol{G}_n|$ for a specific target client. In this figure, the number of selected neighbors is rounded by taking the average of 20 iterations for each number of neighbors.
From the results, we note that as the threshold $\epsilon$ increases, the number of selected neighbors increases, as expected. To select the desired neighbors, the path loss is the dominant factor when the number of available neighbors $|\boldsymbol{G}_n|$ is small. However,  as the number of neighbors increases (more than 10 neighbors), the interferer neighbors will become dominant in the area. Therefore, the number of selected neighbors will not increase even though more neighbors are available.


In Fig.~\ref{Fig:PFL_neighbor_selection} (b), the number of selected neighbors $|\boldsymbol{M}_n|$ is shown as the number of neighbors $|\boldsymbol{G}_n|$ increases, and for various SINR thresholds $\gamma_{th}$ and error threshold $\epsilon = 0.05$.  
From the results, we notice that as $\gamma_{th}$ increases, the average number of selected neighbors decreases because the data packets cannot be decoded correctly at the target client, and $\mathbf{P}_{s}^{err}$ increases. 

\begin{table*}[t]
\centering
\caption{\small{Maximum test accuracy (percentage) for different \textbf{target client cases} in a 10-neighbor network (best results in bold).}}
\label{tab:10_acc}
\setlength{\tabcolsep}{6pt}
\renewcommand{\arraystretch}{1.3}
\begin{tabular}{@{}lccccccccc@{}}
\toprule
\multicolumn{1}{c}{\textbf{}} & \multicolumn{3}{c}{Case 1 with 4 SNs ($\gamma_{th} = 5$)} & \multicolumn{3}{c}{Case 2 with 3 SNs ($\gamma_{th} = 10$)} & \multicolumn{3}{c}{Case 3 with 2 SNs ($\gamma_{th} = 15$)} \\ \cmidrule(lr){2-4} \cmidrule(lr){5-7} \cmidrule(lr){8-10}
Method & MNIST & CIFAR-10 & CIFAR-100 & MNIST & CIFAR-10 & CIFAR-100 & MNIST & CIFAR-10 & CIFAR-100 \\
\midrule
Local       & 98.3 & 98.3 & 55.8 & 98.4 & 65.6 & 52.3 & 98.0 & 61.9 & 45.4 \\
FedAvg      & 88.0 & 16.0 & 35.0 & 96.2 & 31.3 & 35.4 & 88.5 & 54.8 & 30.2 \\
FedProx     & 83.4 & 19.2 & 32.8 & 94.3 & 31.2 & 36.1 & 88.1 & 56.2 & 32.3 \\
Per-FedAvg  & 92.0 & 95.0 & 38.2 & 98.1 & 40.6 & 39.7 & 90.8 & 64.4 & 32.2 \\
FedAMP      & 98.4 & 98.3 & 56.6 & 98.3 & 62.3 & 51.6 & 98.1 & 64.9 & 48.5 \\
pFedWN     & \textbf{98.4} & \textbf{98.4} & \textbf{57.2} & \textbf{98.4} & \textbf{68.8} & \textbf{58.8} & \textbf{98.1} & \textbf{65.3} & \textbf{50.2} \\
\bottomrule
\end{tabular}
\end{table*}

Fig. \ref{Fig:3D_neighbor_selection} demonstrates 3D plots for different SINR thresholds $\gamma_{th}$, while the numbers of subchannels $|\boldsymbol{F}|$ and PPP network density varies in each subfigure. Notably, for a PPP network density ranging from $0.5 \times 10^{-3}$ to $7.5 \times 10^{-3}$ within a $50 \times 50 \, \text{m}^2$ communication area, the number of nodes approximately rises from $2$ to $30$. The results show the average number of selected neighbors with error threshold $\epsilon = 0.05$ over $100$ iterations. From the results, we observe that when the number of subchannels $|\boldsymbol{F}|$ increases from $8$ to $20$, the average number of selected neighbors increases due to smaller amounts of interference. Furthermore, by increasing the SINR threshold $\gamma_{th}$ from $5$ (Fig. \ref{Fig:3D_neighbor_selection} (a)) to $15$ (Fig. \ref{Fig:3D_neighbor_selection} (c)), the average number of selected neighbors decreases.
Finally, we note that increasing the PPP network density does not necessarily increase the average number of selected neighbors due to increased interference.

\subsection{Effectiveness of the EM Algorithm and PFL Results}
We implemented the EM algorithm to evaluate its performance in the CIFAR-10 scenario.
Specifically, we use the target client in Case 1 as an example (for the sake of brevity and space limitations), and each of its selected neighbors contained a non-IID data distribution that has a different number of categories and total data samples, which can be found in Fig.  \ref{Fig:distribution}. As illustrated in Fig. \ref{Fig:weights}, the EM algorithm effectively assigned more weights to neighbor 5, which has similar data distributions, but less to neighbor 10, which has a totally different data distribution compared with the target client in Case 1. Therefore, our results demonstrate that the EM algorithm can effectively capture distributional similarities for CIFAR-10 datasets with non-IID and unbalanced datasets. 

\begin{figure}[t]%
    \centering {{\includegraphics[width=\linewidth, trim={40 0 110 50}, clip]{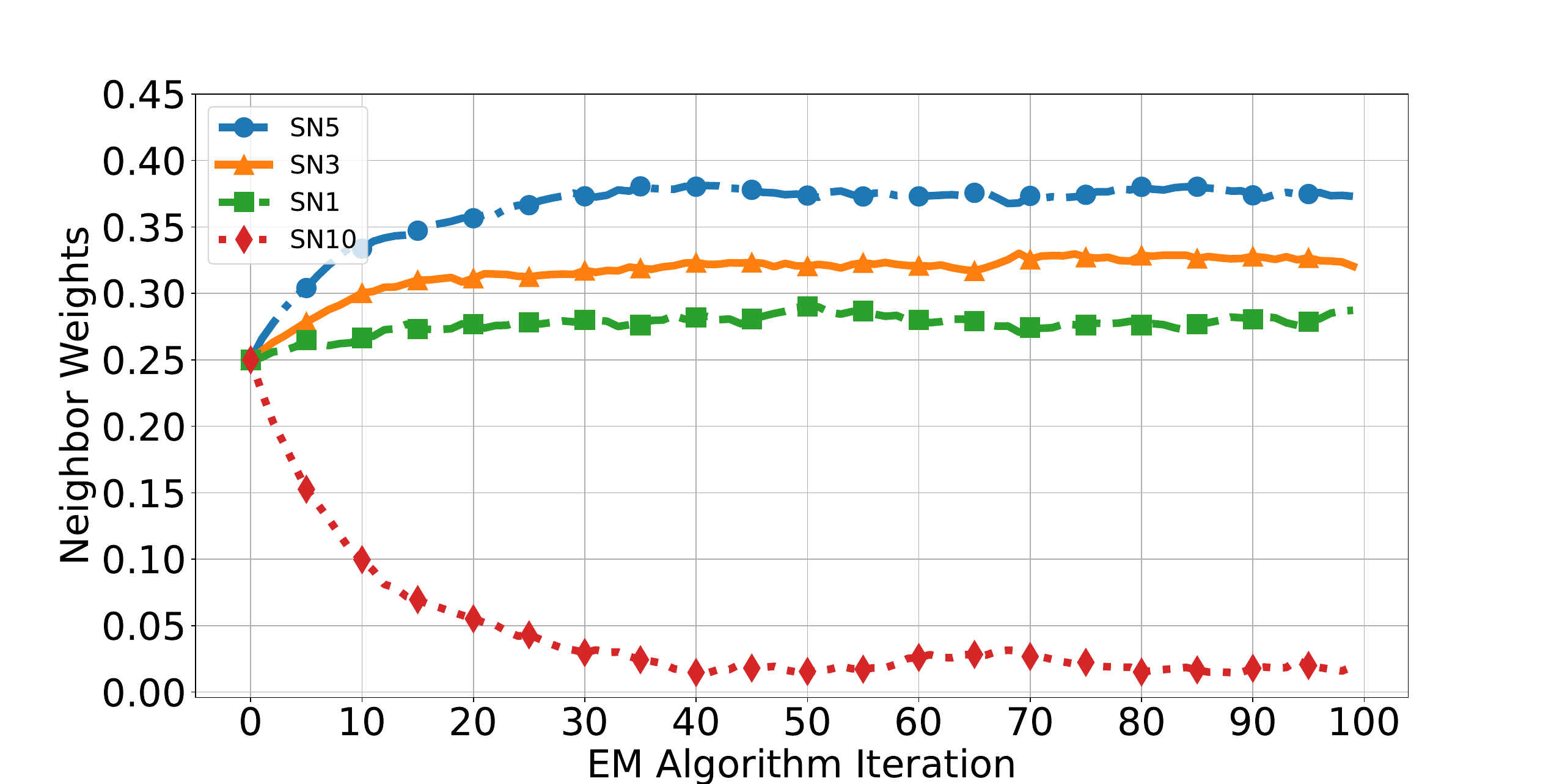} }}%
    \caption{\small Weight convergence of the EM algorithm with CIFAR-10 dataset for selected neighbors of target client in Case 1.}%
    \label{Fig:weights}%
\end{figure}

%
To evaluate the performance of the pFedWN algorithm, next, we present the learning performance of individual target clients
for two network scenarios with $10$  and $20$ neighbors.  

\subsubsection{10-neighbor Network Performance}
For the 10-neighbor network setup, 
Fig. \ref{Fig:PFL_heatmap}  demonstrates a distribution of all nodes in a given area, including the neighbors and selected target clients (shown as the star) for three different cases. 
Furthermore, the heatmap illustrates the probability of transmission error of each neighbor linked to the target client. According to Fig. \ref{Fig:PFL_heatmap} (a), $4$ out of $10$ neighbors are selected with SINR threshold $\gamma_{th} = 5$ from the target client's coverage area in the 10-neighbor network that participates in pFedWN training for Case 1. Additionally, $3$ and $2$ neighbors are selected for the target client in Cases 2 and 3 with SINR threshold $\gamma_{th} = 10$ and $15$, respectively. For other FL baseline methods, the same clients in the coverage area join the learning process for those methods. 

The learning performance of these three cases is reported in Table~\ref{tab:10_acc}. We observe that pFedWN outperforms other baseline methods. As previously mentioned, our focus is on the learning performance of the target client. Therefore, with various datasets, both the data size and the distributions differ for these target clients. According to Table \ref{tab:10_acc}, it is also evident that the target clients' learning performance, across all datasets, is improved by collaborating with selected neighbors compared to its local learning alone.

\begin{figure*}[t]
    \centering
    \begin{minipage}{.5\linewidth}
        \includegraphics[width=9cm, trim={50 5 90 60}, clip]{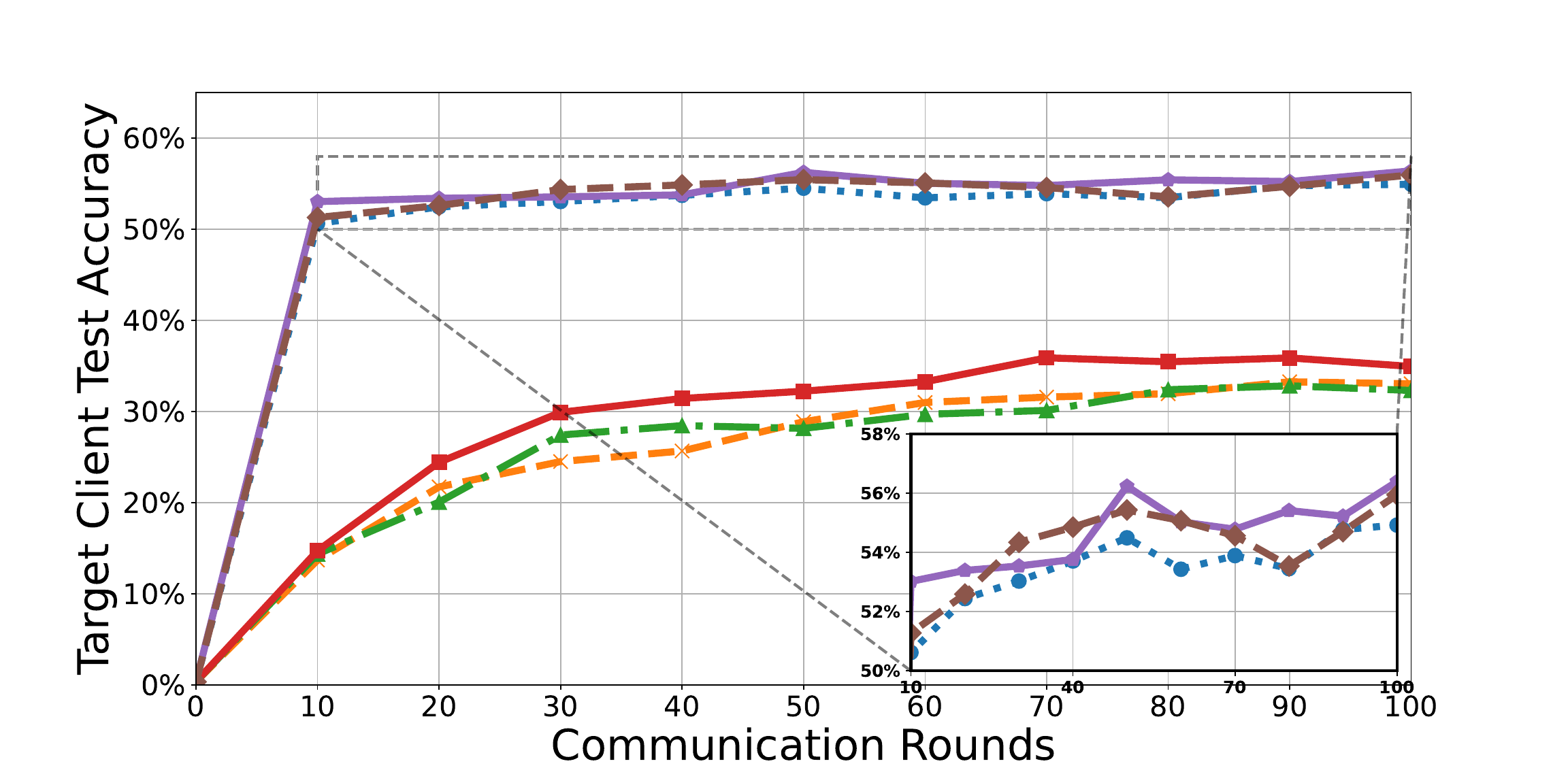}
        \centering (a) 10-neighbor network.
    \end{minipage}%
    \begin{minipage}{.5\linewidth}
        \includegraphics[width=9cm, trim={50 5 90 60}, clip]{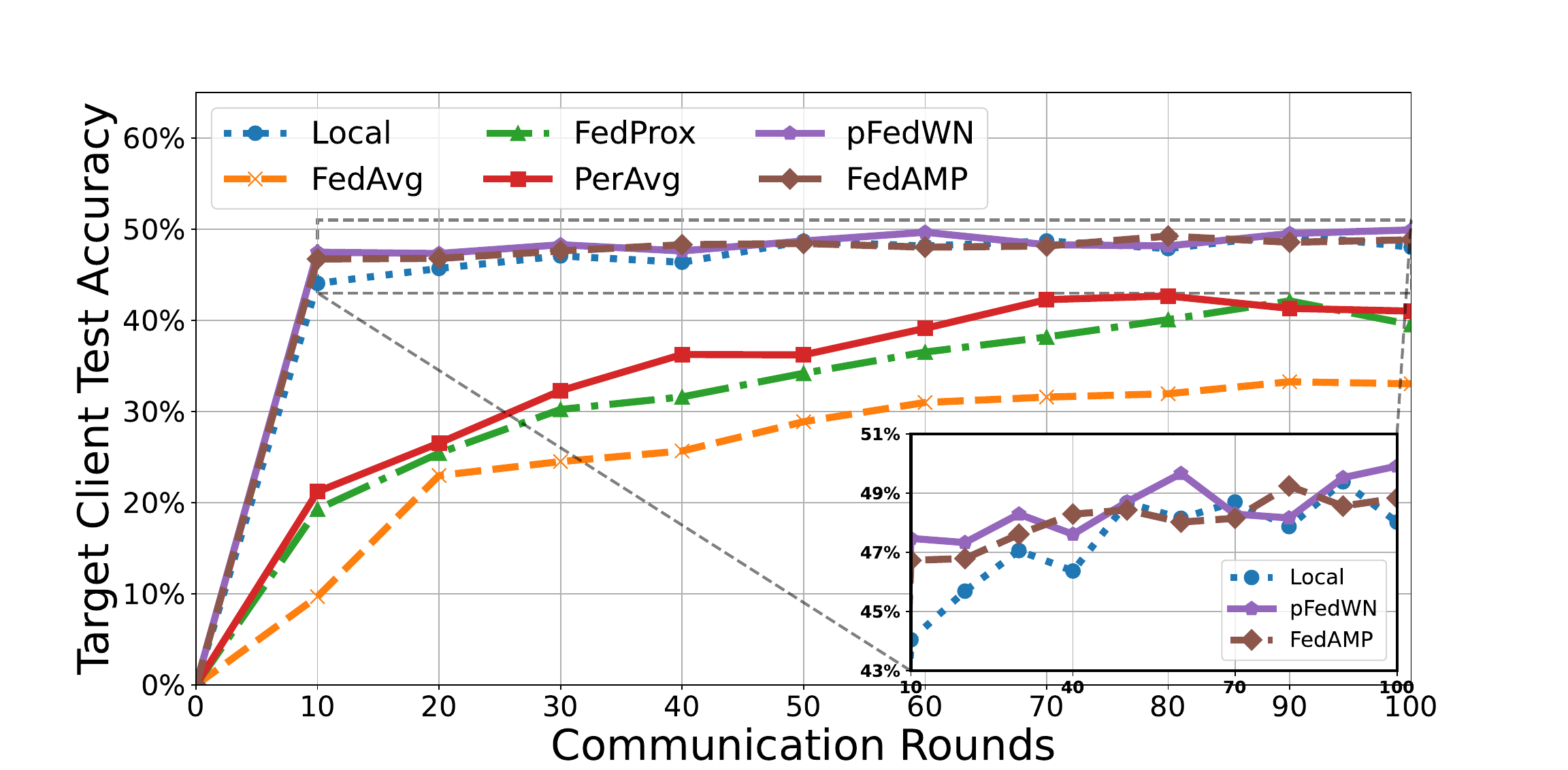}
        \centering (b) 20-neighbor network.
    \end{minipage}%
    \caption{\small 10-neighbor vs. 20-neighbor networks performance on CIFAR-100 dataset.}
    \label{Fig:subacc}
\end{figure*}

Furthermore, traditional FL strategies, such as FedAvg and FedProx, show significant underperformance due to unbalanced and non-IID data distributions across clients. In scenarios where the target client already has a large data set and can achieve satisfactory learning performance through local learning, forcing this client to participate in FL may not be advantageous. For instance, Fig.~\ref{Fig:distribution} illustrates the data distributions of each client in the system for each case using the CIFAR-10 dataset. In Case 1, the target client primarily contains labels 1, 2, and 3, with a majority of samples under label 3 ($5,819$ out of $5,933$ total samples). In contrast, other clients, like neighbor 10 with $3,151$ samples, show significantly different distributions, mainly consisting of label 4, with no data for labels 1, 2, and 3. This discrepancy severely impacts the performance of FedAvg, where the accuracy of the target client dramatically falls to $16.0\%$ when testing the global model, compared to a local accuracy of $98.3\%$.  While FedProx attempts to mitigate this issue through regularization, it achieves a slight improvement with an accuracy of $19.2\%$. While Per-FedAvg introduces some level of personalization for the target client, its performance still falls short of that achieved through local learning. On the other hand, we observe that pFedWN consistently achieves better performance with unbalanced and non-IID data distributions.





\begin{table}[t]
\centering
\caption{\small{Maximum test accuracy in percentage for \textbf{target client} in a 20-neighbor network with 3 SNs ($\gamma_{th} = 10$) (best results in bold).}}
\label{tab:20_acc}
\setlength{\tabcolsep}{6pt}
\renewcommand{\arraystretch}{1.3}
\begin{tabular}{@{}lccc@{}}
\toprule
Method & MNIST & CIFAR-10 & CIFAR-100 \\
\midrule
Local       & 83.3   & 53.2    & 49.7    \\
FedAvg      & 86.1   & 37.9    & 42.1    \\
FedProx     & 88.9   & 32.5    & 42.2    \\
Per-FedAvg  & 88.9   & 48.3    & 42.7    \\
FedAMP      & \textbf{92.1}  & \textbf{58.1}    & 49.6    \\
pFedWN     & 90.0   & 56.5    & \textbf{50.8} \\
\bottomrule
\end{tabular}
\end{table}

\subsubsection{{20-neighbor Network Performance}}
For this scenario, we implement our pFedWN algorithm in a relatively larger network where a target client has $20$ neighbors randomly distributed within the target client's coverage area. Based on our communication setting with SINR threshold $\gamma_{th} = 10$, only $3$ clients out of these $20$ neighbors are selected as the PFL neighbors for the target client. We use one target client's learning performance as an example, shown in Table \ref{tab:20_acc}. pFedWN outperforms other baseline learning methods across all three datasets except FedAMP for MNIST and CIFAR-10 datasets. This is because the FedAMP tries to personalize models to each client, but it still operates with a form of global consensus at its core, which can help in scenarios where including more clients provides a richer pool of data especially beneficial for simpler or relatively lower-completely tasks such as MNIST and CIFAR10. However, with the increase in data heterogeneity and task complexity in a larger network, our algorithm performs better than FedAMP because the target client can more aggressively filter out unhelpful neighbors via connectivity-based weighting. Moreover, compared to the 10-neighbor network scenario, there is a significant decrease in local learning performance. This decline is due to the distribution of the same quantity of data samples (60,000) among a larger number of clients in the network, resulting in a reduction of local data samples available to each client, which in turn decreases the testing accuracy. It is also noteworthy that  FL algorithm \emph{may}  exhibit improved performance compared to local learning in this context (e.g., MNIST dataset). This suggests collaboration with others to improve learning performance could be beneficial, especially when the available dataset is relatively small.

Finally, Fig.~\ref{Fig:subacc} compares the learning performance of the 10-neighbor vs. 20-neighbor network with the CIFAR-100 dataset.  
Clients in the 10-neighbor network exhibit enhanced learning stability and achieve high test accuracy since the larger individual datasets are available due to fewer network participants. With a total of 60,000 samples, a small FL network enables clients to achieve higher accuracy through local learning.
Conversely, for the 20-neighbor network, while individual local performance declines due to smaller datasets, the overall FL performance improves as the model benefits from broader collaboration. Furthermore, the pFedWN approach improves the learning performance for both networks.

\section{Conclusion}
\label{sec:concludes}
In this work, we introduce a PFL approach for D2D wireless networks. Our problem formulation incorporates wireless channel conditions as a criterion for PFL neighbor selection. Using the EM algorithm, our proposed pFedWN algorithm captures data similarity for optimal weight assignment to the selected neighbors during the learning process. Furthermore, the proposed pFedWN enables the target client to adjust the model update between its local and aggregated models from its PFL neighbors. Numerical evaluations validate that the pFedWN approach improves the learning performance of the target client but also reduces communication overhead by limiting the number of PFL neighbors selected. In the future, we plan to enhance the pFedWN approach by incorporating additional dimensions of client heterogeneity, including variations in computing power and desired performance metrics, thereby investigating multi-objective personalized federated learning over wireless networks. 

\section*{Acknowledgment}
The material is based upon work supported by NSF grants 1955561, 2212565, and 2323189. Any opinions, findings, and conclusions or recommendations expressed in this material are those of the author(s) and do not necessarily reflect the views of the NSF.

\appendices
\section{Parameters of Log-normal Distribution}
\label{log-normaldistribution}
As previously discussed, the PDF of interference model $I_s^f(\hat{\boldsymbol{h}}^f_{-s})$, which follows Log-normal distribution, can be described as \cite{Tian-2016-Interference}:
\begin{equation}
    \mathbf{P}(I_s^f(\hat{\boldsymbol{h}}^f_{-s}) = x) = \frac{\exp(-\frac{(\ln x-\mu(\hat{\boldsymbol{h}}^f_{-s}))^2}{2\sigma^2(\hat{\boldsymbol{h}}^f_{-s})})}{x\sigma(\hat{\boldsymbol{h}}^f_{-s})\sqrt{2\pi}},
\end{equation}
where $\mu(\hat{\boldsymbol{h}}^f_{-s})$ and $\sigma(\hat{\boldsymbol{h}}^f_{-s})$ are the location and scale parameters, are defined as:
\begin{align*}
\mu(\hat{\boldsymbol{h}}^f_{-s}) = \ln(\Tilde{E}[I_s^f(\hat{\boldsymbol{h}}^f_{-s})]) - \frac{1}{2} \ln(1 + \frac{\Tilde{D}[I_s^f(\hat{\boldsymbol{h}}^f_{-s})]}{(\Tilde{E}[I_s^f(\hat{\boldsymbol{h}}^f_{-s})])^2}),
\end{align*}
and,
\begin{align*}
\sigma(\hat{\boldsymbol{h}}^f_{-s}) = \sqrt{\ln(1 + \frac{\Tilde{D}[I_s^f(\hat{\boldsymbol{h}}^f_{-s})]}{(\Tilde{E}[I_s^f(\hat{\boldsymbol{h}}^f_{-s})])^2})}.
\end{align*}
Here, $\Tilde{E}[I_s^f(\hat{\boldsymbol{h}}^f_{-s})]$ and $\Tilde{D}[I_s^f(\hat{\boldsymbol{h}}^f_{-s})]$ are the first and second order moments of $I_s^f(\hat{\boldsymbol{h}}^f_{-s})$, which are expressed as:
\begin{align*}
& \Tilde{E}[I_s^f(\hat{\boldsymbol{h}}^f_{-s})] = \sum_{r\in \boldsymbol{S} \backslash s}P_r \mathbb{E}\bigl[(\hat{h}_{r}^f)^2(\Tilde{h}_{r}^f)^2\alpha_r^f(\beta_r)\bigl] = \\ &
\sum_{r\in \boldsymbol{S} \backslash s} P_r(\hat{h}_{r}^f)^2\int_{\beta_r}^\infty\frac{2x^3}{\Gamma}e^{-\frac{x^2}{\Gamma}}dx \frac{1}{|\boldsymbol{F}|}\bigl(1-(1-e^{-\frac{\beta_r^2}{\Gamma}})^{|\boldsymbol{F}|}\bigl),
\end{align*}
and, 
\begin{align*}
& \Tilde{D}[I_s^f(\hat{\boldsymbol{h}}^f_{-s})] = \sum_{r\in \boldsymbol{S} \backslash s} P_r^2(\hat{h}_{r}^f)^4 \int_{\beta_r}^\infty \frac{2x^5}{\Gamma} e^{-\frac{x^2}{\Gamma}}dx \times \\ & \frac{1}{|\boldsymbol{F}|^2}\bigl(1-(1-e^{-\frac{\beta_r^2}{\Gamma}})^{|\boldsymbol{F}|}\bigl)^2 + \sum_{r_1 \neq r_2 \in \boldsymbol{S} \backslash s} P_{r_1}(\hat{h}_{r_1}^f)^2 \times \\ & \int_{\beta_{r_1}}^\infty  \frac{2x^3}{\Gamma}e^{-\frac{x^2}{\Gamma}}dx \frac{1}{|\boldsymbol{F}|}\bigl(1-(1-e^{-\frac{\beta_{r_1}^2}{\Gamma}})^{|\boldsymbol{F}|}\bigl) P_{r_2}(\hat{h}_{r_2}^f)^2 \times \\ & \int_{\beta_{r_2}}^\infty \frac{2x^3}{\Gamma}e^{-\frac{x^2}{\Gamma}}dx \frac{1}{|\boldsymbol{F}|} \bigl(1-(1-e^{-\frac{\beta_{r_2}^2}{\Gamma}})^{|\boldsymbol{F}|}\bigl)-(\Tilde{E}[I_s^f(\hat{\boldsymbol{h}}^f_{-s})])^2.
\end{align*}

\section{Derivations of Expectation-Maximization Algorithm in pFedWN}
\noindent
To find the optimal weights of the selected neighbors for a specific target client $n$, we employ the EM algorithm. We follow the derivation and proof provided in \cite{marfoq2021federated} and adapt it to our setting. The objective is to learn parameters \(\{\omega_{nm}\}\) and weights \(\{\pi_{nm}\}\) from the data $D_n$ of the target client by maximizing the likelihood \( p(D_n | \Omega, \Pi) \). We introduce functions \(q_n(z)\) such that \(q_n \geq 0\) and \(\sum_{m=1}^{M_n} q_n(z = m) = 1\) in the expression of the likelihood. For \(\Omega \in \mathbb{R}^{M_n \times d}\) and \(\Pi \in \Delta^{M_n}\), we have:
\begin{align*}
      & \log p(D_n | \Omega, \Pi) = \sum_{i=1}^{k_n} \log p_n(\mathbf{x}_i^{(n)}, y_i^{(n)} | \Omega, \pi_n) \\ &
        = \sum_{i=1}^{k_n} \log \left[ \sum_{m=1}^{M_n} \left( p_n(\mathbf{x}_i^{(n)}, y_i^{(n)}, z_n^{(i)} = m | \Omega, \pi_n) \cdot q_n(z_n^{(i)} = m) \right) \right] \\ &
        \geq \sum_{i=1}^{k_n} \sum_{m=1}^{M_n} q_n(z_n^{(i)} = m) \log \left( \frac{p_n(\mathbf{x}_i^{(n)}, y_i^{(n)}, z_n^{(i)} = m | \Omega, \pi_n)}{q_n(z_n^{(i)} = m)} \right) \\ &
        = \sum_{i=1}^{k_n} \sum_{m=1}^{M_n} q_n(z_n^{(i)} = m) \bigl( \log p_n(\mathbf{x}_i^{(n)}, y_i^{(n)}, z_n^{(i)} = m | \Omega, \pi_n) \\
        &\quad - \log q_n(z_n^{(i)} = m) \bigr) \\ &
        = \sum_{i=1}^{k_n} \sum_{m=1}^{M_n} q_n(z_n^{(i)} = m) \log p_n(\mathbf{x}_i^{(n)}, y_i^{(n)}, z_n^{(i)} = m | \Omega, \pi_n)\\ &\quad - \sum_{i=1}^{k_n} \sum_{m=1}^{M_n} q_n(z_n^{(i)} = m) \log q_n(z_n^{(i)} = m)
        \triangleq \mathcal{L}(\Omega, \Pi, Q_n),
\end{align*}

\noindent
where we used Jensen's inequality because \(\log\) is concave. \(\mathcal{L}(\Omega, \Pi, Q_n)\) is an evidence lower bound. The centralized EM algorithm corresponds to iteratively maximizing this bound with respect to \(Q_n\) (E-STEP) and with respect to \(\{\Omega, \Pi\}\) (M-STEP).

\noindent 
\textbf{(1) E-STEP.}
The difference between the log-likelihood and the evidence lower bound \(\mathcal{L}(\Omega, \Pi, Q_n)\) can be expressed in terms of the sum of KL divergences:
\begin{align*}
        &\log p(D_n | \Omega, \Pi) - \mathcal{L}(\Omega, \Pi, Q_n) \nonumber \\
        &= \sum_{i=1}^{k_n} \bigl\{ \log p_n(\mathbf{x}_i^{(n)}, y_i^{(n)} | \Omega, \pi_n) \nonumber \\
        &\quad - \sum_{m=1}^{M_n} q_n(z_n^{(i)} = m) \log \frac{p_n(\mathbf{x}_i^{(n)}, y_i^{(n)}, z_n^{(i)} = m | \Omega, \pi_n)}{q_n(z_n^{(i)} = m)} \bigr\} \nonumber \\
        &= \sum_{i=1}^{k_n} \sum_{m=1}^{M_n} q_n(z_n^{(i)} = m) \bigl( \log p_n(\mathbf{x}_i^{(n)}, y_i^{(n)} | \Omega, \pi_n) \nonumber \\
        &\quad - \log \frac{p_n(\mathbf{x}_i^{(n)}, y_i^{(n)}, z_n^{(i)} = m | \Omega, \pi_n)}{q_n(z_n^{(i)} = m)} \bigr) \nonumber \\
        &= \sum_{i=1}^{k_n} \sum_{m=1}^{M_n} q_n(z_n^{(i)} = m) \bigl( \log p_n(\mathbf{x}_i^{(n)}, y_i^{(n)} | \Omega, \pi_n) \nonumber \\
        &\quad - \log p_n(\mathbf{x}_i^{(n)}, y_i^{(n)}, z_n^{(i)} = m | \Omega, \pi_n) + \log q_n(z_n^{(i)} = m) \bigr) \nonumber \\
        &= \sum_{i=1}^{k_n} \sum_{m=1}^{M_n} q_n(z_n^{(i)} = m) \log \frac{q_n(z_n^{(i)} = m)}{p_n(z_n^{(i)} = m | \mathbf{x}_i^{(n)}, y_i^{(n)}, \Omega, \pi_n)} \nonumber \\
        &= \sum_{i=1}^{k_n} \text{KL} \left( q_n(z_n^{(i)}) \| p_n(z_n^{(i)} | \mathbf{x}_i^{(n)}, y_i^{(n)}, \Omega, \pi_n) \right) \geq 0.
\end{align*}

\noindent
For fixed parameters \(\{\Omega, \Pi\}\), the maximum of \(\mathcal{L}(\Omega, \Pi, Q_n)\) is reached when
\begin{align*}
    \sum_{i=1}^{k_n} \text{KL} \left( q_n(z_n^{(i)}) \| p_n(z_n^{(i)} | \mathbf{x}_i^{(n)}, y_i^{(n)}, \Omega, \pi_n) \right) = 0.
\end{align*}

\noindent
Thus for \( i \in [k_n] \), we have:
\begin{align*}
        &q_n(z_n^{(i)} = m) = p_n(z_n^{(i)} = m | \mathbf{x}_i^{(n)}, y_i^{(n)}, \Omega, \pi_n) \\
        &= \frac{p_n(\mathbf{x}_i^{(n)}, y_i^{(n)} | z_n^{(i)} = m, \Omega, \pi_n) \cdot p_n(z_n^{(i)} = m | \Omega, \pi_n)}{p_n(\mathbf{x}_i^{(n)}, y_i^{(n)} | \Omega, \pi_n)} \\
        &= \frac{p_m(\mathbf{x}_i^{(n)}, y_i^{(n)} | \omega_m) \cdot \pi_{nm}}{\sum_{m'=1}^{M_n} p_{m'}(\mathbf{x}_i^{(n)}, y_i^{(n)} | \omega_{m'}) \cdot \pi_{nm'}} \\
        &=  \frac{p_m(y_i^{(n)} | \mathbf{x}_i^{(n)}, \omega_m) \cdot \pi_{nm}}{\sum_{m'=1}^{M_n} p_{m'}(y_i^{(n)} | \mathbf{x}_i^{(n)}, \omega_{m'}) \cdot \pi_{nm'}}.
\end{align*}

\noindent 
\textbf{(2) M-STEP.}
Now, we maximize \(\mathcal{L}(\Omega, \Pi, Q_n)\) with respect to \(\{\Omega, \Pi\}\). By dropping the terms not depending on \(\{\Omega, \Pi\}\) in the expression of \(\mathcal{L}(\Omega, \Pi, Q_n)\) we write:
\begin{align*}
&\mathcal{L}(\Omega, \Pi, Q_n) \nonumber \\
&= \sum_{i=1}^{k_n} \sum_{m=1}^{M_n} q_n(z_n^{(i)} = m) \bigl[ \log p_{\omega_m}(\mathbf{x}_i^{(n)}, y_i^{(n)}) + \log \pi_{nm} \bigr] + c \nonumber \\
&= \sum_{i=1}^{k_n} \sum_{m=1}^{M_n} q_n(z_n^{(i)} = m) \bigl[ \log p_{\omega_m}(y_i^{(n)} | \mathbf{x}_i^{(n)}) + \log p_m(\mathbf{x}_i^{(n)}) \nonumber\\
&\quad  + \log \pi_{nm} \bigr] + c, 
\end{align*}
where \(c\) is a constant not depending on \(\{\Omega, \Pi\}\).
Thus, for \( m \in [\boldsymbol{M}_n] \), by solving a simple optimization problem we update \(\pi_{nm}\) as follows: 
\begin{align*}
    \pi_{nm} = \frac{\sum_{i=1}^{k_n} q_n(z_n^{(i)} = m)}{k_n}.
\end{align*}

\noindent
On the other hand, for \(m \in [\boldsymbol{M}_n]\), we update \(\omega_{nm}\) by solving:
\begin{align*}
    \omega_{nm}^{t+1} \in \arg \min_{\omega_{nm} \in \mathbb{R}^d} \sum_{i=1}^{k_n} q_{n}^{t+1}(z_n = m) l(h_{\omega_{nm}}(\mathbf{x}_i^{(n)}), y_i^{(n)}).
\end{align*}

\section{Proof of Theorem 1}
\label{section-appendix-thm-1}
\setcounter{theorem}{0}
\begin{theorem}
Under Assumptions 1-3, and given the local model update rule in Eq.~\eqref{eq:12} for the selected neighbors $m \in [\boldsymbol{M}_{n}]$, and assuming that the local models $\omega_{m}$ are bounded, the pFedWN algorithm converges at a rate of $\mathcal{O}(\gamma^T)$ if $\alpha^{2}(2-\alpha)(1-\eta\mu)^{E} \leq 1$ holds. 
\end{theorem}

\begin{proof}
First, we show that the convergence rate is equal to $\mathcal{O}(\gamma^T)$. To this end, we have: 
\begin{equation} \label{eq:21}
\begin{aligned}
    & \mathbb{E} [ ||\omega_{n}^{(t+1)} - \omega_{n}^{*}||^2] \\ 
    &= \mathbb{E} [||\alpha(\omega_{n}^{(t)} - \omega_{n}^{*}) 
    + (1-\alpha)\sum_{m \in [\boldsymbol{M}_n]}\pi_{nm}(\omega_{m}^{(t)}-\omega_{n}^{*})||^2]  \\ 
    &= \alpha^2\mathbb{E}[||\omega_{n}^{(t)} - \omega_{n}^{*}||^2] \\
    &+ (1-\alpha)^2\mathbb{E}[||\sum_{m \in [\boldsymbol{M}_n]}\pi_{nm}(\omega_{m}^{(t)}-\omega_{n}^{*})||^2]\\    &+ 2\alpha(1-\alpha)\mathbb{E}[ \langle \omega_{n}^{(t)} - \omega_{n}^{*}, \sum_{m \in [\boldsymbol{M}_n]}\pi_{nm}(\omega_{m}^{(t)}-\omega_{n}^{*})\rangle ]. 
\end{aligned}
\end{equation}

\noindent 
By bounding the cross term in Eq.~\eqref{eq:21} using Young's inequality, where $\langle a,b \rangle < \frac{c}{2}||a||^2+\frac{1}{2c}||b||^2$ for any $c >0$, we assume $c = \alpha$ and obtain:
\begin{equation}\label{eq:22}
    \begin{aligned}
    & \mathbb{E} [ ||\omega_{n}^{(t+1)} - \omega_{n}^{*}||^2] \\ 
    &\leq \alpha^2\mathbb{E}[||\omega_{n}^{(t)} - \omega_{n}^{*}||^2]+ \alpha^2(1-\alpha)\mathbb{E}[||\omega_{n}^{(t)} - \omega_{n}^{*}||^2]\\
    &+2(1-\alpha)\mathbb{E}[||\sum_{m \in [\boldsymbol{M}_n]}\pi_{nm}(\omega_{m}^{(t)}-\omega_{n}^{*})||^2] \\ 
    &= \alpha^2(2-\alpha)\mathbb{E}[||\omega_{n}^{(t)} - \omega_{n}^{*}||^2] \\
    &+ 2(1-\alpha)\mathbb{E}[||\sum_{m \in [\boldsymbol{M}_n]}\pi_{nm}(\omega_{m}^{(t)}-\omega_{n}^{*})||^2].
    \end{aligned}
\end{equation}

\noindent 
Next, we need to bound $\mathbb{E}[||\omega_{m}^{(t)}-\omega_{n}^{*}||^2]$. First, we have:
\begin{align*}
    \omega_{m}^{(t)}-\omega_{n}^{*} = (\omega_{m}^{(t)}- \omega_{m}^{*})+(\omega_{m}^{*}-\omega_{n}^{*}).
\end{align*}

\noindent 
Using the inequality $||a+b||^2 \leq 2||a||^2 + 2||b||^2$, we get:
\begin{align*}
    ||\omega_{m}^{(t)}-\omega_{n}^{*}||^2 \leq 2||\omega_{m}^{(t)}-\omega_{m}^{*}||^2 + 2||\omega_{m}^{(t)}-\omega_{n}^{*}||^2.
\end{align*}

\noindent
Taking the expectations, we have:
\begin{align*}
    \mathbb{E}[||\omega_{m}^{(t)}-\omega_{n}^{*}||^2 \leq 2\mathbb{E}[||\omega_{m}^{(t)}-\omega_{m}^{*}||^2] + 2||\omega_{m}^{*}-\omega_{n}^{*}||^2,
\end{align*}
where $||\omega_{m}^{*}-\omega_{n}^{*}||$ is deterministic.

\noindent 
For strongly convex functions, the expected distance from the optimum can be written as \cite{li2019convergence, yang2024federated}:
\begin{equation}\label{eq:25}
\begin{aligned}
    &\mathbb{E}[||\omega_{n}^{(t)} - \omega_{n}^{*}||^2] \\
    &\leq (1-\eta\mu)^{E}\mathbb{E}[||\omega_{n}^{(t-1)} - \omega_{n}^{*}||^2] + \frac{\eta\sigma_{n}^2}{\mu}\sum_{k=0}^{E-1}(1-\eta\mu)^{k}.
\end{aligned}
\end{equation}
For neighbor clients, we have:
\begin{equation}\label{eq.26}
    \mathbb{E}[||\omega_{m}^{(t)} - \omega_{m}^{*}||^2] \leq (1-\eta\mu)\mathbb{E}[||\omega_{m}^{(t-1)} - \omega_{m}^{*}||^2] + \frac{\eta\sigma_{m}^2}{\mu}.
\end{equation}

\noindent 
Thus, we plug Eq.~\eqref{eq:25} and Eq.~\eqref{eq.26} back to Eq.~\eqref{eq:22} and obtain:
\begin{align*}
    \mathbb{E} [ ||\omega_{n}^{(t+1)} - \omega_{n}^{*}||^2] \leq \gamma\mathbb{E} [ ||\omega_{n}^{(t)} - \omega_{n}^{*}||^2] + A, 
\end{align*}
where $\gamma = \alpha^{2}(2-\alpha)(1-\eta\mu)^{E}$. In addition, assuming that the neighbor clients' models converge and their variances are bounded, we can consider $A$ as a bounded term or decaying over time, where $A = 4(1-\alpha)^{2}(\sum_{m \in [\boldsymbol{M}_n]}\pi_{nm}^{2}(((1-\eta\mu)\mathbb{E}[||\omega_{m}^{(t)}-\omega_{m}^{*}||^2]+\frac{\eta\sigma_{m}^2}{\mu}+\delta_{nm}))$. Next, we unfold the recursion over $T$ communication rounds:
\begin{align*}
    \mathbb{E} [ ||\omega_{n}^{T} - \omega_{n}^{*}||^2] \leq \gamma^{T}\mathbb{E} [ ||\omega_{n}^{0} - \omega_{n}^{*}||^2] + A\sum_{k=0}^{T-1}\gamma^{k},
\end{align*}
since $\gamma < 1$ and $\gamma^T$ decays exponentially with $T$, and we have $\sum_{k=0}^{T-1}\gamma^{k} \leq \frac{1}{1-\gamma}$. Therefore, we finally obtain:
\begin{align*}
    \mathbb{E} [ ||\omega_{n}^{T} - \omega_{n}^{*}||^2] \leq \mathcal{O}(\gamma^T) + \frac{A}{1-\gamma}.
\end{align*}
As $T \rightarrow \infty$, the first term at RHS vanishes, and the expected error converges to a bound determined by $A$ and $\gamma$.
\end{proof}

\setcounter{lemma}{0}
\section{Proof of useful lemmas}\label{AppendixC}
In this section, we present the proof of useful lemmas.

\begin{lemma}\label{lemma1}
    The model parameter update difference of the target client $n$ can be defined as
    \begin{align*}
        \Delta\omega_{n}^{(t)} &= \omega_{n}^{(t+1)}-\omega_{n}^{(t)} \\
        &= - \alpha\eta E g_{n}^{(t)} + (1-\alpha)\sum_{m}\pi_{nm}(\omega_{m}^{(t)}-\omega_{n}^{(t)}).
    \end{align*}

\end{lemma}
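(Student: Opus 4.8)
The plan is to derive the update difference by unrolling the two operations that make up one communication round for the target client: the $E$ local SGD steps of Eq.~\eqref{eq:2}, followed by the neighbor aggregation of Eq.~\eqref{eq:5}. First I would collapse the inner local loop. Writing out the $E$ successive gradient steps gives $\omega_{n}^{(t,E)} = \omega_{n}^{(t,0)} - \eta\sum_{j=0}^{E-1}\nabla f_{n}(\omega_{n}^{(t,j)})$, and I would define the effective gradient $g_{n}^{(t)} \triangleq \frac{1}{E}\sum_{j=0}^{E-1}\nabla f_{n}(\omega_{n}^{(t,j)})$ as the average of the stochastic gradients actually used across the local iterations. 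This telescopes exactly to $\omega_{n}^{(t,E)} = \omega_{n}^{(t)} - \eta E g_{n}^{(t)}$, which is where the multiplicative factor $E$ in the statement originates.

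Next I would substitute this locally trained model into the aggregation rule. Plugging $\omega_{n}^{(t)} - \eta E g_{n}^{(t)}$ into the first term of Eq.~\eqref{eq:5} in place of the target client's own model gives
\[
\omega_{n}^{(t+1)} = \alpha\bigl(\omega_{n}^{(t)} - \eta E g_{n}^{(t)}\bigr) + (1-\alpha)\sum_{m}\pi_{nm}\omega_{m}^{(t)}.
\]
Subtracting $\omega_{n}^{(t)}$ from both sides then yields $\Delta\omega_{n}^{(t)} = -(1-\alpha)\omega_{n}^{(t)} - \alpha\eta E g_{n}^{(t)} + (1-\alpha)\sum_{m}\pi_{nm}\omega_{m}^{(t)}$.

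The final step is a bookkeeping rewrite using the simplex constraint. Since $\boldsymbol{\pi}_{n} \in \Delta^{M_{n}}$ we have $\sum_{m}\pi_{nm} = 1$, so the stray term folds into the sum via $(1-\alpha)\omega_{n}^{(t)} = (1-\alpha)\sum_{m}\pi_{nm}\omega_{n}^{(t)}$. Combining the two sums over $m$ gives the claimed identity $\Delta\omega_{n}^{(t)} = -\alpha\eta E g_{n}^{(t)} + (1-\alpha)\sum_{m}\pi_{nm}(\omega_{m}^{(t)} - \omega_{n}^{(t)})$.

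I anticipate the only real subtlety to be notational rather than mathematical: reconciling the convention $\omega_{n}^{(t)} \triangleq \omega_{n}^{(t,E)}$ adopted after Eq.~\eqref{eq:2} with the role $\omega_{n}^{(t)}$ plays here as the pre-training iterate on the left-hand side, and being precise that $g_{n}^{(t)}$ must be the per-step \emph{average} (rather than a single gradient) so that the factor $E$ surfaces correctly and stays consistent with the bounds invoked in Lemmas 2 and 3. Once these definitions are pinned down, the result follows by elementary algebra, with no inequalities or probabilistic arguments needed.
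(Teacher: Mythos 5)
Your proof is correct and follows essentially the same route as the paper's own argument: telescoping the $E$ local steps into $\omega_{n}^{(t,E)} = \omega_{n}^{(t)} - \eta E g_{n}^{(t)}$ with $g_{n}^{(t)}$ the per-step average gradient, substituting this into the aggregation rule of Eq.~\eqref{eq:5}, and folding the stray $(1-\alpha)\omega_{n}^{(t)}$ term into the sum via $\sum_{m}\pi_{nm}=1$. The subtlety you flag about reconciling $\omega_{n}^{(t)} \triangleq \omega_{n}^{(t,E)}$ with its role as the aggregated pre-training iterate is exactly the point the paper handles by declaring $\omega_{n}^{(t)} = \omega_{n}^{(t,0)}$ for the aggregated model, so no gap remains.
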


\begin{proof}
    First, the target client performs $E$ local gradient descent steps based on Eq.~\eqref{eq:2}. Therefore, we have: 
    \begin{align*}
        \omega_{n}^{(t,j+1)} = \omega_{n}^{(t,j)} - \eta \nabla f_{n}(\omega_{n}^{(t,j)}), j = 0,1,2,...,E-1.   
    \end{align*}

\noindent 
We can relate $\omega_{n}^{(t,E)}$ to $\omega_{n}^{(t)}$ using the cumulative effect of the local updates. After the local updates, the target client aggregates its model with neighbor models according to Eq.~\eqref{eq:5}. In addition, $\omega_{n}^{(t)}$ is the aggregated model from the previous training round. In the initial case, the aggregated model $\omega_{n}^{(t)} = \omega_{n}^{(t,0)}$. We note that: 
    \begin{equation}\label{eq:23}
        \omega_{n}^{(t,E)} = \omega_{n}^{(t)} - \eta \sum_{j=0}^{E-1}\nabla f_{n}(\omega_{n}^{(t,j)}).   
    \end{equation}
    For simplicity, we define $g_{n}^{(t)}$ where: 
    \begin{align*}
        g_{n}^{(t)} = \frac{1}{E}\sum_{j=0}^{E-1}\nabla f_{n}(\omega_{n}^{(t,j)}).
    \end{align*}

    \noindent
    Then, Eq.~\eqref{eq:23} can be written as:
    \begin{align*}
       \omega_{n}^{(t,E)} = \omega_{n}^{(t)} - \eta E g_{n}^{(t)}. 
    \end{align*}

    \noindent
    The aggregation step in Eq.~\eqref{eq:5} becomes:
    \begin{align*}
        &\omega_{n}^{(t+1)} = \alpha(\omega_{n}^{(t)} - \eta E g_{n}^{(t)}) + (1-\alpha)\sum_{m}\pi_{nm}\omega_{m}^{(t)} \\
        &= \alpha\omega_{n}^{(t)} - \alpha\eta E g_{n}^{(t)} + (1-\alpha)\sum_{m}\pi_{nm}\omega_{m}^{(t)} \\
        &=\alpha\omega_{n}^{(t)} + (1-\alpha)\omega_{n}^{(t)} - \alpha\eta E g_{n}^{(t)} + (1-\alpha)\sum_{m}\pi_{nm}\omega_{m}^{(t)} \\
        &\quad - (1-\alpha)\omega_{n}^{(t)}.
    \end{align*}

    \noindent
    Since $\alpha + (1-\alpha) = 1$, we obtain $\alpha\omega_{n}^{(t)} + (1-\alpha)\omega_{n}^{(t)} = \omega_{n}^{(t)}$. Therefore, we have:
    \begin{align*}
        \omega_{n}^{(t+1)} = \omega_{n}^{(t)}  - \alpha\eta E g_{n}^{(t)} + (1-\alpha)(\sum_{m}\pi_{nm}\omega_{m}^{(t)} - \omega_{n}^{(t)}).
    \end{align*}

    \noindent
     According to $\sum_{m}\pi_{nm} = 1$, we have:
    \begin{align*}
        &\sum_{m}\pi_{nm}\omega_{m}^{(t)} - \omega_{n}^{(t)} = \sum_{m}\pi_{nm}\omega_{m}^{(t)} - (\sum_{m}\pi_{nm})\omega_{n}^{(t)}\\
        &= \sum_{m} \pi_{nm}(\omega_{m}^{(t)} - \omega_{n}^{(t)}).
    \end{align*}

    \noindent
    Thus, the aggregation of target client is simplified as:
    \begin{align*}
        \omega_{n}^{(t+1)} = \omega_{n}^{(t)} - \alpha\eta E g_{n}^{(t)} + (1-\alpha)\sum_{m}\pi_{nm}(\omega_{m}^{(t)}-\omega_{n}^{(t)}).
    \end{align*}
    Therefore, the model update difference can be obtained:
    \begin{align*}
        \Delta\omega_{n}^{(t)} &= \omega_{n}^{(t+1)}-\omega_{n}^{(t)} \\
        &= - \alpha\eta E g_{n}^{(t)} + (1-\alpha)\sum_{m}\pi_{nm}(\omega_{m}^{(t)}-\omega_{n}^{(t)}).
    \end{align*}
\end{proof}

\begin{lemma}\label{lemma2}
    Under Assumption 4  and for $\gamma = \frac{\alpha\eta E}{2} - \frac{1 - \alpha}{2\alpha\eta E}$, $\mathbb{E} \left [ \langle \nabla f_{n}(\omega_{n}^{(t)}, \Delta\omega_{n}^{(t)})\rangle \right ]$ is bounded by

        \begin{align*}
            \mathbb{E} \left [ \langle \nabla f_{n}(\omega_{n}^{(t)}), \Delta\omega_{n}^{(t)}\rangle \right ] &\leq -\gamma \mathbb{E}\left[ ||\nabla f_{n}(\omega_{n}^{t})||^{2}\right]  \\
            & +\frac{\alpha\eta E}{2} \mathbb{E}\left[ ||\nabla f_{n}(\omega_{n}^{t}) - g_{n}^{(t)}||^{2}\right] \\
            &+ \frac{(1-\alpha)\alpha\eta E C}{2}.
        \end{align*}

\end{lemma}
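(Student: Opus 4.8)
The plan is to substitute the expression for $\Delta\omega_{n}^{(t)}$ established in Lemma~1 into the inner product and then bound the two resulting terms separately. Expanding gives $\langle \nabla f_{n}(\omega_{n}^{(t)}), \Delta\omega_{n}^{(t)}\rangle = -\alpha\eta E\langle \nabla f_{n}(\omega_{n}^{(t)}), g_{n}^{(t)}\rangle + (1-\alpha)\langle \nabla f_{n}(\omega_{n}^{(t)}), \sum_{m}\pi_{nm}(\omega_{m}^{(t)}-\omega_{n}^{(t)})\rangle$, so I would treat the gradient-noise term and the neighbor-drift term independently before taking expectations.

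For the gradient-noise term, I would apply the polarization identity $\langle a,b\rangle = \frac{1}{2}(||a||^2 + ||b||^2 - ||a-b||^2)$ with $a = \nabla f_{n}(\omega_{n}^{(t)})$ and $b = g_{n}^{(t)}$. This yields $-\frac{\alpha\eta E}{2}||\nabla f_{n}(\omega_{n}^{(t)})||^2 - \frac{\alpha\eta E}{2}||g_{n}^{(t)}||^2 + \frac{\alpha\eta E}{2}||\nabla f_{n}(\omega_{n}^{(t)}) - g_{n}^{(t)}||^2$; since the middle term is non-positive, discarding it produces an upper bound that already carries the target coefficient $\frac{\alpha\eta E}{2}$ on $||\nabla f_{n}(\omega_{n}^{(t)}) - g_{n}^{(t)}||^2$ together with a $-\frac{\alpha\eta E}{2}$ contribution to $||\nabla f_{n}(\omega_{n}^{(t)})||^2$.

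For the neighbor-drift term, the crucial step is Young's inequality $\langle a,b\rangle \le \frac{c}{2}||a||^2 + \frac{1}{2c}||b||^2$ with the calibrated choice $c = \frac{1}{\alpha\eta E}$. This makes the positive contribution to $||\nabla f_{n}(\omega_{n}^{(t)})||^2$ equal to exactly $\frac{1-\alpha}{2\alpha\eta E}$, which combines with the $-\frac{\alpha\eta E}{2}$ term from the gradient-noise part to give precisely $-\gamma$ with $\gamma = \frac{\alpha\eta E}{2} - \frac{1-\alpha}{2\alpha\eta E}$. The leftover term $\frac{(1-\alpha)\alpha\eta E}{2}\mathbb{E}[||\sum_{m}\pi_{nm}(\omega_{m}^{(t)}-\omega_{n}^{(t)})||^2]$ I would bound by first invoking Jensen's inequality over the convex combination (valid since $\sum_{m}\pi_{nm}=1$) to move the squared norm inside the sum, obtaining $\sum_{m}\pi_{nm}\mathbb{E}[||\omega_{m}^{(t)}-\omega_{n}^{(t)}||^2]$, and then applying Assumption~4 termwise to replace each expectation by $C$; because the weights sum to one this collapses to $\frac{(1-\alpha)\alpha\eta E C}{2}$, completing the bound.

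I expect the main difficulty to be coefficient bookkeeping rather than any deep estimate: the lemma stands or falls on selecting the Young's parameter $c = 1/(\alpha\eta E)$ so that the two separate sources of $||\nabla f_{n}(\omega_{n}^{(t)})||^2$ fuse into the single $-\gamma$ coefficient, and on recognizing that Jensen's inequality must be used to pull the norm inside the convex combination \emph{before} Assumption~4 is applicable (bounding the aggregated difference directly would not be legitimate). A minor but necessary point is the identification $\omega_{n}^{(t)} = \omega_{n}^{(t,E)}$, which ensures that the bounded-distance Assumption~4 indeed controls the difference $\omega_{m}^{(t)}-\omega_{n}^{(t)}$ appearing in Lemma~1.
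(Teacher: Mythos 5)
Your proof is correct and follows essentially the same route as the paper's: the Lemma~1 decomposition, the polarization identity with the non-positive $\|g_{n}^{(t)}\|^{2}$ term discarded, and Young's inequality calibrated so that the two $\|\nabla f_{n}(\omega_{n}^{(t)})\|^{2}$ contributions merge into $-\gamma$. The only cosmetic difference is that you apply Young's inequality to the aggregated drift vector and then use Jensen's inequality before invoking Assumption~4, whereas the paper distributes the sum $\sum_{m}\pi_{nm}$ outside the inner product first and applies Cauchy--Schwarz, Young, and Assumption~4 per neighbor; the two orderings yield identical constants.
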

\begin{proof}
    Using Lemma \ref{lemma1}, we can rewrite the model update difference as
    \begin{equation}\label{eq:36}
        \begin{aligned}
            &\mathbb{E} \left [ \langle \nabla f_{n}(\omega_{n}^{(t)}), \Delta\omega_{n}^{(t)}\rangle \right ] = - \alpha\eta E\mathbb{E} \left[ \langle \nabla f_{n}(\omega_{n}^{(t)}), g_{n}^{(t)}\rangle \right] +\\
            &(1-\alpha)\sum_{m}\pi_{nm}\mathbb{E} \left[ \langle \nabla f_{n}(\omega_{n}^{(t)}), \omega_{m}^{(t)}-\omega_{n}^{(t)} \rangle \right].
        \end{aligned}
    \end{equation}

    \noindent
    Based on the linearity of inner product, $\mathbb{E} \left[ \langle \nabla f_{n}(\omega_{n}^{(t)}), g_{n}^{(t)}\rangle \right]$ yields
    \begin{align*}
            \mathbb{E} \left[ \langle \nabla f_{n}(\omega_{n}^{(t)}), g_{n}^{(t)}\rangle \right] &= \frac{1}{2}\mathbb{E}\left[ ||\nabla f_{n}(\omega_{n}^{(t)})||^{2} \right] + \frac{1}{2}\mathbb{E}\left[ ||g_{n}^{(t)}||^{2} \right] \\
            &- \frac{1}{2}\mathbb{E}\left[ ||\nabla f_{n}(\omega_{n}^{(t)}) - g_{n}^{(t)}||^{2} \right].
    \end{align*}
    Since $\mathbb{E}\left[ ||g_{n}^{(t)}||^{2} \right] \geq 0$, we have: 
        \begin{equation}\label{eq:38}
        \begin{aligned}
            \mathbb{E} \left[ \langle \nabla f_{n}(\omega_{n}^{(t)}), g_{n}^{(t)}\rangle \right] &\geq \frac{1}{2}\mathbb{E}\left[ ||\nabla f_{n}(\omega_{n}^{(t)})||^{2} \right] - \\
            & \frac{1}{2}\mathbb{E}\left[ ||\nabla f_{n}(\omega_{n}^{(t)}) - g_{n}^{(t)}||^{2} \right].
        \end{aligned}
    \end{equation}

    \noindent
    Now, using the Cauchy-Schwarz inequality for $\mathbb{E} \left[ \langle \nabla f_{n}(\omega_{n}^{(t)}), \omega_{m}^{(t)}-\omega_{n}^{(t)} \rangle \right]$, we have:
    \begin{equation}\label{eq:39}
        \begin{aligned}
            \mathbb{E}& \left[ \langle \nabla f_{n}(\omega_{n}^{(t)}), \omega_{m}^{(t)}-\omega_{n}^{(t)} \rangle \right] \\
            &\leq \mathbb{E}\left[ ||\nabla f_{n}(\omega_{n}^{(t)})|| \cdot ||\omega_{m}^{(t)}-\omega_{n}^{(t)}|| \right].
        \end{aligned}
    \end{equation}

    \noindent
    Using Young's inequality, we obtain: 
    \begin{equation}\label{eq:40}
        \begin{aligned}
            || \nabla f_{n}(\omega_{n}^{(t)})|| \cdot ||\omega_{m}^{(t)}-\omega_{n}^{(t)}|| &\leq \frac{1}{2\alpha\eta E} || \nabla f_{n}(\omega_{n}^{(t)})||^{2} \\
            &+ \frac{\alpha\eta E}{2} ||\omega_{m}^{(t)}-\omega_{n}^{(t)}||^{2}.
        \end{aligned}
    \end{equation}
  Therefore, we plug Eq.~\eqref{eq:40} back to Eq.~\eqref{eq:39}, combining Assumption 4 and the fact that $\sum_{m}\pi_{nm} = 1$, the second term in RHS of Eq.~\eqref{eq:36} can be represented as:
        \begin{equation}\label{eq:41}
        \begin{aligned}
            &(1-\alpha)\sum_{m}\pi_{nm}\mathbb{E} \left[ \langle \nabla f_{n}(\omega_{n}^{(t)}), \omega_{m}^{(t)}-\omega_{n}^{(t)} \rangle \right] \\
            &\leq \frac{1-\alpha}{2\alpha\eta E} \mathbb{E} \left[ || \nabla f_{n}(\omega_{n}^{(t)})||^{2} \right]\\
            &+ \frac{(1-\alpha)\alpha\eta E}{2}\sum_{m}\pi_{nm} \mathbb{E} \left[||\omega_{m}^{(t)}-\omega_{n}^{(t)}||^{2}\right] \\
            &\leq \frac{1-\alpha}{2\alpha\eta E} \mathbb{E} \left[ || \nabla f_{n}(\omega_{n}^{(t)})||^{2} \right] + \frac{(1-\alpha)\alpha\eta EC}{2}.
        \end{aligned}
    \end{equation}

    \noindent
Using Eq.~\eqref{eq:36} and by plugging back Eq.~\eqref{eq:38} and Eq.~\eqref{eq:41}, we get: 
    \begin{align*}
            \mathbb{E} \left [ \langle \nabla f_{n}(\omega_{n}^{(t)}), \Delta\omega_{n}^{(t)}\rangle \right ] &\leq -\frac{\alpha\eta E }{2} \bigl\{ \mathbb{E}\left[ ||\nabla f_{n}(\omega_{n}^{(t)})||^{2} \right]  \\
            &- \mathbb{E}\left[ ||\nabla f_{n}(\omega_{n}^{(t)}) - g_{n}^{(t)}||^{2} \right] \bigr\} \\
            &+ \frac{1-\alpha}{2\alpha\eta E} \mathbb{E} \left[ || \nabla f_{n}(\omega_{n}^{(t)})||^{2} \right] \\
            &+ \frac{(1-\alpha)\alpha\eta EC}{2}.
    \end{align*}

    \noindent
    By re-arranging the terms with $\mathbb{E} \left[ || \nabla f_{n}(\omega_{n}^{(t)})||^{2} \right]$ and let $\gamma = \frac{\alpha\eta E}{2} - \frac{1 - \alpha}{2\alpha\eta E}$, we conclude the proof to obtain the following bound: 
    \begin{equation}\label{eq:43}
        \begin{aligned}
            \mathbb{E} \left [ \langle \nabla f_{n}(\omega_{n}^{(t)}), \Delta\omega_{n}^{(t)}\rangle \right ] &\leq -\gamma \mathbb{E}\left[ ||\nabla f_{n}(\omega_{n}^{t})||^{2}\right]  \\
            & +\frac{\alpha\eta E}{2} \mathbb{E}\left[ ||\nabla f_{n}(\omega_{n}^{t}) - g_{n}^{(t)}||^{2}\right] \\
            &+ \frac{(1-\alpha)\alpha\eta E C}{2}.
        \end{aligned}
    \end{equation}
    
\end{proof}

\begin{lemma}\label{lemma3}
    For Eq.~\eqref{eq:43} in Lemma \ref{lemma2}, and using Assumptions 1 and 5, $\mathbb{E}\left[ ||\nabla f_{n}(\omega_{n}^{t}) - g_{n}^{(t)}||^{2}\right] $ can be bounded as
        \begin{align*}
            \mathbb{E}\left[ ||\nabla f_{n}(\omega_{n}^{t}) - g_{n}^{(t)}||^{2}\right] \leq L^{2}\eta^{2}E^{2}G^{2}.
        \end{align*}

\end{lemma}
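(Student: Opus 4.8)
The plan is to exploit the definition $g_{n}^{(t)} = \frac{1}{E}\sum_{j=0}^{E-1}\nabla f_{n}(\omega_{n}^{(t,j)})$ introduced in the proof of Lemma~\ref{lemma1}, together with the fact that $\omega_{n}^{(t,0)} = \omega_{n}^{(t)}$, so that the quantity of interest is the deviation between the gradient at the round-start iterate and the running average of gradients taken along the $E$ local steps. First I would rewrite the difference as an average of gradient gaps, $\nabla f_{n}(\omega_{n}^{(t)}) - g_{n}^{(t)} = \frac{1}{E}\sum_{j=0}^{E-1}\bigl(\nabla f_{n}(\omega_{n}^{(t,0)}) - \nabla f_{n}(\omega_{n}^{(t,j)})\bigr)$, and apply Jensen's inequality (convexity of $||\cdot||^{2}$) to move the squared norm inside the average, obtaining the bound $\frac{1}{E}\sum_{j=0}^{E-1}||\nabla f_{n}(\omega_{n}^{(t,0)}) - \nabla f_{n}(\omega_{n}^{(t,j)})||^{2}$.

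Next I would invoke the $L$-Lipschitz gradient property (Assumption~1) to bound each summand by $L^{2}||\omega_{n}^{(t,0)} - \omega_{n}^{(t,j)}||^{2}$, which reduces the problem to controlling the parameter drift accumulated over the local SGD steps. Using the local update rule $\omega_{n}^{(t,j+1)} = \omega_{n}^{(t,j)} - \eta\nabla f_{n}(\omega_{n}^{(t,j)})$ telescoped from step $0$ to step $j$, we have $\omega_{n}^{(t,0)} - \omega_{n}^{(t,j)} = \eta\sum_{k=0}^{j-1}\nabla f_{n}(\omega_{n}^{(t,k)})$, so a Cauchy--Schwarz expansion over the $j$ terms gives $||\omega_{n}^{(t,0)} - \omega_{n}^{(t,j)}||^{2} \le \eta^{2} j \sum_{k=0}^{j-1}||\nabla f_{n}(\omega_{n}^{(t,k)})||^{2}$. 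Taking expectations and applying the bounded-gradient-norm assumption (Assumption~5) uniformly to the intermediate iterates yields $\mathbb{E}[||\omega_{n}^{(t,0)} - \omega_{n}^{(t,j)}||^{2}] \le \eta^{2} j^{2} G^{2} \le \eta^{2} E^{2} G^{2}$, since $j \le E-1 < E$.

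Finally I would substitute this back: every one of the $E$ terms in the Jensen average is bounded by $L^{2}\eta^{2} E^{2} G^{2}$, and averaging preserves the same bound, giving $\mathbb{E}[||\nabla f_{n}(\omega_{n}^{t}) - g_{n}^{(t)}||^{2}] \le L^{2}\eta^{2} E^{2} G^{2}$, as claimed. I expect the middle step to be the main obstacle: one must correctly account for the drift accumulated over local iterations, where the Cauchy--Schwarz step produces the factor $j$ (and hence the overall $E^{2}$ scaling), and one must take care that Assumption~5 is applied to all intermediate iterates $\omega_{n}^{(t,k)}$ rather than only to the round-start iterate $\omega_{n}^{(t)}$.
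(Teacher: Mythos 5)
Your proof follows essentially the same route as the paper's: expand $\nabla f_{n}(\omega_{n}^{(t)}) - g_{n}^{(t)}$ via the definition $g_{n}^{(t)} = \frac{1}{E}\sum_{j=0}^{E-1}\nabla f_{n}(\omega_{n}^{(t,j)})$, apply the $L$-Lipschitz property of Assumption~1, bound the local drift $\omega_{n}^{(t)} - \omega_{n}^{(t,j)}$ by telescoping the update rule in Eq.~\eqref{eq:2}, and invoke the gradient bound of Assumption~5 on the intermediate iterates. If anything, your write-up is marginally more careful on two technical points: you state the first decomposition as Jensen's inequality (the paper writes it as an equality), and your Cauchy--Schwarz drift bound needs only the expected-squared-norm form $\mathbb{E}[||\nabla f_{n}(\omega_{n}^{(t,k)})||^{2}] \leq G^{2}$, whereas the paper's triangle-inequality step $||\omega_{n}^{(t)} - \omega_{n}^{(t,j)}|| \leq \eta j G$ implicitly requires an almost-sure bound on the gradient norm.
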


\begin{proof}
    Recall that in Lemma \ref{lemma1}, we defined the function $g_{n}^{(t)} = \frac{1}{E}\sum_{j=0}^{E-1}\nabla f_{n}(\omega_{n}^{(t,j)})$. Thus, we have:
    \begin{align*}
             &\mathbb{E}\left[ ||\nabla f_{n}(\omega_{n}^{t}) - g_{n}^{(t)}||^{2}\right] \\
             &= \frac{1}{E}\sum_{j=0}^{E-1}\mathbb{E}\left[ ||\nabla f_{n}(\omega_{n}^{t}) - \nabla f_{n}(\omega_{n}^{(t,j)})||^{2}\right].        
    \end{align*}

    \noindent
    Under Assumption 1, with the fact that $||\bullet|| \geq 0$, we have: 
    \begin{equation}\label{eq:46}
        \begin{aligned}
            \mathbb{E}\left[||\nabla f_{n}(\omega_{n}^{t}) - \nabla f_{n}(\omega_{n}^{(t,j)})||^{2} \right] \leq L^{2}\mathbb{E}\left[||\omega_{n}^{(t)} - \omega_{n}^{(t,j)}||^{2}\right].
        \end{aligned}
    \end{equation}

    \noindent
    According to Eq.~\eqref{eq:2}, we have:
    \begin{align*} 
        \omega^{(t, j)}_{n} = \omega^{(t)}_{n} - \eta \sum_{k=0}^{j-1}\nabla f_{n}(\omega_{n}^{(t, k)}). 
    \end{align*}

    \noindent
    Thus, we have:
    \begin{equation}\label{eq:48}
        \begin{aligned}
            ||\omega^{(t)}_{n} - \omega^{(t, j)}_{n}|| &= ||\eta \sum_{k=0}^{j-1}\nabla f_{n}(\omega_{n}^{(t, k)})|| \\
            &= \eta ||\sum_{k=0}^{j-1}\nabla f_{n}(\omega_{n}^{(t, k)})|| \\
            &\leq \eta \sum_{k=0}^{j-1}||\nabla f_{n}(\omega_{n}^{(t, k)})|| \\
            &\leq \eta j G \leq \eta EG.
        \end{aligned}
    \end{equation}
    
    \noindent
    Finally, plugging Eq.~\eqref{eq:48} back to Eq.~\eqref{eq:46}, yields
    \begin{equation}
        \begin{aligned}
            \mathbb{E}\left[ ||\nabla f_{n}(\omega_{n}^{t}) - g_{n}^{(t)}||^{2}\right] &\leq L^{2}\mathbb{E}\left[||\omega_{n}^{(t)} - \omega_{n}^{(t,j)}||^{2}\right] \\
            &\leq L^{2}\eta^{2}E^{2}G^{2}.
        \end{aligned}
    \end{equation}

\end{proof}

\begin{lemma}\label{lemma4}
    According to Assumptions 4 and 5, we have
$   
        \mathbb{E} \left [ ||\Delta\omega_{n}^{(t)}||^{2} \right ] \leq (\alpha\eta EG + (1-\alpha)\sqrt{C})^{2}.
$
\end{lemma}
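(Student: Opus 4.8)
The plan is to start from the closed-form expression for $\Delta\omega_{n}^{(t)}$ established in Lemma 1, namely $\Delta\omega_{n}^{(t)} = -\alpha\eta E g_{n}^{(t)} + (1-\alpha)\sum_{m}\pi_{nm}(\omega_{m}^{(t)}-\omega_{n}^{(t)})$, and to bound its expected squared norm by splitting it into a ``gradient'' contribution and a ``neighbor-drift'' contribution. First I would apply the triangle inequality, together with $\sum_{m}\pi_{nm}=1$ and the convexity of the norm, to obtain the pointwise bound $||\Delta\omega_{n}^{(t)}|| \le \alpha\eta E\,||g_{n}^{(t)}|| + (1-\alpha)\sum_{m}\pi_{nm}||\omega_{m}^{(t)}-\omega_{n}^{(t)}||$. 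Writing $a = \alpha\eta E$ and $b = 1-\alpha$, squaring this inequality and taking expectations leaves three terms to control.

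For the first term I would bound $\mathbb{E}[||g_{n}^{(t)}||^{2}]$. Since $g_{n}^{(t)} = \tfrac{1}{E}\sum_{j=0}^{E-1}\nabla f_{n}(\omega_{n}^{(t,j)})$ is an average of $E$ gradients, Jensen's inequality gives $||g_{n}^{(t)}||^{2}\le \tfrac1E\sum_{j}||\nabla f_{n}(\omega_{n}^{(t,j)})||^{2}$, and Assumption 5 then yields $\mathbb{E}[||g_{n}^{(t)}||^{2}]\le G^{2}$, an estimate of the same type used in the proof of Lemma 3. For the neighbor-drift term I would again invoke Jensen's inequality, now with respect to the probability weights $\pi_{nm}$, to write $\big(\sum_{m}\pi_{nm}||\omega_{m}^{(t)}-\omega_{n}^{(t)}||\big)^{2}\le \sum_{m}\pi_{nm}||\omega_{m}^{(t)}-\omega_{n}^{(t)}||^{2}$; Assumption 4 then gives $\mathbb{E}\big[\sum_{m}\pi_{nm}||\omega_{m}^{(t)}-\omega_{n}^{(t)}||^{2}\big]\le C$, so this contribution is bounded by $b^{2}C$.

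The delicate step is the cross term. Rather than bounding it crudely, I would apply the Cauchy--Schwarz inequality in expectation, $\mathbb{E}[||g_{n}^{(t)}||\,Y]\le \sqrt{\mathbb{E}[||g_{n}^{(t)}||^{2}]}\,\sqrt{\mathbb{E}[Y^{2}]}$ with $Y = \sum_{m}\pi_{nm}||\omega_{m}^{(t)}-\omega_{n}^{(t)}||$, and feed in the two bounds $G^{2}$ and $C$ obtained above to get $\mathbb{E}[||g_{n}^{(t)}||\,Y]\le G\sqrt{C}$. Collecting the three contributions yields $\mathbb{E}[||\Delta\omega_{n}^{(t)}||^{2}] \le a^{2}G^{2} + 2ab\,G\sqrt{C} + b^{2}C$, which is exactly the perfect square $(\alpha\eta E G + (1-\alpha)\sqrt{C})^{2}$, completing the proof. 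The main obstacle is matching the cross term to the $2ab\,G\sqrt{C}$ required for the binomial to close: keeping the Cauchy--Schwarz estimate tight, rather than applying Young's inequality (which would introduce a free constant and prevent the right-hand side from collapsing into the stated square), is precisely what makes the bound sharp.
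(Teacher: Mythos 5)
Your proof is correct and follows essentially the same route as the paper's: both start from the decomposition of $\Delta\omega_{n}^{(t)}$ in Lemma 1, bound the squared gradient term by $G^{2}$ (Assumption 5), the neighbor-drift term by $C$ (Assumption 4), and the cross term by $G\sqrt{C}$ via Cauchy--Schwarz in expectation, so that the three pieces collapse into the perfect square $(\alpha\eta EG + (1-\alpha)\sqrt{C})^{2}$. The only differences are cosmetic: you apply the triangle inequality and Jensen's inequality (over the weights $\pi_{nm}$) before squaring, whereas the paper expands the squared norm directly and handles the double sum $\sum_{m}\sum_{k}\pi_{nm}\pi_{nk}\mathbb{E}[v_{m}^{T}v_{k}]$ with pairwise Cauchy--Schwarz --- and your explicit Jensen step justifying $\mathbb{E}[\|g_{n}^{(t)}\|^{2}]\le G^{2}$ is in fact slightly more careful than the paper's direct appeal to Assumption 5.
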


\begin{proof}
    Using Lemma \ref{lemma1}, we have:
    \begin{equation}\label{eq:45}
        \begin{aligned}
            &\mathbb{E} \left [ ||\Delta\omega_{n}^{(t)}||^{2} \right ] \\
            &= \mathbb{E}  \left [ ||- \alpha\eta E g_{n}^{(t)} + (1-\alpha)\sum_{m}\pi_{nm}(\omega_{m}^{(t)}-\omega_{n}^{(t)})||^{2} \right] \\
            &= \mathbb{E}  \left [ ||\alpha\eta E g_{n}^{(t)} - (1-\alpha)\sum_{m}\pi_{nm}(\omega_{m}^{(t)}-\omega_{n}^{(t)})||^{2} \right ] \\
            &= \alpha^{2}\eta^{2}E^{2}\mathbb{E}  \left [  ||g_{n}^{(t)}||^{2}\right]   \\
            &+ (1-\alpha)^{2} \mathbb{E}  \left [  ||\sum_{m}\pi_{nm}(\omega_{m}^{(t)}-\omega_{n}^{(t)})||^{2}\right] \\
            &- 2\alpha\eta E(1-\alpha)\mathbb{E} \left[ \langle g_{n}^{(t)}, \sum_{m}\pi_{nm}(\omega_{m}^{(t)}-\omega_{n}^{(t)})\rangle \right].
        \end{aligned}
    \end{equation}

    \noindent
    Using Assumption 5, we have $\mathbb{E}  \left [  ||g_{n}^{(t)}||^{2}\right] \leq G^{2}$. Therefore, the first term RHS in Eq.~\eqref{eq:45} can be bounded as 
    \begin{equation}\label{eq:61}
        \alpha^{2}\eta^{2}E^{2}\mathbb{E}  \left [  ||g_{n}^{(t)}||^{2}\right] \leq \alpha^{2}\eta^{2}E^{2}G^{2}.
    \end{equation}

    \noindent
    Next, we can also bound the second term in Eq.~\eqref{eq:45} as
    \begin{align*}
        &\mathbb{E}  \left [  ||\sum_{m}\pi_{nm}(\omega_{m}^{(t)}-\omega_{n}^{(t)})||^{2}\right]\\
        &= \mathbb{E}  \left [  ||(\sum_{m}\pi_{nm}v_{m})^{T}\sum_{k}\pi_{nk}v_{k})||\right] ,  \forall m,k \in \mathbf{M}_{n}\\
        &= \sum_{m}\sum_{k}\pi_{nm}\pi_{nk}\mathbb{E}  \left [v_{m}^{T}v_{k}\right],
    \end{align*}
 where $v_{m} = \omega_{m}^{(t)}-\omega_{n}^{(t)}$, $v_{k} = \omega_{k}^{(t)}-\omega_{n}^{(t)}$. When $m = k$, $\mathbb{E}  \left [v_{m}^{T}v_{k}\right] = \mathbb{E}  \left [v_{m}^{2}\right] \leq C$. On the other hand,  if $m \neq k$, we use Cauchy-Schwarz Inequality for expectations, $| \mathbb{E}  \left [v_{m}^{T}v_{k}\right]| \leq \sqrt{\mathbb{E}  \left [||v_{m}||^{2}\right]}\sqrt{\mathbb{E}  \left [||v_{k}||^{2}\right]} \leq C$. Thus, we obtain:
    \begin{equation}\label{eq:63}
        (1-\alpha)^{2} \mathbb{E}  \left [  ||\sum_{m}\pi_{nm}(\omega_{m}^{(t)}-\omega_{n}^{(t)})||^{2}\right] \leq (1-\alpha)^{2} C.
    \end{equation}

    \noindent
    For the last term in Eq.~\eqref{eq:45}, we first notice it is true that  $A = x + y - z \leq x + y + |z| \leq a + b + c, \forall x \leq a, y \leq b, |z| \leq c$. Therefore, under Assumptions 4 and 5, we show that $\left|\mathbb{E} \left[ \langle g_{n}^{(t)}, \sum_{m}\pi_{nm}(\omega_{m}^{(t)}-\omega_{n}^{(t)})\rangle \right]\right|$ is bounded as follows:
    \begin{equation}\label{eq:49}
        \begin{aligned}
            &\left|\mathbb{E} \left[ \langle \nabla g_{n}^{(t)}, \sum_{m}\pi_{nm}(\omega_{m}^{(t)}-\omega_{n}^{(t)})\rangle \right]\right| \\
            &\leq \mathbb{E} \left[  || g_{n}^{(t)}|| \cdot ||\sum_{m}\pi_{nm}(\omega_{m}^{(t)}-\omega_{n}^{(t)})|| \right] \\
            &\leq \sqrt{\mathbb{E} \left[  || g_{n}^{(t)}||^{2}  \right]} \cdot \sqrt{\mathbb{E} \left[  || \sum_{m}\pi_{nm}(\omega_{m}^{(t)}-\omega_{n}^{(t)})||^{2} \right]} \\
            &\leq G\sqrt{C}.
        \end{aligned}
    \end{equation}

    \noindent
    Finally, we plug Eq.~\eqref{eq:61}, Eq.~\eqref{eq:63}, and Eq.~\eqref{eq:49} back to Eq.~\eqref{eq:45} to get: 
    \begin{align*}\label{eq:50}
            &\mathbb{E} \left [ ||\Delta\omega_{n}^{(t)}||^{2} \right ] \\
            &= \alpha^{2}\eta^{2}E^{2}\mathbb{E}  \left [  ||g_{n}^{(t)}||^{2}\right]   \\
            &+ (1-\alpha)^{2} \mathbb{E}  \left [  ||\sum_{m}\pi_{nm}(\omega_{m}^{(t)}-\omega_{n}^{(t)})||^{2}\right] \\
            &- 2\alpha\eta E(1-\alpha)\mathbb{E} \left[ \langle g_{n}^{(t)}, \sum_{m}\pi_{nm}(\omega_{m}^{(t)}-\omega_{n}^{(t)})\rangle \right] \\
            &\leq \alpha^{2}\eta^{2}E^{2}\mathbb{E}  \left [  ||g_{n}^{(t)}||^{2}\right]   \\
            &+ (1-\alpha)^{2} \mathbb{E}  \left [  ||\sum_{m}\pi_{nm}(\omega_{m}^{(t)}-\omega_{n}^{(t)})||^{2}\right] \\
            &+ 2\alpha\eta E(1-\alpha) \left | \mathbb{E}  \left[ \langle g_{n}^{(t)}, \sum_{m}\pi_{nm}(\omega_{m}^{(t)}-\omega_{n}^{(t)})\rangle \right] \right | \\
            &\leq \alpha^{2}\eta^{2}E^{2}G^{2} + (1-\alpha)^{2} C + 2\alpha\eta E (1-\alpha)G\sqrt{C} \\
            &= [\alpha\eta EG +(1-\alpha)\sqrt{C}]^{2}.
    \end{align*}
\end{proof}

\section{Proof of Theorem 2}
\label{sec:appendix-thm-2}
\begin{theorem}
With the local model update rule defined in Eq.~\eqref{eq:12} for the selected neighbors $m \in [\boldsymbol{M}_{n}]$, the target client's model update as outlined in Eq.~\eqref{eq:2}, and the aggregation of the target client's model according to Eq.~\eqref{eq:5}, and considering Assumptions 1, 2, 4 and 5,  we can show that the pFedWN algorithm achieves  $
  \frac{1}{T} \sum_{t=1}^{T} \mathbb{E}[||\nabla f_{n}(\omega_{n}^{(t)})||^{2}] \leq \mathcal{O}(\frac{1}{T})+C',
$  which converges at a rate of $\mathcal{O}(\frac{1}{T})$ for non-convex loss function.
\end{theorem}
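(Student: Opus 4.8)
The plan is to run the standard descent-lemma argument for smooth non-convex optimization, treating Lemmas~\ref{lemma1}--\ref{lemma4} as the building blocks that have already done the hard per-step bookkeeping. First I would invoke Assumption~1 ($L$-Lipschitz gradients), which gives the quadratic upper bound
\begin{equation*}
f_{n}(\omega_{n}^{(t+1)}) \leq f_{n}(\omega_{n}^{(t)}) + \langle \nabla f_{n}(\omega_{n}^{(t)}), \Delta\omega_{n}^{(t)}\rangle + \frac{L}{2}\|\Delta\omega_{n}^{(t)}\|^{2}.
\end{equation*}
Taking expectations on both sides sets up a recursion in the objective value $\mathbb{E}[f_{n}(\omega_{n}^{(t)})]$, where the two right-hand terms are precisely the quantities already bounded in the lemmas.

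Next I would substitute the three bounds in sequence. The inner-product term is controlled by Lemma~\ref{lemma2}, which contributes a strictly negative multiple $-\gamma\,\mathbb{E}[\|\nabla f_{n}(\omega_{n}^{(t)})\|^{2}]$ of the target quantity, plus a gradient-mismatch term $\frac{\alpha\eta E}{2}\mathbb{E}[\|\nabla f_{n}(\omega_{n}^{(t)}) - g_{n}^{(t)}\|^{2}]$ and a constant proportional to $C$. Lemma~\ref{lemma3} then bounds the mismatch term by $L^{2}\eta^{2}E^{2}G^{2}$, and Lemma~\ref{lemma4} bounds $\frac{L}{2}\mathbb{E}[\|\Delta\omega_{n}^{(t)}\|^{2}]$ by $\frac{L}{2}(\alpha\eta EG + (1-\alpha)\sqrt{C})^{2}$. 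Collecting every constant contribution into a single quantity $D$, the per-step inequality reduces to
\begin{equation*}
\gamma\,\mathbb{E}[\|\nabla f_{n}(\omega_{n}^{(t)})\|^{2}] \leq \mathbb{E}[f_{n}(\omega_{n}^{(t)})] - \mathbb{E}[f_{n}(\omega_{n}^{(t+1)})] + D.
\end{equation*}

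Then I would telescope by summing over $t = 1, \dots, T$, so that the successive objective-value terms collapse to $\mathbb{E}[f_{n}(\omega_{n}^{(1)})] - \mathbb{E}[f_{n}(\omega_{n}^{(T+1)})]$, which I upper-bound using the fact that $f_{n}$ is lower-bounded by some $f_{n}^{*}$. Dividing through by $\gamma T$ yields
\begin{equation*}
\frac{1}{T}\sum_{t=1}^{T}\mathbb{E}[\|\nabla f_{n}(\omega_{n}^{(t)})\|^{2}] \leq \frac{f_{n}(\omega_{n}^{(1)}) - f_{n}^{*}}{\gamma T} + \frac{D}{\gamma},
\end{equation*}
so the first term is $\mathcal{O}(1/T)$ and the residual is the constant $C' = D/\gamma$, which is exactly the claimed form.

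The main obstacle is ensuring that the effective descent coefficient $\gamma = \frac{\alpha\eta E}{2} - \frac{1-\alpha}{2\alpha\eta E}$ from Lemma~\ref{lemma2} is strictly positive; otherwise dividing by $\gamma$ reverses the inequality and the argument collapses. This forces a hyperparameter condition, namely $\alpha^{2}\eta^{2}E^{2} > 1-\alpha$, which must be imposed as a constraint relating the step size $\eta$, the mixing weight $\alpha$, and the number of local epochs $E$. A secondary subtlety is the interpretation of the non-vanishing floor $C'$: it inherits the neighbor-disagreement scale $C$ from Assumption~4 together with the gradient-variance bound $G^{2}$, reflecting that in the heterogeneous non-convex regime one can only guarantee convergence to a neighborhood of a stationary point. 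I would emphasize that this floor is structural---the aggregation with neighbors' models introduces a bias the analysis cannot eliminate with a fixed step size---and that it contracts as the selected neighbors' models become more aligned with the target client's model, i.e.\ as $C$ shrinks.
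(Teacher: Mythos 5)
Your proposal follows exactly the paper's own route: the $L$-smoothness descent inequality, substitution of Lemmas~2--4 to bound the inner-product and $\|\Delta\omega_n^{(t)}\|^2$ terms, telescoping over $t=1,\dots,T$, lower-bounding $f_n(\omega_n^{(T+1)})$ by $f_n^*$, and dividing by $\gamma T$. If anything, your write-up is slightly tighter than the paper's: you make explicit the hyperparameter condition $\gamma > 0$ (i.e. $\alpha^2\eta^2E^2 > 1-\alpha$) that the division by $\gamma$ silently requires, and your constant $C' = D/\gamma$ is the correct $T$-free form, whereas the paper's displayed definition of $C'$ carries spurious factors of $T$ that should have cancelled after dividing through by $T$.
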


\begin{proof}
    \noindent
    Using the $L$-smoothness of $f_{n}$, we have: 
    \begin{align*}
    \begin{aligned}
         \mathbb{E} \left [ f_{n}(\omega_{n}^{(t+1)})\right ]
         &\leq \mathbb{E} \left [ f_{n}(\omega_{n}^{(t)}) \right ] + \mathbb{E} \left [ \langle \nabla f_{n}(\omega_{n}^{(t)}, \Delta\omega_{n}^{(t)})\rangle \right ] \\ 
         &\quad + \frac{L}{2} \mathbb{E} \left [ ||\Delta\omega_{n}^{(t)}||^{2} \right ].
    \end{aligned}
    \end{align*}
According to Lemma \ref{lemma2} and \ref{lemma4}, we have:
    \begin{equation}\label{eq:58}
    \begin{aligned}
         &\mathbb{E} \left [ f_{n}(\omega_{n}^{(t+1)})\right ] \\
         &\leq \mathbb{E} \left [ f_{n}(\omega_{n}^{(t)}) \right ] - \gamma \mathbb{E}\left[ ||\nabla f_{n}(\omega_{n}^{t})||^{2}\right]  \\
         &+ \frac{\alpha\eta E}{2} \mathbb{E}\left[ ||\nabla f_{n}(\omega_{n}^{t}) - g_{n}^{(t)}||^{2}\right] \\
         &+ \frac{(1-\alpha)\alpha\eta E C}{2} + \frac{L}{2}[\alpha\eta EG +(1-\alpha)\sqrt{C}]^{2}.
    \end{aligned}
    \end{equation}
    Recalling that $\mathbb{E}\left[ ||\nabla f_{n}(\omega_{n}^{t}) - g_{n}^{(t)}||^{2}\right]$ is bounded in Lemma \ref{lemma3}, we plug it to Eq.~\eqref{eq:58} to obtain: 
    \begin{equation}\label{eq:59}
        \begin{aligned}
            &\mathbb{E} \left [ f_{n}(\omega_{n}^{(t+1)})\right ] \\
            &\leq \mathbb{E} \left [ f_{n}(\omega_{n}^{(t)}) \right ] - \gamma \mathbb{E}\left[ ||\nabla f_{n}(\omega_{n}^{t})||^{2}\right]  + \frac{\alpha\eta E}{2} L^{2}\eta^{2}E^{2}G^{2} \\
            &+ \frac{(1-\alpha)\alpha\eta E C}{2} + \frac{L}{2}[\alpha\eta EG +(1-\alpha)\sqrt{C}]^{2}. 
        \end{aligned}
    \end{equation}

    \noindent
    Next, we sum both sides of Eq.~\eqref{eq:59} over $t = 1$ to $T$ to get: 
    \begin{align*}
        \begin{aligned}
            &\sum_{t=1}^{T}\mathbb{E} \left [ f_{n}(\omega_{n}^{(t+1)})\right ] \\
            &\leq \sum_{t=1}^{T} \mathbb{E} \left [ f_{n}(\omega_{n}^{(t)}) \right ] - \sum_{t=1}^{T} \gamma \mathbb{E}\left[ ||\nabla f_{n}(\omega_{n}^{t})||^{2}\right]\\
            &+ \frac{\alpha\eta E}{2} T L^{2}\eta^{2}E^{2}G^{2} + \frac{(1-\alpha)\alpha\eta E CT}{2} \\
            &+ \frac{LT}{2}[\alpha\eta EG +(1-\alpha)\sqrt{C}]^{2}.
        \end{aligned}
    \end{align*}

    \noindent
    Moving the first term at RHS to the left, we have: 
    \begin{align*}
        \begin{aligned}
            &\sum_{t=1}^{T}\mathbb{E} \left [ f_{n}(\omega_{n}^{(t+1)})\right ] -\sum_{t=1}^{T} \mathbb{E} \left [ f_{n}(\omega_{n}^{(t)}) \right ]\\
            &\leq  - \sum_{t=1}^{T} \gamma \mathbb{E}\left[ ||\nabla f_{n}(\omega_{n}^{t})||^{2}\right] + \frac{\alpha\eta E}{2} T L^{2}\eta^{2}E^{2}G^{2}\\
            & + \frac{(1-\alpha)\alpha\eta E C T}{2} + \frac{LT}{2}[\alpha\eta EG +(1-\alpha)\sqrt{C}]^{2}, 
        \end{aligned}
    \end{align*}
    which is equal to
    \begin{equation}\label{eq:63}
        \begin{aligned}
            &\mathbb{E} \left [ f_{n}(\omega_{n}^{(T+1)}) - f_{n}(\omega_{n}^{(1)}) \right ]\\
            &\leq  - \sum_{t=1}^{T} \gamma \mathbb{E}\left[ ||\nabla f_{n}(\omega_{n}^{t})||^{2}\right] + \frac{\alpha\eta E}{2} T L^{2}\eta^{2}E^{2}G^{2}\\
            & + \frac{(1-\alpha)\alpha\eta E C T}{2} + \frac{LT}{2}[\alpha\eta EG +(1-\alpha)\sqrt{C}]^{2}.
        \end{aligned}
    \end{equation}

    \noindent
    Assuming $f_{n}(\omega_{n}^{(T+1)}) \geq f_{n}^{*}$ where $f_{n}^{*}$ is the minimum of $f_{n}$, we have: 
    \begin{align*}
        \begin{aligned}
            &f_{n}^{*} - f_{n}(\omega_{n}^{(1)})\\
            &\leq  - \sum_{t=1}^{T} \gamma \mathbb{E}\left[ ||\nabla f_{n}(\omega_{n}^{t})||^{2}\right] + \frac{\alpha\eta E}{2} T L^{2}\eta^{2}E^{2}G^{2}\\
            & + \frac{(1-\alpha)\alpha\eta E C T}{2} + \frac{LT}{2}[\alpha\eta EG +(1-\alpha)\sqrt{C}]^{2}.
        \end{aligned}
    \end{align*}

    \noindent
    By assuming
    \begin{align*}
    \begin{aligned}
            C' &= \frac{1}{\gamma} \bigl [ \frac{\alpha\eta E}{2} T L^{2}\eta^{2}E^{2}G^{2} + \frac{(1-\alpha)\alpha\eta E C T}{2} \\
            &+ \frac{LT}{2}[\alpha\eta EG +(1-\alpha)\sqrt{C}]^{2} \bigr ],
    \end{aligned}
    \end{align*}
    we re-arrange Eq.~\eqref{eq:63} to conclude that
    \begin{align*}
        \begin{aligned}
            \frac{1}{T}\sum_{t=1}^{T} \mathbb{E}\left[ ||\nabla f_{n}(\omega_{n}^{t})||^{2}\right]   &\leq \frac{f_{n}^{*} - f_{n}(\omega_{n}^{(1)})}{\gamma T}  + C' \\
            &= \mathcal{O}(\frac{1}{T})+C',  
        \end{aligned}
    \end{align*}    
    which completes the proof. 
\end{proof}

\bibliographystyle{IEEEtran}
\bibliography{IEEEfull,reference}
\end{document}